\documentclass[12pt]{article}
\usepackage{amsmath}
\usepackage{amsfonts}
\usepackage{amsthm}
\usepackage{amssymb}
\usepackage[margin=1in]{geometry}
\usepackage{bbm}
\usepackage{url}
\usepackage{graphicx}
\usepackage{float}
\usepackage{algorithm}
\usepackage{authblk}
\usepackage{algpseudocode}
\usepackage{mathtools}
\usepackage{bm}
\usepackage{dsfont}
\usepackage{xspace}
\usepackage{thmtools}
\usepackage{thm-restate}

\usepackage{color-edits} 
\addauthor[Vikram]{vk}{blue}
\addauthor[Argyris]{ao}{limeGreen}
\addauthor[Manolis]{mz}{teal}

\newcommand{\notshow}[1]{}

% Macros for calligraphic letters
\newcommand{\calX}{\mathcal{X}}

\newcommand{\calL}{\mathcal{L}}
\newcommand{\calU}{\mathcal{U}}

% Other commonly used macros

\newcommand{\R}{\mathbb{R}}
\newcommand{\E}{\mathbb{E}}

% Macros for statistical functions

\newcommand{\Var}{\operatorname{Var}}
\newcommand{\Cov}{\operatorname{Cov}}

\DeclareMathOperator*{\argmin}{arg\,min}

\newcommand{\partest}{\textmd{PART}\xspace}
\newcommand{\parq}{\textmd{PAQ}\xspace}
\newcommand{\ppi}{\textmd{PPI}\xspace}
\newcommand{\ppiplus}{\textmd{PPI\raisebox{0.3ex}{\scriptsize ++}}\xspace}

\usepackage{hyperref}
\hypersetup{
  colorlinks = true, 
  urlcolor = {blueGrotto},
  linkcolor = {royalBlue},
  citecolor = {navyBlue}
}
\usepackage[capitalise,noabbrev,nameinlink]{cleveref}
\usepackage[table,svgnames]{xcolor} % provides the \rowcolors command 
\definecolor{niceRed}{RGB}{190,38,38}
\definecolor{niceYellow}{HTML}{f5b400}
\definecolor{blueGrotto}{HTML}{059DC0}
\definecolor{royalBlue}{HTML}{057DCD}
\definecolor{navyBlue}{HTML}{0B579C}
\definecolor{limeGreen}{HTML}{81B622}
\definecolor{nicePurple}{HTML}{9c27b0}
\definecolor{lightRoyalBlue}{HTML}{def2ff}  
\definecolor{gold}{HTML}{ffa300}
\usepackage{caption} % for improved spacing around the caption

\usepackage{array} % enables >{...} in the coumn specifier section, used in table 2 & 3
\usepackage{booktabs} % for improved spacing around horizontal lines, used in example 3, incompatible with vertical lines, be careful if you want to to combine it with color
\usepackage[numbers,sort&compress]{natbib}

\usepackage[column=0]{cellspace} % for adding a small amount of space above and below each cell, only used in table 2
\usepackage{makecell}
\setlength{\cellspacetoplimit}{2pt}
\setlength{\cellspacebottomlimit}{\cellspacetoplimit}

% Lemma etc,
\newtheorem{theorem}{Theorem}
\newtheorem{lemma}{Lemma}
\newtheorem{claim}[lemma]{Claim}
\newtheorem{corollary}[lemma]{Corollary}

\newtheorem{remark}{Remark}
\newtheorem{estimator}{Estimator}
\newtheorem{definition}{Definition}
\newtheorem{fact}{Fact}
\newtheorem{example}{Example}
\newtheorem{assumption}{Assumption}

% Tell cleveref the names for each environment
\crefname{theorem}{theorem}{theorems}
\Crefname{theorem}{Theorem}{Theorems}

\crefname{claim}{claim}{claims}
\Crefname{claim}{Claim}{Claims}

\crefname{observation}{observation}{observations}
\Crefname{observation}{Observation}{Observations}

\crefname{lemma}{lemma}{lemmas}
\Crefname{lemma}{Lemma}{Lemmas}

\crefname{corollary}{corollary}{corollaries}
\Crefname{corollary}{Corollary}{Corollaries}

\crefname{remark}{remark}{remarks}
\Crefname{remark}{Remark}{Remarks}

\crefname{estimator}{estimator}{estimators}
\Crefname{estimator}{Estimator}{Estimators}

\crefname{proposition}{proposition}{propositions}
\Crefname{proposition}{Proposition}{Propositions}

\crefname{example}{example}{examples}
\Crefname{example}{Example}{Examples}

\crefname{assumption}{assumption}{assumptions}
\Crefname{assumption}{Assumption}{Assumptions}

\crefname{fact}{fact}{facts}
\Crefname{fact}{Fact}{Facts}

\crefname{definition}{definition}{definitions}
\Crefname{fact}{Definition}{Definitions}
\title{Prediction-Augmented Trees for \\ Reliable Statistical Inference}

\author[1]{Vikram Kher\thanks{\texttt{vikram[dot]kher[at]yale.edu}}}
\author[ ]{Argyris Oikonomou\thanks{\texttt{ar[dot]economou[at]outlook.com}}}
\author[1]{Manolis Zampetakis \thanks{\texttt{manolis[dot]zampetakis[at]yale.edu}}}

\affil[1]{Department of Computer Science, Yale University}
\date{\today}

\begin{document}

\maketitle

\begin{abstract}
    % Second Draft
The remarkable success of machine learning (ML) in predictive tasks has led scientists to incorporate ML predictions as a core component of the scientific discovery pipeline. This was exemplified by the landmark achievement of AlphaFold (Jumper et al.\ (2021)). In this paper, we study how ML predictions can be safely used in statistical analysis of data towards scientific discovery. In particular, we follow the framework introduced by Angelopoulos et al.\ (2023). In this framework, we assume access to a small set of $n$ gold-standard labeled samples, a much larger set of $N$ unlabeled samples, and a ML model that can be used to impute the labels of the unlabeled data points. We introduce two new learning-augmented estimators: (1) \emph{Prediction-Augmented Residual Tree (PART)}, and (2) \emph{Prediction-Augmented Quadrature (PAQ)}. Both estimators have significant advantages over existing estimators like \ppi and \ppiplus introduced by Angelopoulos et al.\ (2023) and Angelopoulos et al.\ (2024), respectively.

\partest is a decision-tree based estimator built using a greedy criterion. We first characterize \partest's asymptotic distribution and demonstrate how to construct valid confidence intervals. Then we show that \partest outperforms existing methods in real-world datasets from ecology, astronomy, and census reports, among other domains. This leads to estimators with higher confidence, which is the result of using both the gold-standard samples and the machine learning predictions.

Finally, we provide a formal proof of the advantage of \partest by exploring \parq, an estimation that arises when considering the limit of \partest when the depth its tree grows to  infinity. Under appropriate assumptions in the input data we show that the variance of \parq shrinks at rate of $O(N^{-1} + n^{-4})$, improving significantly on the $O(N^{-1}+n^{-1})$ rate of existing methods.
\end{abstract}
\section{Introduction}
Following the breakthrough development of machine learning (ML) methods over the last decade, virtually every scientific discipline has embraced ML as a method to accelerate scientific discovery by allowing faster data analysis, hypothesis generation, and experiment design \citep{Wang2023}. A plethora of recent work demonstrates ML-enabled advances across the biological sciences (e.g, \citep{Stokes2020, Dauparas2022,Lin2023, Nguyen2024,Shen2024}), the physical sciences (e.g, \citep{Merchant2023,Shallue2018,Carleo2017,Lam2023}), and the social sciences (e.g,  \citep{Kleinberg2018,Chernozhukov2018,Piech2015,Jean2016}). A landmark recognition of this trend came with the 2024 Nobel Prize in Chemistry, which was awarded in part for the creation of Alphafold, a machine learning model that predicts the three-dimensional structure of a protein based on its amino acid sequence \citep{Jumper2021, Gibney2024}. 
%As ML predictions become more common in scientific workflows, researchers need clear guidance on when noisy predictions can be safely incorporated without distorting inference.

One of the most profound uses of ML methods in scientific discovery can be abstractly described as follows: suppose that a scientist would like to estimate the mean value of some protein structure. At their disposal, they have access to a small set of gold-standard labeled samples $(X_i,Y_i)_{i=1}^n$ drawn from a joint distribution $P_{X,Y}$. Here, $X_i$ represents some easily available features of the protein and $Y_i$ denotes the structural property of interest, which is expensive to measure experimentally. Since features are easy to obtain, the researcher also has access to a much larger set of unlabeled samples $(\widetilde{X}_i)_{i=1}^N$ drawn from the marginal distribution $P_X$, along with a black box ML model $f(X)$, e.g., AlphaFold, that predicts labels based on input features. The scientist would like to leverage the unlabeled samples and the ML model $f$ to construct a valid confidence interval (CI) for the mean protein structure that is tighter than if they only used the labeled data. 

A proposed solution to this setup was put forth by \citet{ppi}, who present a method called \textit{Prediction Powered Inference} (\ppi). The \ppi estimator for the mean is defined by
\[
\widehat{\mu}_{\ppi} = 
\underbrace{\E_N[f(\widetilde{X})]}_{\text{estimate of mean}} + 
\underbrace{\E_n[Y - f(X)],}_{\text{estimate of bias / residual}}
\]
where \(\E_n\) and \(\E_N\) denote the empirical expectations over the labeled and unlabeled samples, respectively. \citet{ppi} showed that the \ppi estimator is unbiased and, when the model \(f\) is accurate, achieves lower variance than the traditional sample mean estimator of $\widehat{\mu}_E = \E_n[Y]$. To be more precise, since $N \gg n$, the variance of the \ppi estimator, and consequently the width of the confidence interval, is primarily determined by
\[\Var[\widehat{\mu}_{PPI}] \approx \frac{\Var[Y-f(X)]}{n} \quad \text{compared to} \quad \Var[\widehat{\mu}_{E}] = \frac{\Var[Y]}{n}.\] Intuitively, if the model $f(x)$ is a good predictor of $\E[Y|X]$, then we expect that $\Var[Y-f(X)]$ will be smaller than $\Var[Y]$. Put in another way, we expect \ppi to perform well when $f$ explains some of the variance of $Y$.

While \ppi has demonstrated the potential of integrating ML predictions into statistical inference, significant gaps remain in understanding how we can best leverage unlabeled samples and ML models for statistical inference. Specifically, it is natural to ask the following question:
\begin{quote} 
  \emph{\textbf{Question:} Does there exist learning-augmented estimators that offers tighter confidence intervals compared to existing approaches?}
\end{quote}

\paragraph{Our Contributions.} We make progress towards answering this question by developing two novel estimation methods: the \textit{Prediction-Augmented Residual Tree (\partest)} estimator and the \textit{Prediction-Augmented Quadrature (\parq)} estimator. Compared to \ppi and its refinement \ppiplus, \partest produces consistently tighter confidence intervals; whereas, \parq achieves asymptotically faster shrinkage of interval width compared \ppi/\ppiplus. We state our contributions precisely below.

\begin{enumerate}
  \item \textbf{Prediction-Augmented Residual Tree (\cref{sec:part}).} Our first contribution is defining the \partest estimator. At a high-level, the \partest estimator partitions the feature space into homogeneous regions by recursively building a binary tree estimator, inspired by \cite{Breiman2017} but adapted to our setting. Each split in the tree is greedily chosen to minimize a variance-based objective function resembling the CART criterion from \citet{Breiman1996}. After the estimation tree is grown, \partest computes a refined debiasing term by carefully combining estimates of mean residual in each leaf. Importantly, if we fix the depth of the \partest tree to zero, our estimator coincides with \ppi; thus, by appropriately tuning \partest's depth parameter, we are guaranteed to always perform at least as well as \ppi. Once we define our PART estimator we proceed with the following steps.
  \begin{enumerate}  
    \item \textbf{Inference with PART (\cref{sec:semi-supervised_tree,sec:semi-supervised_tree_linear_regression}).} One of the main challenges of using more complicated techniques, such as tree estimators, in simple statistical tasks is that we need to find a mathematically rigorous way to compute confidence intervals from these estimators. In our \Cref{thm:tree-ci-coverage,thm:asymptotic distribution for regression} we characterize the asymptotic distribution of \partest and show how to construct valid confidence intervals for both the mean and linear regression coefficients.
    
    \item \textbf{Empirical Performance of PART (\cref{sec:experiments}).} Based on the methodology that we have developed in part (a), we evaluate the confidence intervals produced by \partest on several real-world datasets (including those used in \citep{ppi,ppi++}), demonstrating markedly smaller confidence intervals compared to \ppi and \ppiplus while maintaining nominal coverage. 
  \end{enumerate}

  \item \textbf{PART with Deterministic Response Variable (\cref{sec:NN-1})}. Our next contribution is to introduce the \parq estimator, which is motivated by analyzing the \partest estimator in the extreme scenario where $Y$ is a deterministic function of $X$, and each leaf contains only two points.
  %limiting structure of the \partest estimator as the tree depth increases. At a high-level, as tree depth grows, each leaf eventually contains only a few labeled points.
  We show that in this scenario,  
  %that $Y$ is a deterministic function of $X$, 
  the estimation of the residual can be expressed as an integral over the feature space. This observation enables us to approximate this term using numerical quadrature techniques. Under smoothness assumptions on the residual function $r(x) = y(x)-f(x)$, we prove that \parq's variance shrinks at a rate of $O(N^{-1}+n^{-4})$, improving over the $O(N^{-1}+n^{-1})$ rate of \ppi/\ppiplus.
\end{enumerate}

\paragraph{Road Map.} In \cref{sec:prelim}, we define our model and the basic statistical framework necessary for our analysis. In \cref{sec:motex}, we provide a warm-up example and estimator that demonstrates how clustering can produce a debiasing term with lower variance compared to \ppi or \ppiplus. Then we proceed in \cref{sec:part,sec:experiments,sec:NN-1} with the main results of the paper as we explained above.

\subsection{Related Works}

\paragraph{ML Predictions for Mean Estimation.} We briefly survey the relevant landscape of using ML predictions for statistical inference. \citet{ppi} formalized the model for prediction powered statistical inference and proposed the \ppi estimator for mean estimation, linear regression coefficient estimation, odds-ratio estimation, among other targets. In follow-up work, \citet{ppi++} introduced \ppiplus, which uses a power-tuning parameter to adaptively down-weight the influence of the predictions based on their accuracy, yielding an estimator that always outperforms the empirical mean. \citet{Zhu2023} arrived at an equivalent estimator to \citep{ppi} from the self-training perspective within semi-supervised learning. Our primary benchmarks are \ppi\ and \ppiplus; like these methods, we construct a correction term to de-bias ML predictions, but our approach provides a more fine-grained adjustment.

Several papers have since followed which improve the practical use of \ppi-style estimators. While earlier work assumes access to a pre-trained ML model, \citet{Zrnic2024} uses cross-validation techniques to train the ML model from scratch utilizing the small labeled set of data. \citet{Zrnic2025} integrates the bootstrap method to prediction powered inference to facilitate applications where the central limit theorem justifications may be tenuous. While we assume access to a pre-trained model in this work, we remark that both cross-validation and bootstrap methods can be applied to our estimator using a formula similar to the previous papers.

\paragraph{Learning-Augmented Algorithms.}
Learning-augmented (LA) algorithms seek to enhance classical algorithms by incorporating the predictions of ML models trained on historical data. The framework has been successfully applied to many settings such as facility location \citep{Agrawal2022,Balkanski2025}, mechanism design \citep{Berger2024,Gkatzelis2025}, and scheduling problems \citep{Bampis2022,Dinitz2022}, among many others\footnote{See \citep{ALPSsite2025} for a up-to-date list of learning-augmented papers.}. To adapt the original framework of \citet{Lykouris2021} to our setting, LA algorithms for mean estimation should output a confidence interval that 1) is tighter than the empirical mean when the predictions are accurate (\textit{consistency}) and 2) retains the correct coverage probability when the predictions are arbitrarily poor (\textit{robustness}). Similarly to \citep{ppi,ppi++}, we characterize the asymptotic distribution of the \partest estimator, allowing us to build confidence intervals that satisfy both consistency and robustness.

\paragraph{Decision Tree for Mean Estimation.} Decision trees have a long history, beginning with their use in exploratory data analysis by \citet{Morgan1963}. \citet{Breiman2017} formalized modern regression trees for estimating conditional means with the CART algorithm. Subsequent work has extended and strengthened CART, e.g., unbiased-splitting methods \citep{Loh2002} and popular ensemble approaches including ``bagging'' \citep{Breiman1996} and random forests \citep{Breiman2001}. We contribute to this line of literature by adapting the CART node splitting criteria of \citep{Breiman2017} to build the \partest decision tree for mean residual estimation.

\paragraph{Monte Carlo Methods for Numerical Integration.} A complementary line of work studies replacing empirical averages with carefully weighted sums to accelerate the convergence of integral estimates. It is well-known that integral of function on the unit interval can be approximated via an empirical average at a convergence rate of $O(n^{-1/2})$. \citet{Yakowitz1978} was first to show that, for twice-differentiable functions on the unit interval, the canonical $O(n^{-1})$ rate can be improved to $O(n^{-4})$ via quadrature methods. Many results (e.g., \citep{Haber1967,Philippe1997}) have since followed that observe convergence improvements through weighted sums. We adapt these quadrature methods to estimate the de-biasing term in our semi-supervised setting where the data is not uniformly distributed on the unit interval.

\section{Preliminaries}\label{sec:prelim}

In this section, we introduce the statistical framework that we use to present our result. We review the properties of empirical estimators, convergence concepts, and the central limit theorem. We also outline our modeling assumptions and define the core estimation problems, i.e., mean estimation and linear regression, that is the focus of our study.

\paragraph{Empirical Expectation and Variance.}  
Let \(\{Z_i\}_{i=1}^n\) be a sequence of independently and identically distributed random variables (i.i.d.) with mean \(\mu = \E[Z]\) and variance \(\sigma^2 = \Var[Z] = \E[(Z - \mu)^2]\). The \emph{empirical mean} is defined as \(\widehat{\mu}_n = \frac{1}{n}\sum_{i=1}^n Z_i,\) and the \emph{empirical variance} is defined as \(\widehat{\sigma}^2_n = \frac{1}{n-1}\sum_{i=1}^n (Z_i - \widehat{\mu}_n)^2,\). It is easy to see that $\E[\widehat{\mu}_n] = \mu$ and $\E[\widehat{\sigma}^2_n] = \sigma^2$, i.e., $\widehat{\mu}_n$ and $\widehat{\sigma}^2_n$ are \textit{unbiased} estimators of the mean and variance respectively.

\paragraph{Convergence in Distribution.}  
A sequence of random variables \(\{Z_n\}_{n=1}^\infty\) is said to converge in distribution to a random variable \(Y\) (denoted by \(Z_n \xrightarrow{d} Y\)) if, for every real number \(x\) at which the \textit{cumulative distribution function} (CDF) \(F_Y(x)\) is continuous, the CDF \(F_{Z_n}(x)\) satisfies \(\lim_{n\to\infty} F_{Z_n}(x) = F_Y(x).\)

\paragraph{Central Limit Theorem (CLT) and Confidence Intervals.}  
The Central Limit Theorem states that for i.i.d. random variables \(\{Z_i\}_{i=1}^n\) with finite variance \(\sigma^2\), the normalized empirical mean converges in distribution to a normal distribution, meaning that
\[
\sqrt{n}\left(\widehat{\mu}_n - \mu\right) \xrightarrow{d} \mathcal{N}(0, \sigma^2).
\]
\subsection{Model} \label{sec:model}

In this work, we consider a dataset that includes two sets of data observations: \textit{labeled} and \textit{unlabeled} observations. 
\begin{itemize}
  \item The \emph{labeled dataset} \(\mathcal{L}\) consists of \(n\) independent samples \(\{(x_i, y_i)\}_{i=1}^n\) drawn from the joint distribution \(P_{X,Y}\). Here, \(x_i \in \mathcal{X} \subseteq \mathbb{R}^d\) is a \(d\)-dimensional feature vector, and \(y_i \in \mathbb{R}\) is the corresponding outcome. 
  \item The \emph{unlabeled dataset} \(\mathcal{U}\) contains \(N\) feature vectors \(\{\widetilde{x}_j\}_{j=1}^N\) drawn from $P_X$ which is the marginal distribution on $X$ of \(P_{X,Y}\). We assume that \(N \gg n\), reflecting the practical scenario where unlabeled feature vectors are considerably easier to obtain than labels.
\end{itemize}

\noindent We further assume access to a machine learning predictor \(f:\mathcal{X} \rightarrow \mathbb{R}\) that produces predicted outcomes for given feature vectors. Finally, for any $x\in \mathbb{R}^d$, we denote the $k$-th coordinate  by $x^{(k)}$ for $k=1,\ldots,d$. %\mzcomment{Weird notation, do we use it?} \vkcomment{It is used in the 4.1.}

\paragraph{Mean Estimation and Linear Regression.}  
In this work, we address two fundamental estimation problems: mean estimation and linear regression. For mean estimation, our objective is to accurately recover the expected value of the response, \(\mu_Y = \E[Y]\), from the observed data. In the linear regression setting, we seek a linear model that minimizes the expected squared prediction error. Formally, we aim to find a parameter vector \(\theta^* \in \mathbb{R}^d\) such that
\[
\theta^* \in \arg\min_{\theta \in \mathbb{R}^d}\E\left[\left(Y - \theta^\top X\right)^2\right],
\]
where \(X \in \mathbb{R}^d\) denotes the feature vector. These problems underpin our development of semi-supervised estimators that leverage both labeled and abundant unlabeled data to enhance estimation accuracy.

\subsection{PPI \& PPI\raisebox{0.3ex}{\scriptsize ++}: The Case for the Mean}
We review several competing approaches for prediction-powered inference in mean estimation, highlighting the key insights and underlying intuition behind each method.

\paragraph{Empirical Estimator.}
A common estimation problem is to estimate the mean outcome \(\mu = \E[y]\) based on the labeled data. The classical empirical estimator is given by \(\widehat{\mu}_E = \frac{1}{n}\sum_{i=1}^n y_i,\) which, via the CLT, satisfies
\[
\sqrt{n}(\widehat{\mu}_E - \mu) \xrightarrow{d} \mathcal{N}\left(0, \Var(Y)\right).
\]
\paragraph{PPI Estimator.} Building on this approach, the \ppi estimator, by \citet{ppi}, integrates the predictive model \(f(x)\) with the unlabeled data. It is defined as
\[
\widehat{\mu}_{\ppi} = \frac{1}{N}\sum_{j=1}^N f(\widetilde{x}_j) + \frac{1}{n}\sum_{i=1}^n \left[y_i - f(x_i)\right].
\]
The first term leverages predictions on the large unlabeled dataset, while the second term corrects for residual errors observed in the labeled data. Applying the CLT, the \ppi estimator satisfies
\[
\sqrt{n}(\widehat{\mu}_{\ppi} - \mu) \xrightarrow{d} \mathcal{N}\left(0, \Var\left(Y - f(X)\right)\right).
\]

Assuming that \(N \gg n\) and that the predictor \(f(x)\) is sufficiently accurate (i.e., \(\Var\left(Y - f(X)\right) < \Var(Y)\)), the PPI estimator yields a narrower confidence interval for \(\mu\) than the empirical estimator while preserving the same coverage probability.

\paragraph{PPI\raisebox{0.3ex}{\scriptsize ++} Estimator.}
In follow up work, \citet{ppi++} introduced a tuneable version of PPI defined by 
\[
    \widehat{\mu}_{\text{PPI\raisebox{0.2ex}{\scriptsize ++}}} = \frac{1}{N}\sum_{j=1}^N \widehat{\lambda} \cdot f(\widetilde{x}_j) + \frac{1}{n}\sum_{i=1}^n \left[y_i - \widehat{\lambda} \cdot f(x_i)\right],
\]
where $\widehat{\lambda} = \frac{\Cov_{n}(Y,f(X))}{(1+ n/N)\Var_{n+N}(f(X_i)}$. Like PPI, this estimator remains unbiased. Applying the CLT, the PPI++ estimator satisfies
\[
\sqrt{n}(\widehat{\mu}_{\text{PPI\raisebox{0.2ex}{\scriptsize ++}}} - \mu) \xrightarrow{d} \mathcal{N}\left(0, \Var\left(Y - \lambda \cdot f(X)\right)\right),
\]
where $\lambda = \frac{\Cov(Y,f(X))}{(1+r)\Var(f(X)}$ and $n/N \rightarrow r$. Unlike \ppi, it is straightforward to show that this method produces confidence intervals that are at least as small as the empirical estimator. In particular, $\lambda$ is the OLS coefficient between $Y$ and $f$ and therefore satisfies that $\Var(Y -\lambda \cdot f(X)) \leq \Var(Y)$. 
\section{Motivating Example: 
\\
\hspace{1em}\parbox[t]{.9\textwidth}{\itshape\normalsize A Static Partitioning Estimator for the Mean}}\label{sec:motex}

Recall from \Cref{sec:prelim} that the \ppi estimator provides an unbiased estimate of \(\mu = \E[Y]\) by combining a large unlabeled dataset \(\mathcal{U} = \{\widetilde{x}_j\}_{j=1}^N\) with a labeled set \(\mathcal{L} = \{(x_i,y_i)\}_{i=1}^n\). We remind that given a predictor \(f: \mathcal{X} \to \mathbb{R}\), the \ppi estimator is
\[
\widehat{\mu}_{\mathrm{PPI}}
\;=\;
\frac{1}{N}\sum_{j=1}^N f(\widetilde{x}_j)
\;+\;
\frac{1}{n}\sum_{i=1}^n \bigl(y_i - f(x_i)\bigr).
\]
While \(\widehat{\mu}_{\mathrm{PPI}}\) is unbiased and typically achieves variance lower than 
that of the simple empirical mean \(\widehat{\mu}_E = \tfrac{1}{n}\sum_{i=1}^n y_i\), it relies on a 
\emph{single, global residual correction} term \(\tfrac{1}{n}\sum_{i=1}^n \bigl(y_i - f(x_i)\bigr)\). 
In many applications, the distribution of residuals \(\bigl(y - f(x)\bigr)\) may vary 
significantly with certain key features in \(\mathcal{X}\). Consequently, applying one 
global correction can leave some variability unexplained.

To illustrate how feature-specific corrections can reduce variance further, suppose there is a single feature (or coordinate) in \(\mathcal{X}\) that strongly influences the outcome \(Y\). For concreteness, let \(X\) be a binary feature taking values in \(\{-1,1\}\). We define a \emph{coordinate-partition estimator} that applies \emph{group-specific} corrections for each partition \(x \in \{-1,1\}\) in \Cref{estimator: non-adaptive}.

\begin{estimator}[Coordinate-Partition Estimator]
\label{estimator: non-adaptive}
Given a binary feature \(X \in \{-1,1\}\), define
\[
  \widehat{p}_x 
  \;=\; 
  \frac{\left|\{i\in[1,\ldots,N]:x_i = x\}\right|}{N},
  \quad
  n_x 
  \;=\;
  |\{i\in[1,\ldots,n]:x_i = x\}|,
\]
and let the group-specific average residual be
\[
  \widehat{r}_x
  \;=\;
  \frac{1}{n_x}\,\sum_{i : x_i = x} \bigl(y_i - f(x_i)\bigr).
\]
Then the \emph{coordinate-partition estimator} is
\[
  \widehat{\mu}_C 
  \;=\;
  \underbrace{\frac{1}{N}\sum_{j=1}^N f(\widetilde{x}_j)}_{\text{predictive term}}
  \;+\;
  \underbrace{\sum_{x\in\{-1,1\}} \widehat{p}_x\,\widehat{r}_x}_{\text{group-specific residual correction}}.
\]
\end{estimator}
As shown in \Cref{lem:coordPart}, \Cref{estimator: non-adaptive} remains unbiased for \(\mu\) and, as noted in \Cref{rem:better}, it consistently attains weakly lower variance than the standard \ppi estimator. We defer the proof of \Cref{lem:coordPart} to \Cref{app:warmup}.

\begin{restatable}[Properties of \Cref{estimator: non-adaptive}]{lemma}{coordPart}
\label{lem:coordPart}
Let $\widehat{\mu}_C$ be the estimate from 
\Cref{estimator: non-adaptive}, and assume $N \gg n$. Then $\widehat{\mu}_C$ is an unbiased estimator for the mean $\mu = \E[Y]$. Moreover, when $\Var(Y - f(X))$ is finite, and as $n \to \infty$,
\[
\sqrt{n}\,\bigl(\widehat{\mu}_C - \mu\bigr)~\xrightarrow{d}~\mathcal{N}\!\Bigl(0,\,\sigma_C^2\Bigr),\text{ where }
\sigma_C^2
~\coloneqq~
\sum_{x\in \{-1,1\}} \Pr[X=x] \cdot\Var(Y - f(X)\mid X=x).
\]
\end{restatable}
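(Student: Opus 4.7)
The plan is to prove the two claims in turn: unbiasedness by direct expectation calculation leveraging the independence of $\mathcal{L}$ and $\mathcal{U}$, and asymptotic normality by isolating the dominant $1/\sqrt{n}$ contribution, which comes entirely from the residual-correction term on the labeled data.

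\textbf{Unbiasedness.} First I would split $\widehat{\mu}_C$ into its two pieces. The predictive term $\tfrac{1}{N}\sum_j f(\widetilde{x}_j)$ is an empirical average of i.i.d.\ draws from $P_X$, so its expectation is $\E[f(X)]$. For the correction term, I would condition on the multiset of feature values $\{x_i\}_{i=1}^n$ in the labeled sample and on the unlabeled sample. Given this conditioning, $\widehat{p}_x$ is deterministic and the only randomness in $\widehat{r}_x$ comes from the $y_i$'s with $x_i = x$, whose conditional expectation is $\E[Y - f(X)\mid X = x]$. Using independence of $\mathcal{L}$ and $\mathcal{U}$ and the tower rule, I get $\E[\widehat{p}_x\widehat{r}_x] = \Pr[X=x]\,\E[Y - f(X)\mid X=x]$ (handling the zero-probability event $n_x = 0$ by the usual convention or by noting it has negligible measure). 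Summing over $x \in \{-1, 1\}$ gives $\E[Y - f(X)]$, and combining with the predictive term yields $\E[Y] = \mu$.

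\textbf{Asymptotic normality.} The strategy is a standard decomposition plus Slutsky. Writing $p_x = \Pr[X=x]$ and $r_x = \E[Y - f(X)\mid X=x]$, I decompose
\[
\sqrt{n}(\widehat{\mu}_C - \mu) = \underbrace{\sqrt{n}\Bigl(\tfrac{1}{N}\sum_{j=1}^N f(\widetilde{x}_j) - \E[f(X)]\Bigr)}_{(A)} + \underbrace{\sqrt{n}\sum_{x}(\widehat{p}_x - p_x)\,r_x}_{(B)} + \underbrace{\sqrt{n}\sum_{x}\widehat{p}_x\,(\widehat{r}_x - r_x)}_{(C)}.
\]
Since $N \gg n$ (formally $n/N \to 0$), both $(A)$ and $(B)$ are $o_p(1)$: each is $\sqrt{n/N}$ times an $O_p(1)$ quantity controlled by the variance of $f(X)$ or by binomial concentration of $\widehat{p}_x$. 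The entire asymptotic contribution thus comes from $(C)$.

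\textbf{Handling $(C)$.} For each $x \in \{-1, 1\}$, condition on $n_x$. The conditional distribution of $\sqrt{n_x}(\widehat{r}_x - r_x)$ is, by the classical CLT applied to the i.i.d.\ residuals $y_i - f(x_i)$ within the subgroup $\{i : x_i = x\}$, asymptotically $\mathcal{N}(0,\,\Var(Y - f(X) \mid X = x))$. Since $n_x/n \xrightarrow{a.s.} p_x$ by the LLN, the scaling gives $\sqrt{n}(\widehat{r}_x - r_x) \xrightarrow{d} \mathcal{N}(0,\,\Var(Y - f(X) \mid X = x)/p_x)$. Moreover, conditional on the partition sizes, the two subgroup averages $\widehat{r}_{-1}$ and $\widehat{r}_{1}$ are independent because they depend on disjoint sets of labels. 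Replacing $\widehat{p}_x$ by $p_x$ via Slutsky (justified by $\widehat{p}_x \xrightarrow{p} p_x$), the variance of the limiting distribution of $(C)$ is
\[
\sum_{x \in \{-1,1\}} p_x^2 \cdot \frac{\Var(Y - f(X) \mid X = x)}{p_x} = \sum_{x \in \{-1,1\}} p_x\,\Var(Y - f(X) \mid X = x) = \sigma_C^2.
\]

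\textbf{Main obstacle.} The least routine step is the careful handling of the randomness in $n_x$ and the mild issue that $\widehat{r}_x$ is undefined when $n_x = 0$. I would address this by conditioning on $\{n_x \ge 1 \text{ for all } x\}$, whose probability tends to $1$ exponentially fast, and by applying a conditional CLT (or equivalently embedding the subgroup means into a bivariate CLT via the triangular array formulation) to make the joint convergence and independence across subgroups rigorous. Everything else reduces to Slutsky's theorem and the standard rate calculations enabled by $n/N \to 0$.
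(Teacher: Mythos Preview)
Your proposal is correct and follows essentially the same approach as the paper: both arguments reduce to a per-group CLT on $\widehat{r}_x$, use $n_x/n \to p_x$ via Slutsky, and invoke (conditional) independence of $\widehat{r}_{-1}$ and $\widehat{r}_1$ to combine the limits. Your explicit $(A)+(B)+(C)$ decomposition is slightly more careful than the paper's informal ``$N\gg n$ so replace $\widehat{p}_x$ by $p_x$ and the predictive term by its limit,'' but the substance is identical.
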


\begin{corollary}[Weak Variance Reduction]
\label{rem:better}
Under the assumptions of \Cref{lem:coordPart}, the variance of \Cref{estimator: non-adaptive} \(\widehat{\mu}_C\) is at most 
that of the standard PPI estimator \(\widehat{\mu}_{\mathrm{PPI}}\), e.g. \(\Var\!\bigl(\widehat{\mu}_C\bigr)
~\le~
\Var\!\bigl(\widehat{\mu}_{\mathrm{PPI}}\bigr).\)
\end{corollary}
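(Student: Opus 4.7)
The plan is to derive the inequality as an essentially immediate consequence of the law of total variance. First, I would invoke \Cref{lem:coordPart} to identify the asymptotic variance of $\sqrt{n}(\widehat{\mu}_C - \mu)$ as
\[
\sigma_C^2 \;=\; \sum_{x \in \{-1,1\}} \Pr[X=x]\,\Var(Y - f(X) \mid X=x),
\]
which, since $X$ is supported on the two-point set $\{-1,1\}$, is precisely $\E\bigl[\Var(Y - f(X) \mid X)\bigr]$. Second, I would recall from \Cref{sec:prelim} that the asymptotic variance of $\sqrt{n}(\widehat{\mu}_{\mathrm{PPI}} - \mu)$ is $\Var(Y - f(X))$; here we implicitly use $N \gg n$ so that the contribution of the unlabeled average to the variance is negligible on the $1/\sqrt{n}$ scale.

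With these two expressions in hand, the comparison reduces to a single application of the law of total variance to the random variable $Z = Y - f(X)$ conditioned on $X$:
\[
\Var(Y - f(X)) \;=\; \E\bigl[\Var(Y - f(X) \mid X)\bigr] \;+\; \Var\bigl(\E[Y - f(X) \mid X]\bigr).
\]
The first term on the right-hand side is exactly $\sigma_C^2$, and the second term is non-negative, so $\sigma_C^2 \le \Var(Y - f(X))$, which translates to $\Var(\widehat{\mu}_C) \le \Var(\widehat{\mu}_{\mathrm{PPI}})$ at the asymptotic level.

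There is no real obstacle in this argument; the only subtlety worth flagging is that the comparison is at the level of CLT-scale asymptotic variances (both estimators are unbiased with Gaussian limits), rather than finite-sample variances. It is also worth noting when equality holds: the gap between the two variances equals $\Var(\E[Y - f(X) \mid X])$, so $\widehat{\mu}_C$ strictly improves upon $\widehat{\mu}_{\mathrm{PPI}}$ unless the mean residual $\E[Y - f(X)\mid X]$ is the same in both cells $X = -1$ and $X = 1$, i.e., unless the global residual correction is already capturing the entire feature-conditional bias. This observation foreshadows why adaptively chosen, more refined partitions (as in \partest) can yield strictly tighter intervals on real data.
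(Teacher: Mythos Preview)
Your proposal is correct and mirrors the paper's own proof sketch: both invoke the law of total variance on $Y - f(X)$ conditioned on $X$, identifying $\sigma_C^2$ with the within-group component and noting the between-group term $\Var(\E[Y-f(X)\mid X])$ is non-negative. Your additional remark characterizing when equality holds is a nice bonus that the paper only illustrates later via \Cref{example:better estimator}.
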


\begin{proof}[Sketch of Proof]
Applying the law of total variance to the (binary) partitioned estimator $\widehat{\mu}_C$ shows that it can be expressed as a weighted sum of group-specific residuals and a non-negative term.
\end{proof}
In \Cref{example:better estimator} below, we also show that by capturing heterogeneous behavior in the conditional residual \(y - f(x)\), the estimator can achieve a strict variance reduction in settings where the residual variability differs substantially across partitions.
\begin{example}[Strict Variance Reduction]
\label{example:better estimator}
Consider a binary feature $X$ taking values in $\{-1,1\}$ with probability $1/2$ each.  
Conditional on $X$, let $Y-f(X)$ be normally distributed:
\[
Y-f(X) \,\big|\,(X=1)\;\sim\;\mathcal{N}(1,\,1),
\quad
Y-f(X) \,\big|\,(X=-1)\;\sim\;\mathcal{N}(-1,\,1).
\]
Hence, $\E[Y-f(X)] = 0$.  Suppose we estimate $\mu$ via:
\begin{itemize}
  \item \textbf{PPI estimator}, $\mu_{\text{PPI}}$, which applies one global residual correction.
  \item \textbf{Coordinate-partition estimator}, $\mu_{C}$, which corrects separately for $X=1$ and $X=-1$.
\end{itemize}

Straightforward variance calculation shows:
\(
\Var\bigl(\mu_{\text{PPI}}\bigr)
=
\frac{2}{n}
>
\Var\bigl(\mu_{C}\bigr)
=
\frac{1}{n}.
\)
\end{example}

In summary, while both the standard \ppi estimator and the conditional estimator are unbiased for the true mean \(\mu\), conditioning on \(X\) reduces the variance from approximately \(\frac{2}{n}\) to \(\frac{1}{n}\). This variance reduction illustrates the potential gains from incorporating feature information. Our proposed \partest Estimator generalizes this idea by adaptively partitioning the feature space into regions of approximately homogeneous residual variance, thereby leveraging both labeled and unlabeled data to obtain more efficient confidence intervals without sacrificing unbiasedness.

\section{The PART Estimator:\\
\hspace{1em}\parbox[t]{.9\textwidth}{\itshape\normalsize An Adaptive Partitioning Estimator}} \label{sec:part}

In this section, we describe the \partest estimator and discuss its application to building confidence intervals for the mean and linear regression coefficients. At a high-level, the \partest estimator uses a CART-style binary tree to partition the feature space with splits chosen to minimize the variance of the debiasing term greedily. Inside each leaf of the tree, we compute a local residual correction using nearby labeled points. We then take a mixture of these local corrections based on the fraction of unlabeled data assigned to each leaf to produce the overall debiasing term. We highlight that when we set the depth of the \partest tree equal to zero, we recover the \ppi estimator; this guarantees that with appropriate depth-parameter tuning \partest will perform at least as well as \ppi. In forthcoming sections we will establish the asymptotic properties of the \partest estimator and prove coverage guarantees.
\subsection{Estimating the Mean}\label{sec:semi-supervised_tree}
Recall that our feature space is \(\mathcal{X}\subset \mathbb{R}^d\) and that we denote the set of unlabeled observations by \(\mathcal{U} = \{\widetilde{x}_i\}_{i=1}^N\) and the set of labeled observations by \(\mathcal{L} = \{(x_i, y_i)\}_{i=1}^n\). The predictor is given by \(f: \mathcal{X} \to \mathbb{R}\), and for each labeled observation \((x_i, y_i)\) the residual is defined as \(r(x_i, y_i) \coloneqq y_i - f(x_i).\) Our objective is to partition the feature space \(\mathcal{X}\) into subregions that exhibit low residual variance by constructing a binary decision tree \(T\).

\begin{assumption}\label{as:mass}
We assume \(N \gg n\), i.e., the number of unlabeled observations is significantly larger than the number of labeled ones. For simplicity in our analysis, we also assume that we have direct access to the mass function \(p(\mathcal{R})\) for any candidate subregion \(\mathcal{R}\subseteq\mathcal{X}\), e.g. $p(\mathcal{R}) = \Pr[X\in \mathcal{R}]$.
\end{assumption}

We remark that this assumption is well-supported if $N \gg n$. Namely, any candidate sub-region $\mathcal{R} \subset \calX$ produced by \partest must be formed by axis-aligned hyperplanes. It is well-known (e.g., \citep{mohri2018}) that the VC dimension of axis-aligned rectangles in $\R^d$ is $O(d)$. If we let $\widehat{p}_N(\mathcal{R}) = \frac{1}{N}\sum_{i=j}^N \mathbf{1}\{\widetilde{x}_j \in \mathcal{R}\}$, then by standard uniform convergence arguments (e.g. Corollary 3.19 of \citep{mohri2018}), we can show that for some absolute constant $C$ and any $\delta \in (0,1)$, it holds that
\[
  |\widehat{p}_N(\mathcal{R}) - p(\mathcal{R})| \leq  C\cdot \sqrt{\frac{d \log{(N/d)} + \log{(1/\delta)}}{N}}.
\] 
Under the assumption that $N \gg n$, it follows that $|\widehat{p}_N(\mathcal{R}) - p(\mathcal{R})| = o(n^{-1/2})$ and therefore this term can be safely ignored in our asymptotic analysis.

\paragraph{Candidate Splitting Points.}  
At each internal node corresponding to a region \(\mathcal{R}\subseteq\mathcal{X}\), a feature coordinate \(k\in\{1,\dots,d\}\) is selected for splitting. Concretely, let \(\{\widetilde{x}_i^{(k)} : \widetilde{x}_i \in \mathcal{U}\}\) denote the collection of \(k\)-th coordinate values from the unlabeled observations. We then compute the empirical quantiles \(q_1^{(k)}, q_2^{(k)}, \dots, q_{n}^{(k)}\) at levels \(\frac{1}{n},\,\frac{2}{n},\,\dots,\,\frac{n-1}{n}\), where \(n\) is the total number of labeled observations. 

Rather than employing these quantile values directly as splitting thresholds, we define the set of candidate splitting points as the midpoints between consecutive quantiles:
\[
\mathcal{S}^{(k)} \coloneqq \left\{\,\frac{q_j^{(k)}+q_{j+1}^{(k)}}{2} : \; j=1,2,\dots,n-2\,\right\}.
\]

\begin{remark}
This candidate splitting strategy is analogous to conventional tree-splitting methods that utilize midpoints between sorted input values. We adopt this standard approach because it leads to less variability in the splitting points, which in turn simplifies the derivation of confidence intervals for the final estimator.
\end{remark}

\paragraph{Splitting Function.}  
Let \(\mathcal{R}\subseteq\mathcal{X}\) denote the region corresponding to an internal node of the tree. For a candidate splitting coordinate \(k\in\{1,\dots,d\}\) and a candidate threshold \(s^{(k)}\) chosen from the set \(\mathcal{S}^{(k)}\), we partition \(\mathcal{R}\) into two disjoint subregions:
\[
\mathcal{R}_{\text{left}} = \{x\in\mathcal{R} : x^{(k)} \le s^{(k)}\},\quad
\mathcal{R}_{\text{right}} = \{x\in\mathcal{R} : x^{(k)} > s^{(k)}\}.
\]
Define the sets of labeled observations in the two regions as 
\[
\mathcal{L}_{\text{left}} = \{(x_i,y_i)\in\mathcal{L} : x_i\in\mathcal{R}_{\text{left}}\} \quad \text{and} \quad
\mathcal{L}_{\text{right}} = \{(x_i,y_i)\in\mathcal{L} : x_i\in\mathcal{R}_{\text{right}}\}.
\]
Let \(n_{\text{left}} = |\mathcal{L}_{\text{left}}|,\) \(n_{\text{right}} = |\mathcal{L}_{\text{right}}|\) denote the number of labeled observations in the respective regions. Similarly, define the proportions of unlabeled observations in the two regions as
\(p_{\text{left}} = p\left(X\in \mathcal{R}_{\text{left}}\right)\), and \(p_{\text{right}} = p\left(X\in \mathcal{R_{\text{right}}}\right)\).\footnote{We remind that these quantities can be estimated via the unlabeled samples $\mathcal{U}$-see \Cref{as:mass}.}

For each child region, we define the sample variance of the residuals over the corresponding labeled data. In particular, for the left child region, let
\[
\widehat{\sigma}^2_{\text{left}} = \frac{1}{n_{\text{left}}-1}\sum_{(x,y)\in \mathcal{L}_{\text{left}}}\Bigl(r(x, y) - \overline{r}_{\text{left}}\Bigr)^2,
\]
and analogously, define
\[
\widehat{\sigma}^2_{\text{right}} = \frac{1}{n_{\text{right}}-1}\sum_{(x,y)\in \mathcal{L}_{\text{right}}}\Bigl(r(x, y) - \overline{r}_{\text{right}}\Bigr)^2,
\]
where \(\overline{r}_{\text{left}}\) (respectively, \(\overline{r}_{\text{right}}\)) is the mean residual computed over \(\mathcal{L}_{\text{left}}\) (respectively, \(\mathcal{L}_{\text{right}}\)).

We then define the \emph{Variance of Mixture of Splits} (VMS) criterion for the candidate split \((k,s)\) as
\[
\text{VMS}(k,s,\mathcal{R}) \coloneqq p_{\text{left}}^2\,\frac{\widehat{\sigma}^2_{\text{left}}}{n_{\text{left}}} \;+\; p_{\text{right}}^2\,\frac{\widehat{\sigma}^2_{\text{right}}}{n_{\text{right}}}.
\]
The optimal split at the node is chosen as the candidate \((k^*,s^*)\) that minimizes the VMS criterion, i.e.,
\[
(k^*,s^*) = \argmin_{(k,s):\, k\in\{1,\dots,d\},\, s\in\mathcal{S}^{(k)}} \text{VMS}(k,s,\mathcal{R}).
\]

\begin{remark}
The VMS criterion quantifies the variance reduction from splitting the region \(\mathcal{R}\). By leveraging the abundant unlabeled data, we obtain highly accurate estimates of the mass in each subregion—\(p_{\text{left}}\) and \(p_{\text{right}}\). Consequently, minimizing the VMS criterion favors splits that not only create groups with more homogeneous residuals but also benefit from mass estimates from the unlabaled data with negligent variance.
\end{remark}

\paragraph{Termination Criterion.}  
The recursive splitting is halted when either the tree reaches a predetermined maximum depth \(D\) or when the number of labeled observations within a region drops below a specified minimum threshold \(m\). At termination, the feature space is partitioned into \(L\) leaf regions, denoted by \(\{\mathcal{R}_\ell\}_{\ell=1}^{L}\). For each leaf region \(\mathcal{R}_\ell\), we calculate:
\begin{enumerate}
    \item The average residual \(\overline{r}_\ell \coloneqq \frac{1}{n_\ell}\sum_{(x_i,y_i):x_i \in \mathcal{R}_\ell} r(x_i, y_i),\) where \(n_\ell = \bigl|\{(x_i,y_i): x_i \in \mathcal{R}_\ell\}\bigr|\) is the number of labeled observations in \(\mathcal{R}_\ell\).
    \item The proportion of unlabeled observations \(p_\ell = p(\mathcal{R}_\ell).\) Recall that this quantity can be estimated from the unlabeled data---see \Cref{as:mass}.

\end{enumerate}

Using these quantities, the final population mean estimator is defined as:
\[
\hat{\mu}_T \coloneqq \frac{1}{N}\sum_{i=1}^N f(\widetilde{x}_i) \;+\; \sum_{\ell=1}^L p_\ell\, \overline{r}_\ell.
\]

\noindent The pseudo-code of this estimator is presented in \cref{alg:semi-supervised_tree} in \Cref{appx:algorithms}.

\subsubsection{Constructing Confidence Intervals}
We form confidence intervals by estimating the variance of \(\hat{\mu}_T\) using empirical residual variances in each leaf region, yielding a Wald-type interval \citep{Vaart1998} with asymptotically valid coverage. For each leaf \(\mathcal{R}_\ell\), let

\[
\widehat{\sigma}_\ell^{2} \coloneqq \frac{1}{n_\ell - 1}\sum_{\substack{(x_i,y_i)\in \mathcal{L}\!:\\ x_i \in \mathcal{R}_\ell}}
    \bigl(r(x_i,y_i) - \overline{r}_\ell\bigr)^{2},
\]
    be the variance of the residuals of labeled data in $\calL\cap \mathcal{R}_\ell$. We estimate the variance of \(\hat{\mu}_T\) by
\(\widehat{\sigma}^2 \coloneqq \sum_{\ell=1}^L p_\ell^2 \,\frac{\widehat{\sigma}_\ell^2}{n_\ell},\) yielding the Wald‐type \((1-\alpha)\) confidence interval
\[
  \Bigl[\,
    \hat{\mu}_T -
    z_{1-\alpha/2}\,\widehat{\sigma}, \hat{\mu}_T +
    z_{1-\alpha/2}\,\widehat{\sigma}
  \Bigr],
\]
where \(z_{1-\alpha/2}\) is the \((1-\alpha/2)\) standard normal quantile. We formally show coverage probabilities in \Cref{thm:tree-ci-coverage}. The proof is postponed to \Cref{appx:proof of coverage}.

\begin{restatable}[Coverage of \Cref{alg:semi-supervised_tree}]{theorem}{treecicoverage}
\label{thm:tree-ci-coverage}
Fix level \(\alpha \in (0,1)\). Let \(\{(x_i, y_i)\}_{i=1}^n\) be i.i.d. samples drawn from the joint distribution \(P_{X,Y}\) and \(\{\widetilde{x}_j\}_{j=1}^N\) be i.i.d. samples drawn from the marginal \(P_X\). Then, under \Cref{as:mass} and as \(n\rightarrow\infty\), the output of \Cref{alg:semi-supervised_tree}—restricted to at most \(L\) leaf nodes—satisfies:\[
  \Pr\left[
    \mu \in 
      \Bigl[\,
    \hat{\mu}_T -
    z_{1-\alpha/2}\,\widehat{\sigma}, \hat{\mu}_T +
    z_{1-\alpha/2}\,\widehat{\sigma}
  \Bigr]
  \right]
  \geq 1-\alpha - \sqrt{\frac{2L\cdot \log(d\cdot n)}{n}} - \frac{1}{(n\cdot d)^L}.
\]
\end{restatable}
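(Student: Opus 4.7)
The main obstacle in proving \Cref{thm:tree-ci-coverage} is that the tree $T^{*}$ output by \partest is chosen from the labeled data, so a vanilla CLT does not apply directly to $\hat\mu_{T^{*}}$ as if $T^{*}$ were fixed. The plan is to exploit the discreteness of the candidate splits: because every internal threshold is a midpoint of consecutive $k$-th coordinate quantiles of the unlabeled sample, the family $\mathcal{T}_L$ of trees \partest can possibly output is finite and satisfies $|\mathcal{T}_L|\leq (dn)^{L}$ (one factor of $d$ for the split coordinate, at most $n$ candidate midpoints, repeated over the at most $L-1$ internal nodes of an $L$-leaf tree). I would then (i) establish fixed-tree asymptotic normality, and (ii) transfer it to the data-adaptive $T^{*}$ via a uniform concentration bound over $\mathcal{T}_L$.

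\paragraph{Fixed-tree CLT and decomposition.} Writing $\mu_\ell \coloneqq \E[r(X,Y)\mid X\in\calR_\ell]$, for any fixed $T \in \mathcal{T}_L$ I would decompose
\[
\hat\mu_T - \mu \;=\; \bigl(\E_N[f] - \E[f(X)]\bigr) \;+\; \sum_\ell p_\ell\,\bigl(\bar r_\ell - \mu_\ell\bigr).
\]
By \Cref{as:mass} and $N \gg n$ the first term is $o_p(n^{-1/2})$, while the second is a weighted sum of independent residual averages across leaves. A standard CLT then yields $(\hat\mu_T-\mu)/\tilde\sigma_T \xrightarrow{d} \mathcal{N}(0,1)$ with $\tilde\sigma_T^2 = \sum_\ell p_\ell^2 \sigma_\ell^2/n_\ell$ and $\sigma_\ell^2 = \Var[r\mid X\in\calR_\ell]$. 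Consistency of the plug-in $\hat\sigma_\ell^2$ in each leaf then delivers the nominal $(1-\alpha)$ coverage for any fixed tree.

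\paragraph{Uniform concentration over $\mathcal{T}_L$.} The next step is to control $\bar r_\ell - \mu_\ell$ and $\hat\sigma_\ell^2 - \sigma_\ell^2$ \emph{simultaneously} over all (tree, leaf) pairs in $\mathcal{T}_L$. Applying Hoeffding's inequality (or Bernstein, for sub-exponential residuals) with deviation parameter of order $\sqrt{L\log(dn)/n_\ell}$ and union-bounding over at most $L\cdot (dn)^{L}$ pairs drives the failure probability down to $(dn)^{-L}$ and produces a good event $G$. On $G$ these deviation bounds hold for the realized $T^{*}$ regardless of how it was selected; plugging them into the Wald inequality and combining with the fixed-tree CLT yields conditional coverage at least $1-\alpha - \sqrt{2L\log(dn)/n}$. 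Integrating over $\Pr[G^c]\leq (nd)^{-L}$ then produces the stated bound.

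\paragraph{Main obstacle.} The subtle point is that the conditional distribution of $(\hat\mu_{T^{*}}-\mu)/\hat\sigma_{T^{*}}$ given $T^{*}$ is \emph{not} simply standard normal, because the labeled data drove the selection of $T^{*}$ in the first place. The technical heart of the argument is thus to show that the uniform concentration on $G$ is strong enough that the Wald statistic for any tree realized on $G$ lies within an additive $\sqrt{2L\log(dn)/n}$ of its Gaussian limit, so that the CLT-based $\alpha$ survives the selection. Making this step precise --- effectively via a Berry--Esseen-type bound that holds uniformly across $\mathcal{T}_L$, and that meshes with the random leaf sample sizes $n_\ell$ (which are bounded below by the termination threshold $m$) --- is the part of the argument I expect to require the most care.
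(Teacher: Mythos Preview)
Your high-level plan---bound $|\mathcal{T}_L|\le (dn)^L$, establish a fixed-tree CLT, and then union-bound over $\mathcal{T}_L$---matches the paper's architecture.  The execution, however, is different in an important way.

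The paper does \emph{not} concentrate the leaf means $\bar r_\ell-\mu_\ell$ uniformly, nor does it invoke anything like a Berry--Esseen bound.  Instead it union-bounds only a one-sided concentration of the \emph{variance estimator}: using the asymptotic $\chi^2$ law of $\widehat{\sigma}_\ell^2$ (and a CLT on it), they show that for each fixed $T$,
\[
\Pr\!\left[\widehat{\sigma}_T^2 \ge \sigma_T^2\Bigl(1-2\sqrt{\tfrac{\log(1/\delta)}{n}}\Bigr)\right]\ge 1-\delta.
\]
Setting $\delta=(dn)^{-2L}$ and union-bounding over the $(dn)^L$ trees gives a good event of probability $\ge 1-(dn)^{-L}$ on which $\widehat{\sigma}_{T^*}\ge \sigma_{T^*}\sqrt{1-2\sqrt{2L\log(dn)/n}}$.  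On that event the empirical Wald interval \emph{contains} the oracle interval $\hat{\mu}_{T^*}\pm z_{1-\alpha/2}\,\sigma_{T^*}\sqrt{1-\ldots}$, and the coverage of the latter is computed directly from the fixed-tree normality of $\hat{\mu}_{T^*}$ as $2\Phi\!\bigl(C\,z_{1-\alpha/2}\bigr)-1$ with $C=\sqrt{1-2\sqrt{2L\log(dn)/n}}$.  The exact penalty $\sqrt{2L\log(dn)/n}$ then drops out of two elementary inequalities: concavity of $\Phi$ on $[0,\infty)$ (giving $\Phi(Cz_{1-\alpha/2})\ge 1-\alpha/2-(1-C)/2$) and $1-\sqrt{1-x}\le x$.

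Your route---Hoeffding/Bernstein on the leaf means plus a uniform Berry--Esseen---is a legitimate and in fact more rigorous way to confront the selection issue you correctly flag (the paper simply applies the fixed-tree CLT to $\hat{\mu}_{T^*}$ after conditioning on the variance event, which is somewhat informal).  But two caveats: (i) Hoeffding requires bounded residuals, which the paper never assumes, so you would need at least sub-Gaussian tails to stay within the stated hypotheses; (ii) it is not clear your route would reproduce the \emph{exact} constant in the theorem, since the $\sqrt{2L\log(dn)/n}$ term in the paper comes specifically from the variance-shrink factor fed through the $\Phi$-concavity step, not from a mean-deviation bound.  If your aim is to recover the statement verbatim, the cleanest path is the paper's: concentrate only $\widehat{\sigma}_T^2$, compare intervals, and finish with the two inequalities above.
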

\begin{remark}
\Cref{thm:tree-ci-coverage} highlights that the nominal coverage level of \(1-\alpha\) is adjusted downward by terms that quantify finite-sample effects. Notably, the term
\[
\sqrt{\frac{2L\cdot \log(d\cdot n)}{n}}
\]
is dominant this adjustment. As the number of allowed leaf nodes \(L\) increases, the model becomes more complex, which in turn amplifies the implicit correction term. Consequently, while the asymptotic guarantee might suggest nominal coverage, in practice the confidence intervals are narrower than expected due to overfitting, thereby reducing their true coverage probability. This insight underscores the trade-off between model flexibility and the reliability of statistical inference.
\end{remark}
We now examine how the finite-sample correction term in \Cref{thm:tree-ci-coverage} manifests empirically. \Cref{fig:thm6} compares \Cref{thm:tree-ci-coverage}'s theoretical coverage bound (dashed lines) with the realized coverage probability of the \partest estimator (solid lines). The figure demonstrates that \partest substantially outperforms the theoretical bounds in practice, achieving a coverage close to the desired 95\% confidence level.  
\begin{figure}[H]
    \centering
    \includegraphics[width=.8\linewidth]{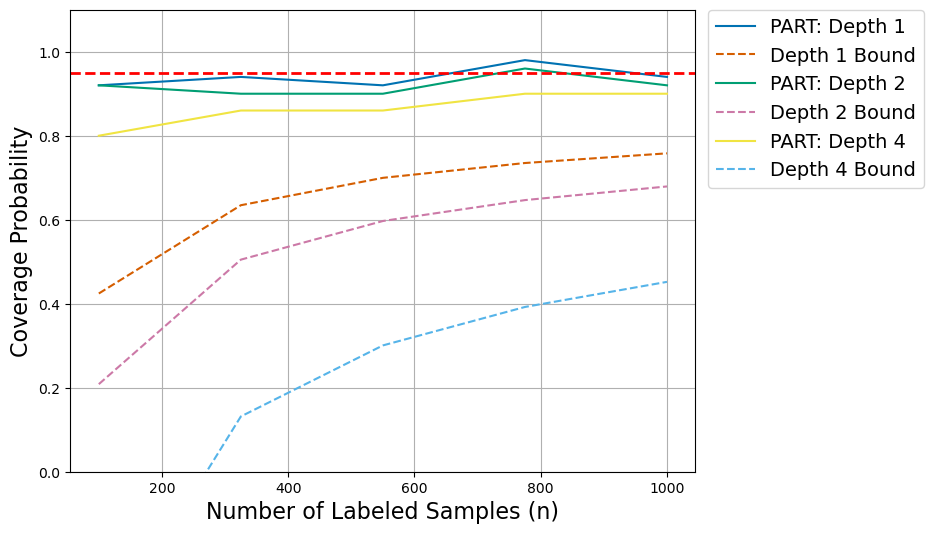}
    \caption{Theoretical coverage of \Cref{thm:tree-ci-coverage} (dashed lines) versus empirical coverage fo the PART estimator (solid lines) at a 95\% confidence level.}
    \label{fig:thm6}
\end{figure}

\subsection{Estimating Linear Regression Coefficients}\label{sec:semi-supervised_tree_linear_regression}

We now adapt the \partest estimator to construct narrow confidence intervals for regression coefficients by adaptively partitioning the feature space. Our focus is on estimating the \(k\)th component of the correction vector, \(\theta^{(k)}\), in the linear regression model.

Let \(\widetilde{X}\) be the unlabeled design matrix with rows \(\widetilde{x}\) drawn from \(\mathcal{U}\). Define the predictor-augmented estimator as \(\widetilde{\theta} = \widetilde{X}^+ f(\widetilde{X}),\) where \(\widetilde{X}^+\) is the Moore–Penrose pseudoinverse and \(f(\widetilde{X})\) is the vector of predictor function values. Our goal is to debias \(\widetilde{\theta}\).

Assuming that the number of unlabeled observations \(N\) is much larger than the number of labeled observations \(n\) (i.e., \(N \gg n\)), the law of large numbers implies \(\widetilde{\Sigma} = \frac{1}{N}\sum_{i=1}^N \widetilde{x}_i\widetilde{x}_i^T \to \E[xx^T],\)
so that \(\widetilde{\theta} \approx \left(\E[xx^T]\right)^{-1}\E[xf(x)].\)
Under the standard linear model, the true parameter is given by
\[
\theta = \left(\E[xx^T]\right)^{-1}\E[xy].
\]
We extend the approach in Section~\ref{sec:semi-supervised_tree} to linear regression by modifying the splitting rule and variance estimation to rigorously account for the uncertainty in \(\theta^{(k)}\) while preserving the candidate splitting points.

\paragraph{Notation and Preliminaries.} For any subregion \(\mathcal{R}\subseteq\mathcal{X}\), define \(\mathcal{L}_{\mathcal{R}} = \{(x_i,y_i)\in\mathcal{L} : x_i \in \mathcal{R}\},\) with sample size \(n_{\mathcal{R}} = |\mathcal{L}_{\mathcal{R}}|\) and probability \(p_{\mathcal{R}} = \Pr(x\in\mathcal{R})\). Let \(X_{\mathcal{R}}\) and \(Y_{\mathcal{R}}\) denote the features and responses in \(\mathcal{R}\), respectively. We define the residual vector by
\[
\widehat{R}_{\mathcal{R}} = \frac{1}{n_{\mathcal{R}}} \widetilde{\Sigma}^{-1} X_{\mathcal{R}}^T\bigl(Y_{\mathcal{R}} - f(X_{\mathcal{R}})\bigr).
\]

For variance estimation, define the bias term as
\[
\widehat{M}_{\mathcal{R}} = \widehat{\Cov}\left(X_{\mathcal{R}}^T\bigl(Y_{\mathcal{R}} - f(X_{\mathcal{R}})\right) =  \frac{1}{n_{\mathcal{R}}-1}\sum_{(x,y)\in\mathcal{L}_{\mathcal{R}}}\left(y-f(x)-x^T\widehat{R}_{\mathcal{R}}\right)^2 xx^T,
\]
and set
\[
\widehat{V}_{\mathcal{R}} = \widetilde{\Sigma}^{-1}\widehat{M}_{\mathcal{R}}\widetilde{\Sigma}^{-1}.
\]

We further make the following assumption to use the high-dimensional central limit theorem.
\begin{assumption}\label{as:high dimensional}
For any \(\mathcal{R}\subseteq\mathcal{X}\), the matrix \(\Cov\bigl[x\,(f(x)-y) \mid x\in\mathcal{R}\bigr]\) is invertible and $p_\mathcal{R}>0$.
\end{assumption}

This is standard non-degeneracy assumption used for inference on generalized linear models (GLM) in high-dimensional settings, which we have adapted to hold on a leaf-by-leaf basis (see, e.g., \cite{Vaart1998,ppi++}). \Cref{as:high dimensional} guarantees that for each leaf, the covariance of the score function is invertible, which means that the asymptotic variance of the region $V_{\mathcal{R}}$ is well-defined.

We defer the proof of the asymptotic distribution of the residual vector $\widehat{R}_{\mathcal{R}}$ (\Cref{thm:asymptotic distribution for regression}) to \Cref{app:asymlr}.
\begin{restatable}{theorem}{treelr}\label{thm:asymptotic distribution for regression}
Define \(V_{\mathcal{R}} = \widetilde{\Sigma}^{-1}\Var\bigl[x\,(f(x)-y) \mid x\in\mathcal{R}\bigr]\widetilde{\Sigma}^{-1}.\)
Under \Cref{as:high dimensional}, as \(n\to\infty\),
\[
\widehat{R}_{\mathcal{R}} \xrightarrow{d} \mathcal{N}\!\Biggl(\left(\E[xx^T]\right)^{-1}\E\bigl[x\,(y-f(x))\mid x\in\mathcal{R}\bigr],\, \frac{V_{\mathcal{R}}}{n_{\mathcal{R}}}\Biggr),
\]
and \(\widehat{V}_{\mathcal{R}}\) is a consistent estimator of \(V_{\mathcal{R}}\).
\end{restatable}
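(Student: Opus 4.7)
The plan is to establish the asymptotic normality of $\widehat{R}_{\mathcal{R}}$ by combining a multivariate central limit theorem on the labeled-sample score with the effectively deterministic behavior of the unlabeled second-moment matrix $\widetilde{\Sigma}$, and then to verify consistency of $\widehat{V}_{\mathcal{R}}$ via the LLN and the continuous mapping theorem.

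First I would rewrite $\widehat{R}_{\mathcal{R}} = \widetilde{\Sigma}^{-1} \bar{Z}_{\mathcal{R}}$, where $\bar{Z}_{\mathcal{R}} = \tfrac{1}{n_{\mathcal{R}}}\sum_{(x,y)\in\mathcal{L}_{\mathcal{R}}} x(y-f(x))$ is the empirical average of the score $x(y-f(x))$ over labeled points in $\mathcal{R}$. Conditional on the indices landing in $\mathcal{L}_{\mathcal{R}}$, these summands are i.i.d.\ draws from the restricted law $P_{X,Y\mid X\in\mathcal{R}}$, and \Cref{as:high dimensional} makes the conditional score covariance $\Sigma_{\mathcal{R}} := \Var[x(y-f(x))\mid x\in\mathcal{R}]$ finite and invertible. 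The multivariate CLT then yields
\[
\sqrt{n_{\mathcal{R}}}\bigl(\bar{Z}_{\mathcal{R}} - \mu_{\mathcal{R}}\bigr) \xrightarrow{d} \mathcal{N}(0, \Sigma_{\mathcal{R}}),
\]
where $\mu_{\mathcal{R}} := \E[x(y-f(x))\mid x\in\mathcal{R}]$.

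Next I would treat $\widetilde{\Sigma}^{-1}$ as asymptotically deterministic. By the LLN on the unlabeled sample, $\widetilde{\Sigma} \xrightarrow{p} \E[xx^T]$, and under $N\gg n$ the fluctuations are of order $O(N^{-1/2}) = o(n_{\mathcal{R}}^{-1/2})$, so they do not contribute to the $\sqrt{n_{\mathcal{R}}}$-scaled limit. Continuous mapping together with invertibility of $\E[xx^T]$ (guaranteed by \Cref{as:high dimensional}) gives $\widetilde{\Sigma}^{-1}\xrightarrow{p}(\E[xx^T])^{-1}$, and Slutsky's theorem applied to the product $\widehat{R}_{\mathcal{R}} = \widetilde{\Sigma}^{-1}\bar{Z}_{\mathcal{R}}$ delivers
\[
\sqrt{n_{\mathcal{R}}}\bigl(\widehat{R}_{\mathcal{R}} - (\E[xx^T])^{-1}\mu_{\mathcal{R}}\bigr) \xrightarrow{d} \mathcal{N}(0, V_{\mathcal{R}}),
\]
which is precisely the stated asymptotic distribution.

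For the consistency of $\widehat{V}_{\mathcal{R}}$, I would first observe that the above also gives $\widehat{R}_{\mathcal{R}}\xrightarrow{p}\beta^* := (\E[xx^T])^{-1}\mu_{\mathcal{R}}$. Expanding $(y-f(x)-x^T\widehat{R}_{\mathcal{R}})^2 xx^T$ into its three terms and applying the LLN to each, combined with continuous mapping in the plug-in $\widehat{R}_{\mathcal{R}}$, identifies the probability limit of $\widehat{M}_{\mathcal{R}}$. After cross-term cancellation using the moment identity $\E[xx^T]\beta^* = \mu_{\mathcal{R}}$, the limit reduces to $\Sigma_{\mathcal{R}}$; sandwiching by $\widetilde{\Sigma}^{-1}$ and invoking continuous mapping once more delivers $\widehat{V}_{\mathcal{R}} \xrightarrow{p} V_{\mathcal{R}}$.

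The main obstacle I anticipate is controlling the cross terms inside $\widehat{M}_{\mathcal{R}}$. Unlike a standard within-region OLS sandwich estimator, the formula uses the global $\widetilde{\Sigma}^{-1}$ inside $\widehat{R}_{\mathcal{R}}$ rather than the restricted second moment $\E[xx^T\mid x\in\mathcal{R}]^{-1}$, so the orthogonality relations that normally collapse the expansion to $\Sigma_{\mathcal{R}}$ must be re-derived carefully. I would handle this by exploiting that $\widetilde{\Sigma}$ is effectively deterministic under $N\gg n$ and that the first-order identity $\E[xx^T]\beta^* = \mu_{\mathcal{R}}$ holds exactly in the limit, so the required cancellations close out in probability as $n\to\infty$.
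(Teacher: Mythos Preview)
Your approach matches the paper's. For asymptotic normality, both of you apply the multivariate CLT to the within-$\mathcal{R}$ average of $x(y-f(x))$ and treat $\widetilde{\Sigma}$ as effectively equal to $\E[xx^T]$ because $N\gg n$; your explicit Slutsky step is simply the rigorous phrasing of what the paper does when it substitutes $\widetilde{\Sigma}=\E[xx^T]$ outright and then reads off the mean and variance. For the consistency of $\widehat{V}_{\mathcal{R}}$, the paper is far terser than you: it writes only that ``folklore results imply that $\widehat{M}_{\mathcal{R}}$ is an unbiased estimator'' of the conditional score variance, whereas you propose a concrete expansion-and-LLN verification.

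One caution on your last paragraph. With the displayed formula $\widehat{M}_{\mathcal{R}}=\tfrac{1}{n_{\mathcal{R}}-1}\sum (y-f(x)-x^T\widehat{R}_{\mathcal{R}})^2xx^T$, the identity $\E[xx^T]\beta^*=\mu_{\mathcal{R}}$ does \emph{not} by itself collapse the cross terms to $-\mu_{\mathcal{R}}\mu_{\mathcal{R}}^T$: the plug-in limit is $\E[(y-f(x)-x^T\beta^*)^2xx^T\mid x\in\mathcal{R}]$, whose correction involves fourth moments of $x$ conditional on $\mathcal{R}$ rather than the outer product $\mu_{\mathcal{R}}\mu_{\mathcal{R}}^T$. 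The paper sidesteps this entirely by also declaring $\widehat{M}_{\mathcal{R}}=\widehat{\Cov}\bigl(X_{\mathcal{R}}^T(Y_{\mathcal{R}}-f(X_{\mathcal{R}}))\bigr)$, i.e., the ordinary sample covariance of the score, for which the ``folklore'' claim is immediate. If you adopt that reading your argument goes through trivially; the obstacle you flag is really diagnosing a tension between the paper's two expressions for $\widehat{M}_{\mathcal{R}}$, not a defect in the proof strategy.
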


\paragraph{Splitting Function for Linear Regression.} To enhance the precision in estimating $\theta^{(k)}$, we partition the feature space adaptively. Let $\mathcal{R}\subseteq\mathcal{X}$ be a region corresponding to an internal node in the tree. For each candidate splitting feature coordinate $j\in\{1,\dots,d\}$ and candidate threshold $s^{(j)}$, chosen from a pre-specified set \(\mathcal{S}^{(j)},\) (see \Cref{sec:semi-supervised_tree})
we partition $\mathcal{R}$ into two disjoint subregions \(\mathcal{R}_{\text{left}} = \{x\in\mathcal{R} : x^{(j)} \le s^{(j)}\},\) \(\mathcal{R}_{\text{right}} = \{x\in\mathcal{R} : x^{(j)} > s^{(j)}\}.\)

To get unbiased estimation of the $k$th coefficient with lower variance, we use the $k$th diagonal element of the covariance matrices, $\widehat{V}^{(k,k)}_{\mathcal{R}_{\text{left}}},\widehat{V}^{(k,k)}_{\mathcal{R}_{\text{right}}}$ and we define the \emph{Variance of Mixture of Splits for Linear Regression} (VMS$_{\mathrm{LR}}$) criterion for the candidate split $(j,s^{(j)})$ as
\[
\text{VMS}_{\mathrm{LR}}(j,s^{(j)},\mathcal{R}) \coloneqq p_{\mathcal{R}_{\text{left}}}^2\,\frac{\widehat{V}_{\mathcal{R}_{\text{left}}}^{(k,k)}}{n_{\text{left}}} \;+\; p_{\mathcal{R}_{\text{right}}}^2\,\frac{\widehat{V}_{\mathcal{R}_{\text{right}}}^{(k,k)}}{n_{\text{right}}}.
\]
The candidate split that minimizes this criterion is selected:
\[
(j^*,s^{(j^*)}) = \argmin_{\substack{j\in\{1,\dots,d\}\\ s^{(j)}\in \mathcal{S}^{(j)}}} \text{VMS}_{\mathrm{LR}}(j,s^{(j)},\mathcal{R}).
\]
This choice of splitting aims to create subregions in which the uncertainty in estimating $\theta^{(k)}$ is minimized, thereby contributing to tighter overall confidence intervals.

\paragraph{Termination Criterion.}  
We terminate the recursive splitting when either the tree reaches a predetermined maximum depth \(D\) or when the number of labeled observations in a region falls below a minimum threshold \(m\). At termination, the feature space is partitioned into \(L\) leaf regions, denoted by \(\{\mathcal{R}_\ell\}_{\ell=1}^{L}\). The final estimator for the \(k\)th coordinate is defined as
\[
\widehat{\theta}^{(k)}_T \coloneqq \widetilde{\theta}^{(k)} \;+\; \sum_{\ell=1}^L p_{\mathcal{R}_\ell}\, \widehat{R}_{\mathcal{R}_\ell}^{(k)}.
\]

\noindent The pseudo-code of this estimator is presented in \Cref{alg:semi-supervised_tree_linear_regression} in \Cref{appx:algorithms}.

\subsubsection{Constructing Confidence Intervals}
We define the empirical variance estimator as \(\widehat{\sigma}^2 = \sum_{\ell=1}^L p_{\mathcal{R}_\ell^2}\, \frac{\widehat{V}_{\mathcal{R}_\ell}^{(k,k)}}{n_{\mathcal{R}_\ell}}.\) Using this variance estimate, we construct a standard Wald-type \citep{Vaart1998} \(1-\alpha\) confidence interval for the \(k\)th coefficient:
\[
\left[\,
\widehat{\theta}^{(k)}_T - z_{1-\alpha/2}\,\widehat{\sigma}, \quad
\widehat{\theta}^{(k)}_T + z_{1-\alpha/2}\,\widehat{\sigma}
\,\right].
\]

\begin{theorem}[Asymptotic Coverage]
Assume that
\begin{enumerate}
  \item \(\{(x_i,y_i)\}_{i=1}^n\) are i.i.d. from \(P_{X,Y}\) and \(\{\widetilde{x}_j\}_{j=1}^N\) are i.i.d. from \(P_X\) with \(N\gg n\),
  \item the tree estimator uses a fixed number \(L\) of leaves,
  \item Assumption~\ref{as:high dimensional} holds.
\end{enumerate}
Then, as \(n\to\infty\),
\[
  \lim_{n\rightarrow\infty}\Pr\left[
    \theta^{(k)} \in 
      \left[\,
\widehat{\theta}^{(k)}_T - z_{1-\alpha/2}\,\widehat{\sigma}, \quad
\widehat{\theta}^{(k)}_T + z_{1-\alpha/2}\,\widehat{\sigma}
\,\right]
  \right]
  = 1-\alpha.
\]
\end{theorem}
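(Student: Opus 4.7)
The strategy is to show that $\sqrt{n}(\widehat{\theta}^{(k)}_T - \theta^{(k)})$ converges to a centered normal with variance consistently estimated by $n\widehat{\sigma}^2$, after which asymptotic $(1-\alpha)$ coverage follows from a standard Wald/Slutsky argument. I would decompose
\[
\widehat{\theta}^{(k)}_T - \theta^{(k)} = \bigl(\widetilde{\theta}^{(k)} - \E[xx^T]^{-1}\E[xf(x)]^{(k)}\bigr) + \sum_{\ell=1}^{L} p_{\mathcal{R}_\ell}\,\widehat{R}_{\mathcal{R}_\ell}^{(k)} - \E[xx^T]^{-1}\E[x(y-f(x))]^{(k)}.
\]
The first parenthesized term has order $O_{\mathbb{P}}(N^{-1/2}) = o_{\mathbb{P}}(n^{-1/2})$ since $N \gg n$, so it can be absorbed into a Slutsky remainder. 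The rest of the work is to control the second sum.

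The main technical obstacle is that the partition $\{\mathcal{R}_\ell\}_{\ell=1}^L$ is \emph{data-adaptive}: both the chosen coordinates $j^*$ and thresholds $s^{(j^*)}$ depend on $\mathcal{L}$. I would handle this by arguing that the tree structure stabilizes: the candidate thresholds are midpoints of empirical quantiles of the unlabeled sample, and by the Glivenko--Cantelli theorem these converge to population quantile midpoints as $N\to\infty$; meanwhile, the splitting criterion $\text{VMS}_{\mathrm{LR}}$ is a continuous functional of empirical moments, so its $\argmin$ (assumed unique after a mild tie-breaking rule) converges almost surely to a deterministic choice. Conditioning on this limiting partition $\{\mathcal{R}_\ell^\ast\}$, the remaining analysis reduces to a non-adaptive argument, mirroring the strategy used for the mean case in \Cref{thm:tree-ci-coverage}.

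With the partition frozen, the labeled observations falling in distinct leaves are independent. \Cref{thm:asymptotic distribution for regression} gives, for each $\ell$,
\[
\sqrt{n_{\mathcal{R}_\ell}}\Bigl(\widehat{R}_{\mathcal{R}_\ell}^{(k)} - \E[xx^T]^{-1}\E[x(y-f(x))\mid x\in\mathcal{R}_\ell^\ast]^{(k)}\Bigr) \xrightarrow{d} \mathcal{N}\!\bigl(0,\,V_{\mathcal{R}_\ell^\ast}^{(k,k)}\bigr),
\]
with the limiting normals mutually independent across $\ell$. Using $n_{\mathcal{R}_\ell}/n \to p_{\mathcal{R}_\ell^\ast}$ and the tower identity $\sum_\ell p_{\mathcal{R}_\ell^\ast}\,\E[x(y-f(x))\mid x\in\mathcal{R}_\ell^\ast] = \E[x(y-f(x))]$, a linear combination of these limits yields
\[
\sqrt{n}\bigl(\widehat{\theta}^{(k)}_T - \theta^{(k)}\bigr) \xrightarrow{d} \mathcal{N}\!\left(0,\; \sum_{\ell=1}^{L} \frac{p_{\mathcal{R}_\ell^\ast}^{2}}{n_{\mathcal{R}_\ell^\ast}/n}\, V_{\mathcal{R}_\ell^\ast}^{(k,k)}\right).
\]
\Cref{as:high dimensional} and the consistency of $\widehat{V}_{\mathcal{R}}$ stated in \Cref{thm:asymptotic distribution for regression}, together with $p_{\mathcal{R}_\ell}/\widehat{p}_{\mathcal{R}_\ell} \xrightarrow{\mathbb{P}} 1$ (recall \Cref{as:mass}), imply $n\widehat{\sigma}^2$ converges in probability to this limiting variance.

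The conclusion then follows by applying Slutsky's theorem to $\sqrt{n}(\widehat{\theta}^{(k)}_T - \theta^{(k)})/\sqrt{n\widehat{\sigma}^2} \xrightarrow{d} \mathcal{N}(0,1)$ and inverting the standard normal quantile $z_{1-\alpha/2}$. The hardest step, as noted, is rigorously justifying that the data-adaptive partition may be replaced by its almost-sure deterministic limit; the cleanest route is to show uniform convergence of $\text{VMS}_{\mathrm{LR}}(\cdot,\cdot,\mathcal{R})$ over the finite set of candidate splits (using that there are only $O(d\cdot n)$ of them, combined with uniform laws of large numbers) and then invoke argmax continuity.
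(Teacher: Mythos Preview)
Your overall route matches the paper's proof sketch: establish that the estimator is centered at $\theta^{(k)}$ via the tower identity $\sum_\ell p_{\mathcal{R}_\ell}\,\E[x(y-f(x))\mid x\in\mathcal{R}_\ell] = \E[x(y-f(x))]$, invoke \Cref{thm:asymptotic distribution for regression} for the per-leaf normal limits, use zero cross-leaf covariance, and finish with consistency of $\widehat{\sigma}^2$ plus Slutsky. The paper's sketch is terser---it simply verifies $\E[\widehat{\theta}_T^{(k)}]=\theta^{(k)}$ by exactly your tower calculation and then cites \Cref{thm:asymptotic distribution for regression} together with $\Cov[\widehat{R}^{(k)}_{\mathcal{R}_\ell},\widehat{R}^{(k)}_{\mathcal{R}_{\ell'}}]=0$ for variance consistency---so your plan is essentially a fleshed-out version of the same argument.

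Where you go beyond the paper is in your explicit handling of the data-adaptive partition. The paper's proof sketch for this theorem ignores the issue entirely, treating $\{\mathcal{R}_\ell\}$ as fixed. Your proposed stabilization argument (uniform convergence of $\text{VMS}_{\mathrm{LR}}$ over the finite candidate set plus argmin continuity) is a legitimate route, though as you note it leans on a uniqueness assumption for the limiting minimizer. An alternative, closer in spirit to what the paper actually does for the mean-estimation analogue in \Cref{thm:tree-ci-coverage}, is to union-bound over the $O((dn)^L)$ possible trees rather than argue that a single limiting tree is selected; that avoids the uniqueness issue at the cost of a small finite-sample correction term. Either approach would close the gap the paper's sketch leaves open.
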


\begin{proof}[Proof sketch]
We prove that the estimator \(\widehat{\theta}^{(k)}_T\) is unbiased for the regression coefficient \(\theta^{(k)}\) and follows a normal distribution. Moreover, the variance estimator \(\widehat{\sigma}^2\) is shown to be consistent for the true variance.

Observe that
\begin{align*}
    \E[\widehat{\theta}_T^{(k)}] = &\Bigr(\E[xx^T]\Bigr)^{-1}\E[xf(x)] + \sum_{\ell=1}^Lp_{\mathcal{R}_\ell}\E[\widehat{R}^{(k)}_{\mathcal{R}_\ell}] \\
    = & 
    \Bigr(\E[xx^T]\Bigr)^{-1}\E[xf(x)] + \sum_{\ell=1}^Lp_{\mathcal{R}_\ell}\Bigr(\E[xx^T]\Bigr)^{-1}\E\bigl[x\,(y-f(x))\mid x\in\mathcal{R}\bigr] \\
    = & \Bigr(\E[xx^T]\Bigr)^{-1}\left(\E[xf(x)] + \sum_{\ell=1}^L \E\bigl[x\,(y-f(x))\land x\in\mathcal{R}\bigr]\right)\\
    = & \Bigr(\E[xx^T]\Bigr)^{-1}\left(\E[xf(x)] + \E\bigl[x\,(y-f(x))\bigr]\right) \\
    = & \Bigr(\E[xx^T]\Bigr)^{-1}xy\\ 
    = & \theta^{(k)},
\end{align*}
where in the second equality we used \Cref{thm:asymptotic distribution for regression}. The consistency of the variance estimator follows directly by \Cref{thm:asymptotic distribution for regression} and the fact that for two different leaves $\ell\neq\ell'$, $\Cov[\widehat{R}^{(k)}_{\mathcal{R}_\ell},\widehat{R}^{(k)}_{\mathcal{R}_\ell'}]=0$.\footnote{The proof follows similarly to \Cref{claim: cond independent}.}
\end{proof}

\section{Experimental Performance of PART}\label{sec:experiments}
We test the experimental performance of \partest estimator on a variety real-world datasets. We utilize \ppiplus as the baseline and omit classical PPI, as PPI++ consistently outperforms PPI. In every experiments, we find that the \partest estimator consistently delivers tighter confidence intervals compared to \ppiplus. In \Cref{sec:deforest}, we demonstrate that increasing the depth of the tree beyond a depth of one can yield further improvements without overfitting. 

\textbf{Methodology.} For our testing methodology, we assume that each dataset has $M$ i.i.d. samples consisting of features, labels, and predicted labels. For various fixed values of $n$, we randomly obfuscate $N=M-n$ of the labels to produce $N$ pieces of unlabeled data and $n$ pieces of labeled data. We subsequently run the \partest estimator using a manually tuned depth limit on the labeled and unlabeled data, recording the width of the confidence interval and whether the interval contained the true mean. We repeat the previous procedure 100 times for each fixed $n$ and compute the average confidence interval width and the coverage probability. We produce two graphs to show how the width of the intervals and coverage probability changes as a function of $n$. For the interval width graph, we include one standard deviation bars on each measured $n$ value. We show the nominal coverage probability with a constant dashed grey line.

\subsection{Predicting Mean Deforestation Rate from Satellite Imagery}\label{sec:deforest}
Deforestation in the Amazon rainforest disrupts local ecosystems and releases stored carbon \citep{Bullock2020,Lapola2023}. This motivates the remote monitoring of forest cover through  satellite imagery \citep{Sexton2013}. We seek to measure the mean deforestation rate in the forest cover data provided by \citep{Bullock2020} using the \partest estimator. The dataset consists of 1596 observations with labels corresponding to whether deforestation in a particular tract of land is evident. We use predictions generated from a histogram-based gradient-boosted tree trained by \citep{ppi}. \Cref{fig:forest} demonstrates that trees with non-zero depth can result in smaller confidence intervals while still remaining valid. 

\begin{figure}[H]
        \centering
        \begin{minipage}{0.5\textwidth}
        \centering
        \includegraphics[width=\linewidth]{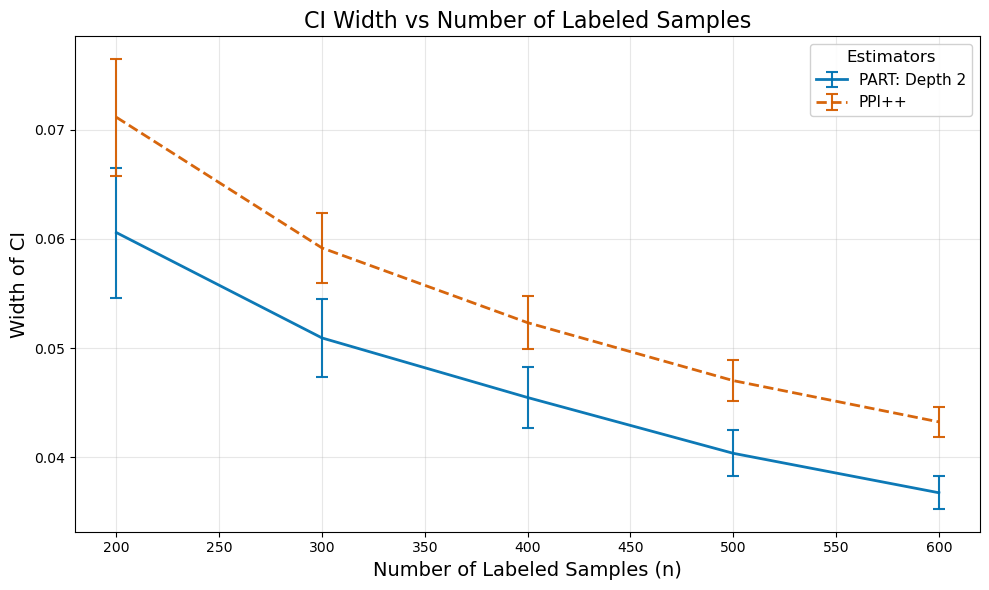}
        \end{minipage}\hfill
        \begin{minipage}{0.5\textwidth}
            \centering
            \includegraphics[width=\linewidth]{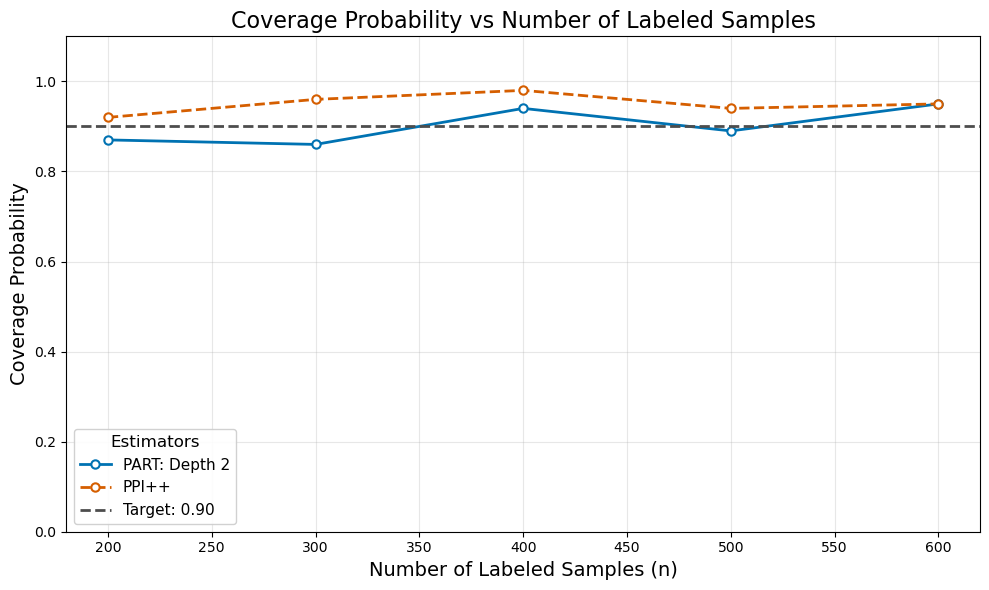}
        \end{minipage}
        \caption{Estimating the mean deforestation rate in the Amazon rainforest from satellite imagery \cite{Bullock2020} (see \Cref{sec:experiments} for testing methodology).}
        \label{fig:forest}
\end{figure}

\subsection{Estimating Mean Fraction of Spiral Galaxies}
The Galaxy Zoo 2 project \citep{Willett2013} is an crowd-sourced initiative focused on providing morphological classifications for 304,122 galaxies captured by the Sloan Digital Sky Survey \citep{York2000}. These human-annotated labels serve as key aids in helping scientists understand galaxy evolution (e.g., \citep{Cheung2013}) but require substantial manual effort. To mitigate this, machine learning approaches---such as the convolutional neural network of \citet{Dielman2015}---predict morphology directly from the images. In this experiment, we follow this trend by using an ML model to help estimate the mean fraction of spiral galaxies in the local universe (see \Cref{fig:galaxy}). Our dataset consists of 16,743 total observations where each label is the estimated probability that the galaxy is spiral. We use predictions from a ResNet50 model supplied by \citep{ppi}.
\begin{figure}[H]
        \centering
        \begin{minipage}{0.5\textwidth}
        \centering
        \includegraphics[width=\linewidth]{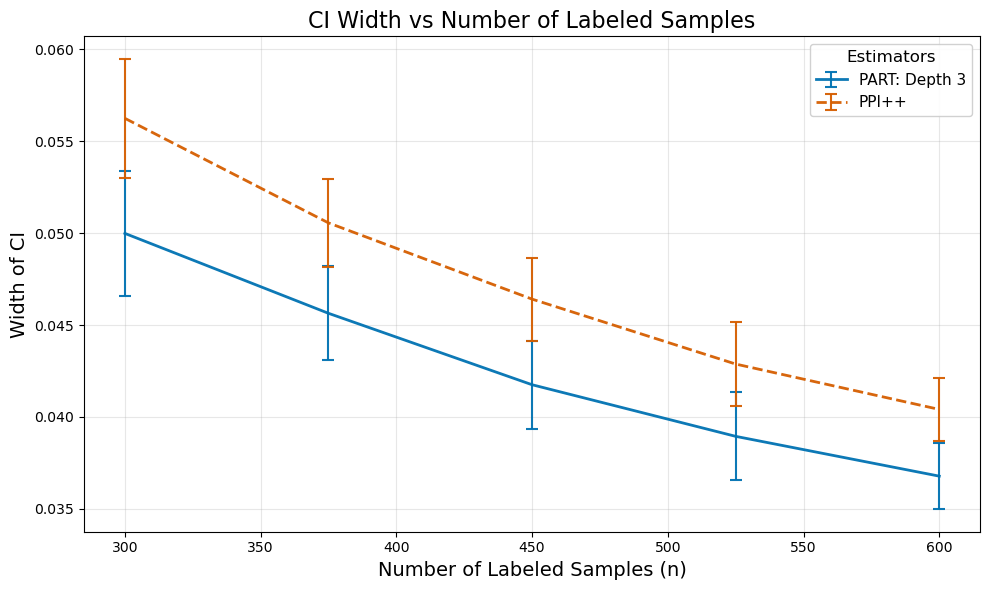}
        \end{minipage}\hfill
        \begin{minipage}{0.5\textwidth}
            \centering
            \includegraphics[width=\linewidth]{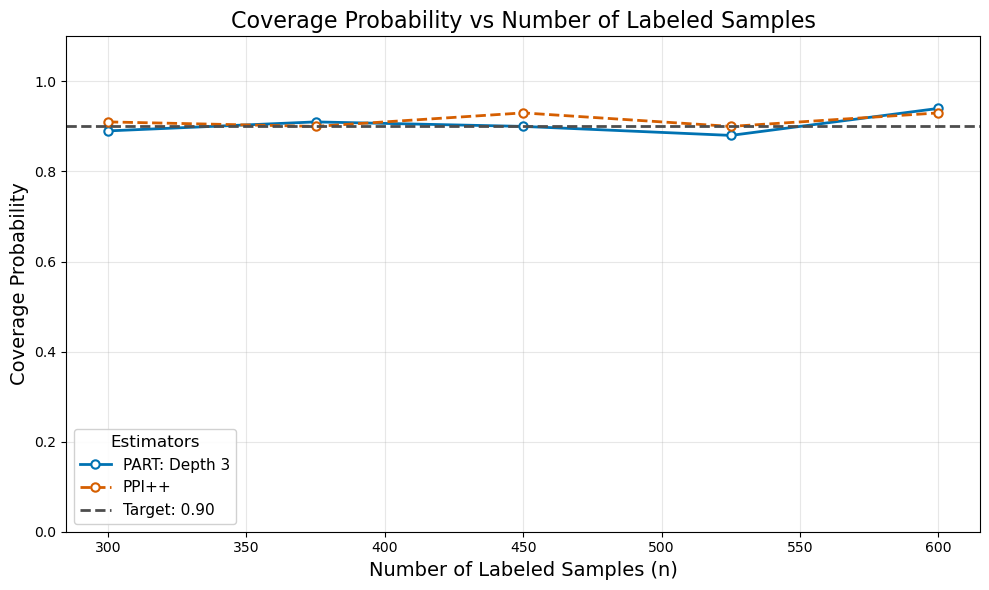}
        \end{minipage}
        \caption{Estimating the mean fraction of spiral galaxies in the universe using Galaxy Zoo 2 data \cite{Willett2013} (see \Cref{sec:experiments} for testing methodology).}
        \label{fig:galaxy}
\end{figure}

\subsection{Measuring Mean Quality of Portuguese Wine}
Vinho Verde wine originates from the Minho Province of Portugal, where it is tightly regulated under Denominação de Origem Controlada (DOC) system. To deter counterfeiting, regulators use physicochemical and sensory tests to assess wine quality. \citet{wine_data} provide a dataset of 1280 wine samples, each with 11 features (pH, sulphates, alchol content, etc) gathered via physicochemical tests and a quality score on a 0-10 scale. This dataset has become a standard benchmark for ensemble tree methods (e.g., \citep{Ngo2022}). Our objective is to estimate the average wine quality on this dataset using the \partest estimator. For predictions, we train a Random Forest Classifier on 20\% of the data and generate predicted scores for the remaining 80\% of the data. We display the resulting confidence inteval widths and coverage probabilities in \Cref{fig:wine}.
\begin{figure}[H]
        \centering
        \begin{minipage}{0.5\textwidth}
        \centering
        \includegraphics[width=\linewidth]{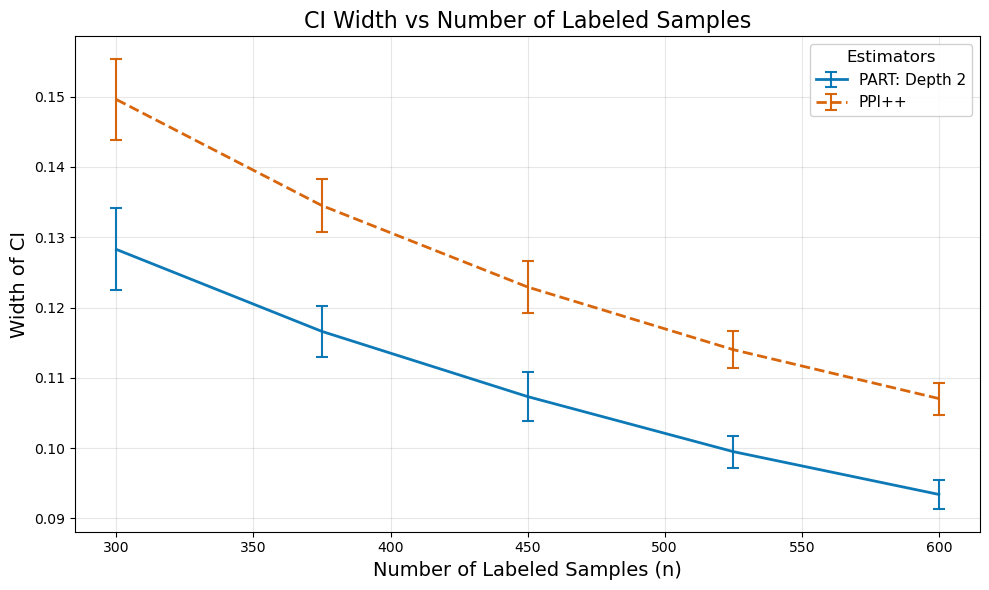}
        \end{minipage}\hfill
        \begin{minipage}{0.5\textwidth}
            \centering
            \includegraphics[width=\linewidth]{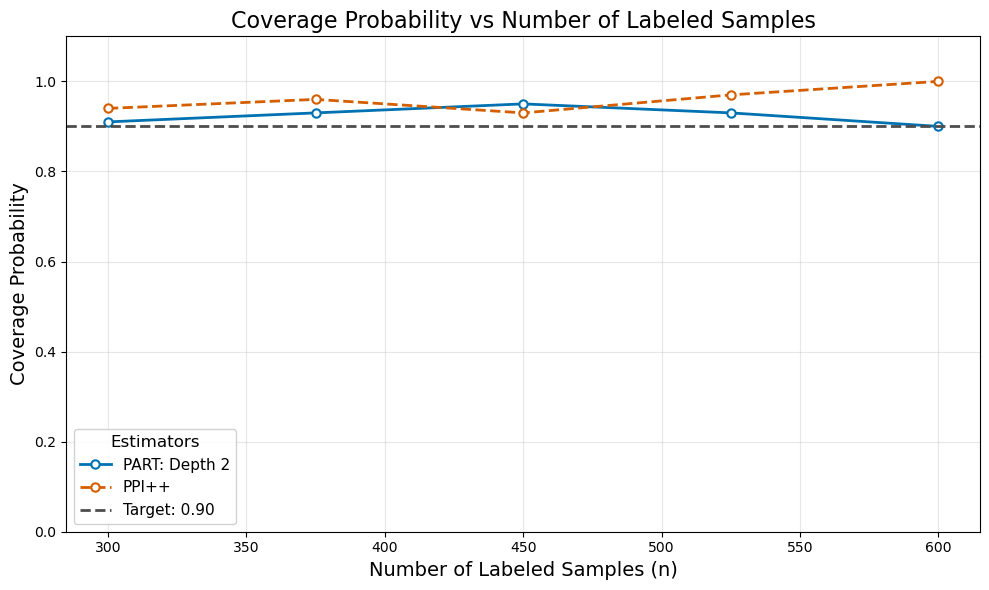}
        \end{minipage}
        \caption{Estimating the mean quality of Portugese Vinho Verde wine \cite{wine_data} (see \Cref{sec:experiments} for testing methodology).}
        \label{fig:wine}
\end{figure}

\subsection{Estimating Average Median House Price using Census Data}
California's housing market is shaped by many factors, including its coastal geography and strict zoning laws. One source of information about the housing market comes from census data. \citet{Pace1997} initially used the 1990 California Census to form a dataset consisting of 16,512 instances with 9 characteristics (population, number of households, total rooms, etc.) and 1 outcome variable indicating the median house value in the block (a geographical unit used by the Census Bureau). Many textbooks (e.g., \citep{housing_data}) use this dataset as a standard ML benchmark. Our goal is to use this dataset to estimate the average median house price in a block group in California (see \Cref{fig:house} for results). For predictions, we train a Random Forest Regressor on 20\% of the data and generate predicted scores for the remaining 80\% of the data.
\begin{figure}[H]
        \centering
        \begin{minipage}{0.5\textwidth}
        \centering
        \includegraphics[width=\linewidth]{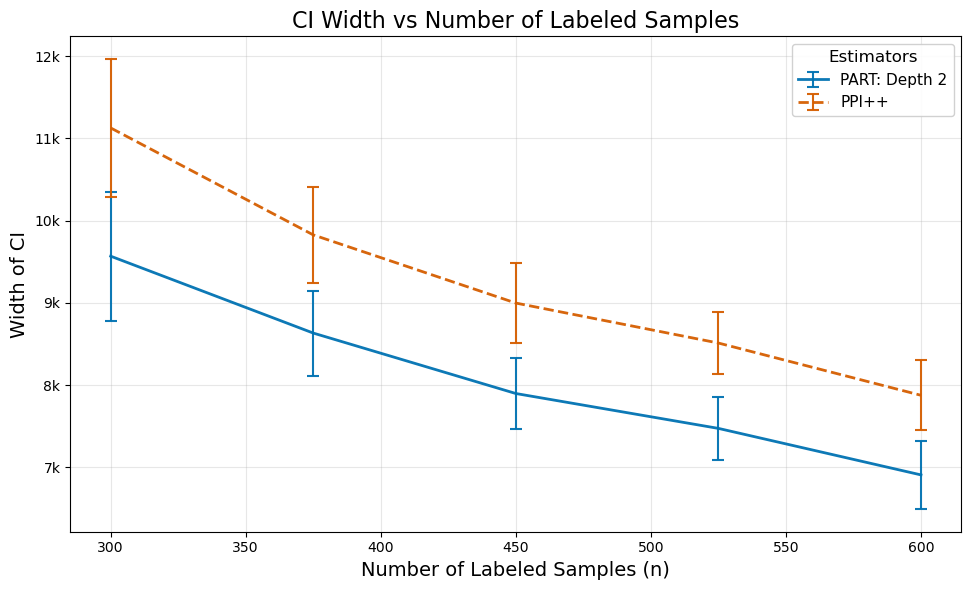}
        \end{minipage}\hfill
        \begin{minipage}{0.5\textwidth}
            \centering
            \includegraphics[width=\linewidth]{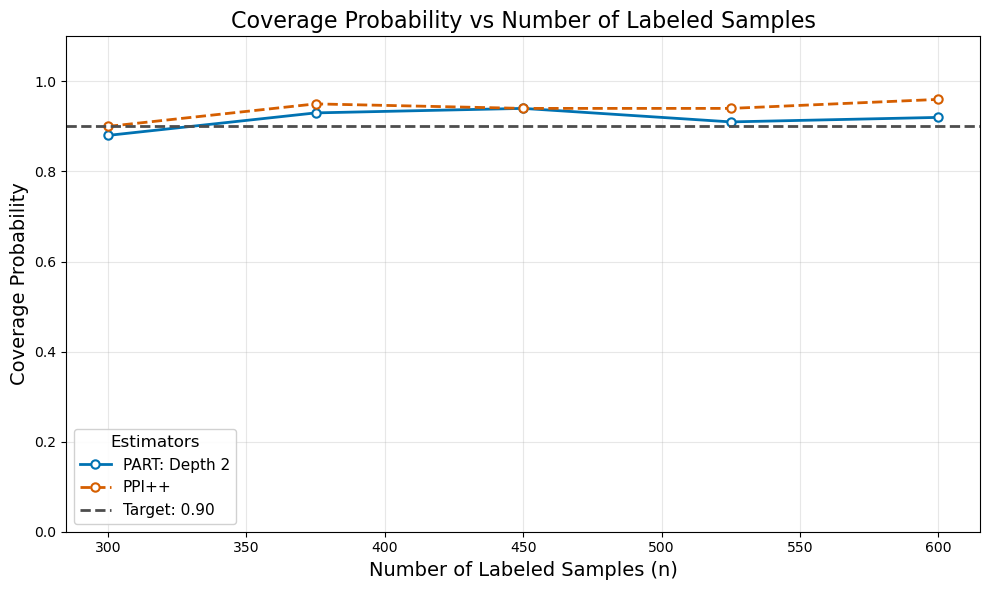}
        \end{minipage}
        \caption{Estimating the average median house price in a census block using California census data \cite{housing_data} (see \Cref{sec:experiments} for testing methodology).}
        \label{fig:house}
\end{figure}

\subsection{Estimating Odds Ratio of Protein Structures via Alphafold}
Since the groundbreaking result of \cite{Jumper2021}, Alphafold has quickly become an essential tool used by researchers to allow the study of the structure of proteins on a mass scale (e.g., see \citep{Wilson2022, Tunyasuvunakool2021, Omidi2024} among many others). Predictions from Alphafold were recently used by \citet{Bludau2022} to understand whether intrinsically disordered regions (IDRs) on proteins experience phosphorylation at a greater rate. In this task, we use data from \citep{UniProt_Consortium2015} consisting of 10,802 samples and apply it to the \partest estimator with the goal of predicting the odds ratio for a protein being phosphorylated and belonging to an IDR (see \Cref{fig:alphafold} for results). \footnote{For the precise derivation of the confidence interval for the odds ratio, which requires use of the delta method, see \citep{ppi++}.}  We utilize the Alphafold-based predictions from \citep{ppi}.

\begin{figure}[H]
        \centering
        \begin{minipage}{0.5\textwidth}
        \centering
        \includegraphics[width=\linewidth]{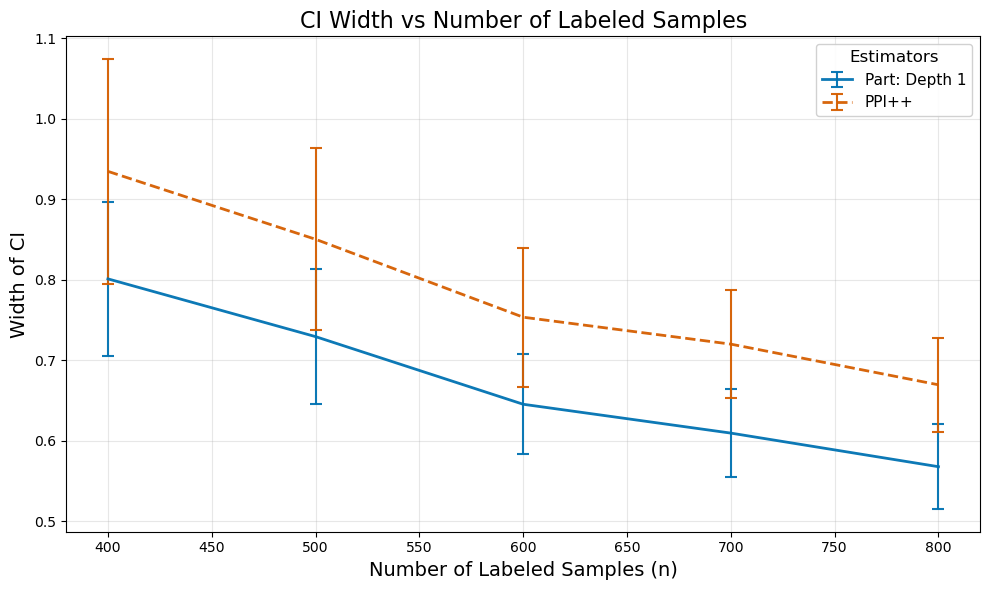}
        \end{minipage}\hfill
        \begin{minipage}{0.5\textwidth}
            \centering
            \includegraphics[width=\linewidth]{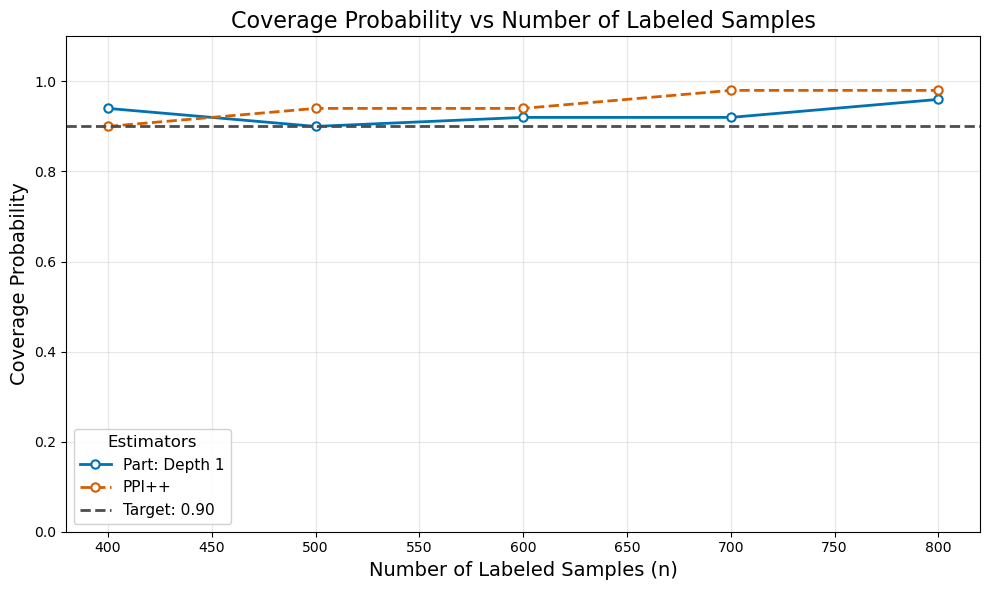}
        \end{minipage}
        \caption{Estimating the odds ratio of a protein being phosphorylated and belonging to an intrinsically disordered region using Alphafold predictions \cite{UniProt_Consortium2015} (see \Cref{sec:experiments} for testing methodology).}
        \label{fig:alphafold}
\end{figure}

\subsection{Estimating Linear Relationship between Income and Physical Characteristics on Census Data}
A persistent correlation between sex and earnings is well documented in the economics literature \citep{Blau2017}. We use census data from \citet{Ding2021} to estimate a linear relationship between sex (M/F) and individual income. The dataset consists of 380,091 observations with two features, sex (M/F) and age (0-99), and a label for the person's income. For predictions, we utilize an XGBoost model trained by \citep{ppi++}. We graph the confidence interval width and coverage probabilities in \Cref{fig:lrcensus}.

\begin{figure}[H]
        \centering
        \begin{minipage}{0.5\textwidth}
        \centering
        \includegraphics[width=\linewidth]{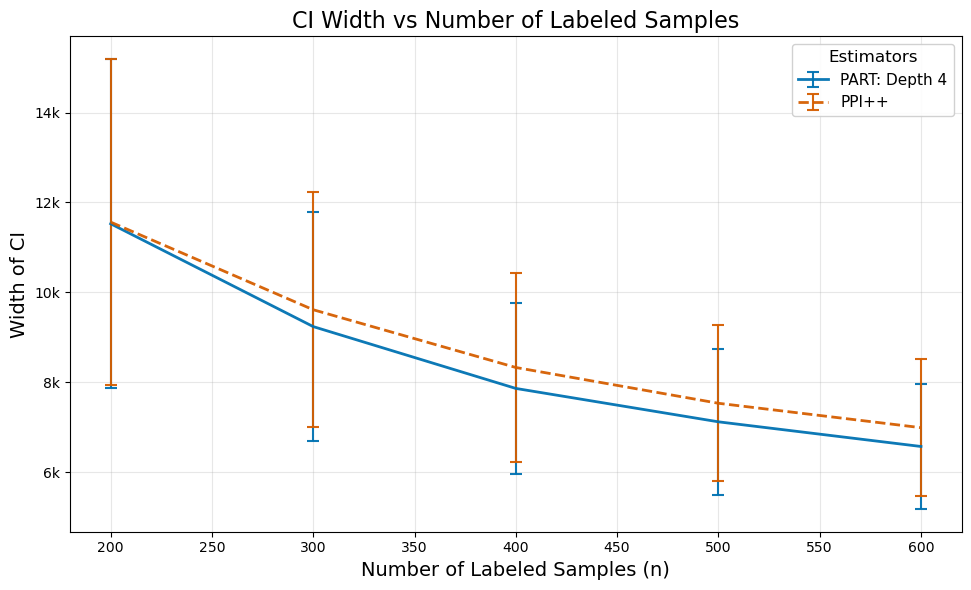}
        \end{minipage}\hfill
        \begin{minipage}{0.5\textwidth}
            \centering
            \includegraphics[width=\linewidth]{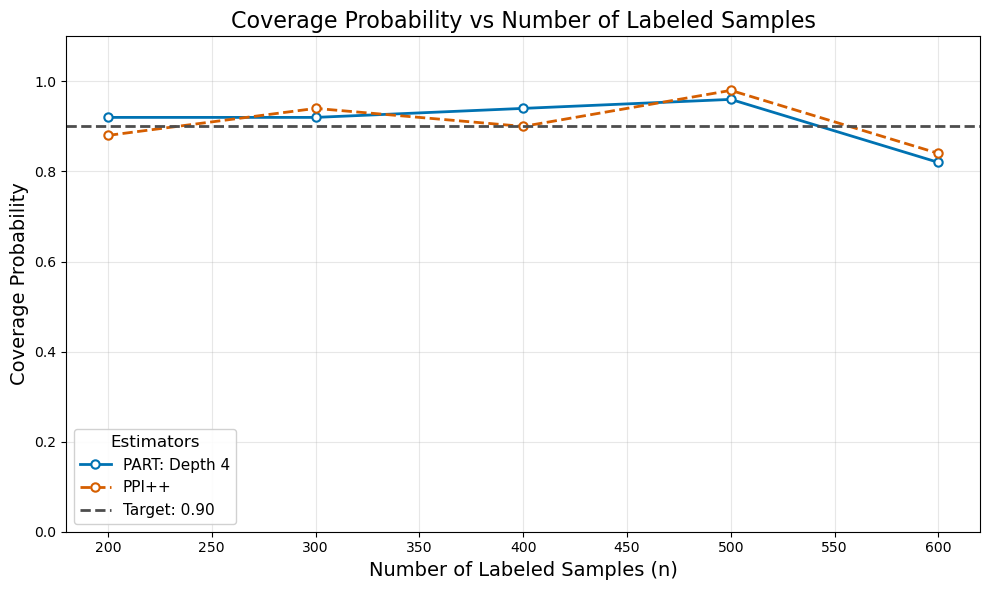}
        \end{minipage}
        \caption{Estimating the linear regression coefficient between income and sex (M/F) on California census data \cite{Ding2021}  (see \Cref{sec:experiments} for testing methodology).}
        \label{fig:lrcensus}
\end{figure}

\section{The PAQ Estimator:\\
\hspace{1em}\parbox[t]{.9\textwidth}{\itshape\normalsize Exploring the Partitioning Limit}}\label{sec:NN-1}
In the previous sections, we built tree-based estimators that reduce variance by partitioning the feature space into homogeneous regions and computing local bias estimates for each region. Like \ppi and \ppiplus, the convergence rates of these estimators are still limited by the number of labeled points that fall into each region, which unavoidably forces a $O(n^{-1})$ convergence rate for the variance. 

In this section, we analyze the \parq estimator by examining the behavior of the \partest estimator in the limiting scenario where the tree depth grows without bound. In particular, as the depth increases, the diameter of each leaf region shrinks until it eventually contains only two labeled points. Under the assumption that the residual function is deterministic and Lipschitz (see \Cref{as:nn}), we show that within these small regions, the debiasing term can be expressed as an integral over the feature space. This structure allows us to apply numerical quadrature techniques on the labeled data to approximate the integral efficiently. As a consequence, the \parq estimator is able to circumvent the $O(n^{-1})$ convergence rate for the variance that traditionally appears in univariate mean estimation.

Henceforth, we focus on the regime in which the response variable is deterministic, and the residual function $r(x) = y(x)-f(x)$ is smooth (see \Cref{as:nn} below). 
\begin{assumption}\label{as:nn}
    We assume that the residual function $r: \R \rightarrow \R$ is a deterministic, $C^2$ function satisfying that $\sup_x|r'(x)| \leq L_1$ and $\sup_x|r''(x)| \leq L_2$.
\end{assumption}
As before, we assume that we have a small set of labeled data $\calL = (x_i,y_i)_{i=1}^n$ drawn from a joint distribution $P_{X,Y}$ and a large set of unlabeled data $\calU = (\widetilde{x}_i)_{i=1}^N$ drawn from the marginal $P_{X}$. For each $\widetilde{x}_i \in \calU$, we define its nearest-neighbor in $\calL$ by 
\[
    h(\widetilde{x}_i) = \argmin_{x \in \calL} |\widetilde{x}_i-x|.
\]
The  \parq estimator then adjusts the model output $f(\widetilde{x}_i)$ by the residual of its nearest labeled neighbor: 
\[
\mu_{PAQ} = \frac{1}{N}\sum_{i=1}^N f(\widetilde{x}_i) + r(h(\widetilde{x}_i)).
\] 
Intuitively, when $r$ is smooth, $r(\widetilde x_i)\approx r(h(\widetilde x_i))$,the \parq estimator utilizes more information than a plain average of $r(x_i)$. The key idea is to approximate the residual at an unlabeled point using the residual of its nearest labeled neighbor, thereby exploiting the smooth structure of the residual function without requiring explicit partitioning. In the following, we characterize the bias and variance of this estimator when the marginal $P_X$ is a standard uniform distribution. We include a detailed proof of this theorem in \Cref{app:nn_uni}. 

\begin{restatable}[PAQ Estimator, Degree-One Interpolation]{theorem}{parqmain}\label{thm:nn_uni}
    Under \Cref{as:nn} and $P_X = \mathcal{U}_{[0,1]}$, it holds that 
    \[
    |\E[\mu_{PAQ}] - \E[Y]| = O\left(\frac{1}{n^2}\right) \quad \text{and} \quad Var(\mu_{NN}) = O\left( \frac{1}{N} + \frac{1}{n^4}\right).
    \]
\end{restatable}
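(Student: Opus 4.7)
The plan is to decompose the error by conditioning on the labeled set $\calL$, which separates the $N$-dependent fluctuations from an $n$-dependent quadrature error. By the law of total variance,
$\Var[\mu_{PAQ}] = \E_{\calL}[\Var[\mu_{PAQ}\mid\calL]] + \Var_{\calL}[\E[\mu_{PAQ}\mid\calL]]$.
Given $\calL$, the map $h:[0,1]\to\calL$ is deterministic, so $\{f(\widetilde x_i)+r(h(\widetilde x_i))\}_{i=1}^N$ are i.i.d.\ and bounded (by \Cref{as:nn} and the compactness of $[0,1]$), and the conditional variance contributes the $O(1/N)$ term immediately. Writing $\E[\mu_{PAQ}\mid\calL] = \E[f(X)] + Z_{\calL}$ with $Z_{\calL}:=\int_0^1 r(h_{\calL}(x))\,dx$, and noting that $\E[Y]=\E[f(X)]+\int_0^1 r$, both the bias $\E_{\calL}[Z_{\calL}-\int r]$ and the residual variance $\Var_{\calL}[Z_{\calL}]$ reduce to controlling the random quadrature error $Z_{\calL}-\int r$.

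Sorting the labeled points as $x_{(1)}<\cdots<x_{(n)}$, the Voronoi cell of $x_{(i)}$ is $[m_{i-1},m_i]$ with midpoints $m_i=(x_{(i)}+x_{(i+1)})/2$ and boundaries $m_0=0,\,m_n=1$, so $Z_{\calL}=\sum_{i=1}^n r(x_{(i)})(m_i-m_{i-1})$ is a midpoint-like quadrature rule. I would Taylor expand $r$ to second order about $x_{(i)}$ on each cell; with the spacing notation $D_i:=x_{(i+1)}-x_{(i)}$, the per-cell error decomposes into a first-order piece proportional to $r'(x_{(i)})(D_i^2-D_{i-1}^2)$ and a second-order remainder bounded by $L_2(D_i^3+D_{i-1}^3)$. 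The crucial step is a summation-by-parts (Abel) identity applied to the first-order pieces, which collapses them into a small boundary contribution plus telescoped increments $(r'(x_{(i)})-r'(x_{(i+1)}))D_i^2$, each of magnitude at most $L_2 D_i^3$ by the Lipschitz property of $r'$. The net result is $|Z_{\calL}-\int r| \le C(D_0^2+D_n^2 + \sum_i D_i^3)$ for a constant depending only on $L_1,L_2$.

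To finish, I would invoke that under $P_X=\mathcal{U}_{[0,1]}$ the spacings $(D_0,\ldots,D_n)$ are exchangeable Dirichlet$(1,\ldots,1)$ variables, whose moments satisfy $\E[D_i^k]=k!\,n!/(n+k)!=\Theta(n^{-k})$. Routine calculations then give $\E[\sum_i D_i^3]=\Theta(n^{-2})$ and, by summing the $O(n^2)$ cross terms $\E[D_i^3 D_j^3]=\Theta(n^{-6})$, also $\E[(\sum_i D_i^3)^2]=O(n^{-4})$; the boundary contributions $D_0^2,\,D_n^2$ are controlled analogously. Plugging these into the bound from the previous step yields the bias estimate $|\E_{\calL}[Z_{\calL}]-\int r|=O(n^{-2})$ and, via $\Var_{\calL}[Z_{\calL}]\le\E[(Z_{\calL}-\int r)^2]$, the variance bound $O(n^{-4})$; combining with the conditional $O(1/N)$ term gives the theorem.

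The main obstacle will be the first-order Taylor contribution: a naive cell-wise bound gives only $O(\sum_i D_i^2)=O(n^{-1})$ rather than $O(n^{-2})$, which would destroy the advertised rate. It is precisely the summation-by-parts cancellation---the discrete analogue of why midpoint quadrature enjoys an $O(h^2)$ rather than $O(h)$ error---that, combined with the $C^2$ smoothness of $r$ through $L_2$, rescues the classical $O(n^{-2})$ gain. Once that telescoping is in hand, everything else reduces to straightforward moment calculations for uniform spacings.
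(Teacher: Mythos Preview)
Your proposal is correct and shares the paper's overall skeleton: condition on $\calL$, use the law of total variance to split off the $O(1/N)$ term, identify $Z_{\calL}-\int_0^1 r$ as a random quadrature error, and bound its first two moments via Dirichlet spacing moments $\E[D_i^k]=\Theta(n^{-k})$.

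The one substantive difference is how the first-order Taylor terms are dispatched. You work directly with the Voronoi form $Z_{\calL}=\sum_i r(x_{(i)})(m_i-m_{i-1})$, where the per-cell first-order error $\propto r'(x_{(i)})(D_i^2-D_{i-1}^2)$ does \emph{not} vanish cell-by-cell; you then recover the $O(\sum_i D_i^3)$ bound via an Abel summation together with $|r'(x_{(i+1)})-r'(x_{(i)})|\le L_2 D_i$. The paper instead first rewrites $Z_{\calL}$ algebraically as the trapezoidal sum $\sum_i \Delta_i\,\tfrac{r(X_i)+r(X_{i+1})}{2}$ plus boundary terms (its \Cref{lem:nn_uni}); on each interval $[X_i,X_{i+1}]$ the first-order Taylor expansions of the trapezoid and of $\int_{X_i}^{X_{i+1}} r$ coincide automatically, giving $S_i-I_i=O(\Delta_i^3)$ per interval with no cross-interval telescoping needed (its \Cref{lem:taylor}). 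Your summation-by-parts and the paper's Voronoi-to-trapezoid regrouping are two presentations of the same algebraic cancellation. The paper's route is marginally cleaner in that the $O(h^3)$ trapezoidal error is classical and localizes interval-by-interval; your route has the virtue of making explicit that it is precisely the $C^2$ hypothesis (the Lipschitz bound on $r'$) that upgrades $O(\sum D_i^2)$ to $O(\sum D_i^3)$.
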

\paragraph{High-Level Proof Overview.} Observe that the $f(\widetilde{X}_i)$ terms are i.i.d., so by the Central Limit Theorem the variance of their average is $O(N^{-1})$. Consequently, the dominant contributions of both the bias and variance come from estimating $\E[r(x)]$. We show that $\mu_{\parq}$ effectively implements a stochastic trapezoidal rule approximating $r$, where the randomness comes from the labeled features. Based on the analysis of \cite{Yakowitz1978}, we control the interpolation error using the derivative bounds of \Cref{as:nn} and carefully handle the error in the boundary regions, where $r(0)$ and $r(1)$ are unknown.
\begin{remark}[Trapezoidal-Rule Interpretation; See \Cref{lem:nn_uni} in \Cref{app:nn_uni}]
    It holds that
    \[
    \mu_{PAQ} = \frac{1}{N} \sum_{i=1}^N f(\widetilde{x}_i)+x_1 \cdot r(x_1)   + \sum_{i=1}^{n-1} (x_{i+1}-x_i)\cdot \frac{r(x_i)+ r(x_{i+1})}{2}+ (1-x_n) \cdot r(x_n),
    \]
    where $x_1 \leq \cdots < x_n$ are the ordered labeled features. 
    %\ao{I might omit the following sentence (at least it was not very clear to me)} This interpolation perspective highlights a natural path to higher-order Newton-Cotes extensions.
\end{remark}
\subsection{Degree-$p$ Polynomial Interpolation}
So far, we approximated $\E[r(x)]$ by linear interpolation. To accelerate convergence, we replace the trapezoidal rule with degree-$p$ polynomial interpolation, at the cost of the stronger smoothness assumptions below.
\begin{assumption}[Smoothness for Degree-$p$ Interpolation]\label{as:smooth_p}
    Let residual $r: \R \rightarrow \R$ be a deterministic residual function that is in $C^{p+1}$ and such that $\sup_{x \in \R} |r^{(p+1)}(x)| \leq L$ for some finite constant $L$.
\end{assumption} 
To bound our approximation error using higher-order methods, we rely on the well-known Lagrange form of the remainder.
\begin{fact}\label{fact:lagrange}
    Let $x_1,...,x_p$ be n distinct points such that $a=x_0<x_1 <....<x_p<x_{p+1}=b$. Suppose that the function $r$ satisfies \Cref{as:smooth_p} and let $q_p$ be the unique degree $p$ polynomial with 
    \[
        q_p(x_i)=r(x_i),\quad i=0,\dots,p.
    \]
    Then, for each $x \in [a,b]$, there exists a $\xi$ such that 
    \[
         r(x)-q_p(x) = \frac{r^{(p+1)}(\xi)}{(p+1)!} \prod_{i=0}^{p}\;(x - x_i).
    \]
    Moreover, let $ h=\max_{0\leq i\leq p+1}(x_i - x_{i-1})$, then
    \[
         \sup_{x\in[0,1]}|r(x)-q_p(x)| \leq \frac{L_{p+1}}{(p+1)!}\,h^{\,p+1} = O(h^{\,p+1}).
    \]
\end{fact}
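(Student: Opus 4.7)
Fact~\ref{fact:lagrange} is the classical Lagrange-remainder formula for polynomial interpolation, so the plan is to reproduce the standard Rolle's-theorem argument for the exact identity and then derive the uniform bound by controlling the node polynomial.

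For the exact identity, I would fix an arbitrary $x \in [a,b]$. If $x = x_i$ for some $i \in \{0, \ldots, p\}$, both sides vanish and any $\xi \in (a,b)$ works, so assume $x$ is not an interpolation node. Introduce the auxiliary function
\[
g(t) \;=\; r(t) - q_p(t) - \lambda\,\omega(t), \qquad \omega(t) \;=\; \prod_{i=0}^{p}(t - x_i),
\]
and set the constant $\lambda = (r(x) - q_p(x))/\omega(x)$ so that $g(x) = 0$. By construction $g$ then vanishes at the $p+2$ distinct points $x_0, x_1, \ldots, x_p, x$, all lying in $[a,b]$. Applying Rolle's theorem iteratively, $g'$ has at least $p+1$ zeros, $g''$ at least $p$ zeros, and in general $g^{(k)}$ has at least $p+2-k$ zeros, so $g^{(p+1)}$ has at least one zero $\xi \in (a,b)$. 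Since $q_p$ has degree at most $p$, $q_p^{(p+1)} \equiv 0$, and since $\omega$ is monic of degree $p+1$, $\omega^{(p+1)} \equiv (p+1)!$. Hence $0 = g^{(p+1)}(\xi) = r^{(p+1)}(\xi) - \lambda(p+1)!$, which rearranges to $\lambda = r^{(p+1)}(\xi)/(p+1)!$ and gives the claimed identity.

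For the uniform bound, I would combine $|r^{(p+1)}(\xi)| \leq L_{p+1}$ from Assumption~\ref{as:smooth_p} with a bound on $\sup_{x \in [a,b]}|\omega(x)|$. Since the $p+2$ points $x_0 < \cdots < x_{p+1}$ partition $[a,b]$ with maximum spacing $h$, we have $b - a \leq (p+1)h$ and every factor obeys $|x - x_i| \leq (p+1)h$, yielding the crude bound $|\omega(x)| \leq ((p+1)h)^{p+1}$ uniformly in $x$. Dividing by $(p+1)!$ delivers the asymptotic $O(h^{p+1})$ stated in the fact. A sharper constant can be obtained by locating $x$ in a specific subinterval $[x_k, x_{k+1}]$, using $|x - x_k||x - x_{k+1}| \leq h^2/4$ by AM-GM and telescoping bounds on the remaining factors, but this refinement is not needed to conclude $O(h^{p+1})$.

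This is a textbook-level statement, so the main obstacle is really just bookkeeping: setting up the auxiliary $g$ so that the Rolle-theorem cascade produces exactly one zero of $g^{(p+1)}$, and verifying that every node together with the evaluation point $x$ lies in a common interval where Rolle's theorem applies (immediate, since everything sits in $[a,b]$). The slight notational discrepancy in the stated sup (taken over $[0,1]$ rather than $[a,b]$) is a harmless rescaling that does not affect the bound.
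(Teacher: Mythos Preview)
The paper does not prove this statement; it is recorded as a classical fact and invoked without argument in the proof of \Cref{thm:nn_gen_smooth}. Your proposal supplies the standard Rolle's-theorem derivation of the Lagrange remainder and a correct $O(h^{p+1})$ bound on the node polynomial, which is exactly what the paper takes for granted. Your observation that the crude bound $|\omega(x)|\le ((p+1)h)^{p+1}$ only recovers the asymptotic order, not the precise constant $\tfrac{L_{p+1}}{(p+1)!}h^{p+1}$ as written, is apt; the paper's downstream use of the fact relies solely on the $O(h^{p+1})$ rate, so this discrepancy in the constant is immaterial.
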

Let $X_1 < X_2 < \cdots < X_n$ be the order statistics of labeled features in $\calL$. We will partition the interval $[0,1]$ into $k+2$ intervals:
\[
    I_0 = [0,X_{p+1}], \quad I_j = [X_{(p+1) + d\cdot(i-1)}, x_{d+1,p\cdot i} ] \quad 1 \leq i \leq k,  \quad I_{k+1} = [X_{n-p},1],
\]
where $k = (n-2p-1)/p$. Thus, each interval covers $p+1$ constructive order statistics. In each interval $I_j$, we fit a local degree-$p$ polynomial $q_{j,p}$. The global interpolation $q_p$ is then obtained by summing the regional appoximations. Thus, the degree-$p$ interpolation estimator becomes
\[
\mu_{PAQ}^p = \frac{1}{N} \sum_{i=1}^N f(\widetilde{X}_i) + \sum_{j=0}^{k+1} \int_{I_j}q_{j,p}(u)du.
\]
\begin{theorem}[PAQ Estimator, Degree-$p$ Interpolation]\label{thm:nn_gen_smooth}
    Under \Cref{as:smooth_p} with $p=O(1)$ and $P_X = \mathcal{U}_{[0,1]}$, it holds that 
    \[
    |\E[\mu^p_{\parq}] - \E[Y]| = O\left(\frac{1}{n^{p+1}}\right) \quad \text{and} \quad Var(\mu^p_{\parq}) = O\left( \frac{1}{N} + \frac{1}{n^{2p+2}}\right).
    \]
\end{theorem}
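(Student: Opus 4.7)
The plan is to follow the architecture of the proof of \Cref{thm:nn_uni}, replacing the trapezoidal-rule error analysis with the Lagrange remainder from \Cref{fact:lagrange}. First, I would decompose
\[
\mu^p_{\parq} - \E[Y] \;=\; A \,+\, B, \qquad A := \tfrac{1}{N}\sum_{i=1}^N f(\widetilde{X}_i) - \E[f(X)], \qquad B := T - \E[r(X)],
\]
where $T = \sum_{j=0}^{k+1}\int_{I_j} q_{j,p}(u)\,du$. The term $A$ is a centered i.i.d.\ average and contributes zero bias together with variance $O(N^{-1})$ by a standard CLT argument applied to $f(\widetilde{X})$. The entire analysis then reduces to controlling $B$, which is a stochastic degree-$p$ quadrature approximation of $\int_0^1 r(u)\,du$ based on the labeled features; since $\calU$ and $\calL$ are independent, the bias/variance bounds on $A$ and $B$ combine additively.

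For each interior interval $I_j$ (spanned by $p+1$ consecutive order statistics with maximum spacing $h_j$), \Cref{fact:lagrange} gives $\sup_{u\in I_j}|r(u)-q_{j,p}(u)| = O(h_j^{p+1})$, and integrating against $|I_j| = O(h_j)$ yields $|\int_{I_j}(r-q_{j,p})\,du| = O(h_j^{p+2})$. Since the spacings of $n$ i.i.d.\ uniform points are marginally $\mathrm{Beta}(1,n)$, the moment bound $\E[h_j^m] = O(n^{-m})$ holds for any constant $m$. Summing over the $k=O(n/p)$ interior intervals then bounds the expected absolute error by $O(n\cdot n^{-(p+2)}) = O(n^{-(p+1)})$, which in particular bounds $|\E[B]|$.

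For the variance, I would use $\Var(T)\leq \E[(T-\int_0^1 r\,du)^2]$ and expand as $\sum_{j,j'}\E[E_j E_{j'}]$ with $E_j := \int_{I_j}(q_{j,p}-r)\,du$. The diagonal is $\sum_j\E[E_j^2]\leq \sum_j O(\E[h_j^{2p+4}]) = O(n^{-(2p+3)})$ by the same spacing-moment bound, while the off-diagonal contribution is at most $\bigl(\sum_j \E|E_j|\bigr)^2 = \bigl(k\cdot O(n^{-(p+2)})\bigr)^2 = O(n^{-(2p+2)})$ (exchangeability of the Dirichlet spacings together with a crude marginal bound suffices; exact independence is not needed). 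Adding the $O(N^{-1})$ contribution from $\Var(A)$ delivers the claimed rate.

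The main obstacle will be the boundary intervals $I_0=[0,X_{p+1}]$ and $I_{k+1}=[X_{n-p},1]$, since on $[0,X_1]$ (and symmetrically on the right) the polynomial $q_{0,p}$ is being evaluated in extrapolation rather than interpolation. The Lagrange-remainder formula still applies pointwise, but each factor $|u-X_i|$ now has to be controlled through the extreme order statistics $X_1$ and $X_{p+1}$, which are of order $n^{-1}$ in expectation. A separate but routine moment computation for these near-zero order statistics should show that the boundary contribution still scales as $O(n^{-(p+2)})$ in expectation and $O(n^{-(2p+4)})$ in second moment, matching the interior rates and leaving the overall bounds unchanged.
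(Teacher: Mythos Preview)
Your proposal is correct and follows essentially the same route as the paper: both arguments separate off the $O(N^{-1})$ contribution from the $f(\widetilde X)$ average, bound the per-interval quadrature error by $O(|I_j|^{p+2})$ via \Cref{fact:lagrange}, and then sum using the Dirichlet-spacing moment bounds $\E[|I_j|^m]=O(n^{-m})$ to get the bias, and cross-moment bounds $\E[|I_i|^a|I_j|^b]=O(n^{-(a+b)})$ to get the variance. The paper's proof packages the variance step as $\Var(\sum_j |I_j|^{p+2})=\sum_j\Var(|I_j|^{p+2})+2\sum_{i<j}\Cov(|I_i|^{p+2},|I_j|^{p+2})$ rather than your second-moment expansion $\sum_{j,j'}\E[E_jE_{j'}]$, but this is only a cosmetic difference; both reduce to the same $O(n\cdot n^{-(2p+4)}+n^2\cdot n^{-(2p+4)})$ count. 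One small caution: your sentence ``the off-diagonal contribution is at most $(\sum_j\E|E_j|)^2$'' is not literally valid without independence, but your parenthetical correctly indicates that the Dirichlet cross-moment bound is what actually does the work, and that is exactly what the paper uses. Your explicit treatment of the boundary intervals $I_0,I_{k+1}$ is more careful than the paper, which simply absorbs them into the same sum without comment.
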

\begin{proof}
    We focus on bounding the bias and variance of $\int_{j=0}^{k+1} \int_{I_j}q_{j,p}(u)du$. Recall that $\E[r(X)] = \int_{0}^1 r(u)du$ and $\E[q_p] = \E[\sum_i \int_{I_i}q_{i,p}(u)du]$. To bound the bias, we substitute these definitions in and calculate
    \begin{align*}
        |\E[q_p] - \E[r]| &= \left|\sum_{i=0}^{k+1} \E\left[\int_{I_i} q_{i,p}(u)-r(u)du\right]\right|\\
        &\leq \sum_{i=0}^{k+1}\E[|I_i| \cdot \sup_{u \in I_i} |q_{i,p}-r|]\\
        &\leq \frac{L}{p+1}\sum_{i=0}^{k+1} \E[|I_i|^{p+2}]\\
        &= O\left(\frac{1}{n^{p+1}}\right),
    \end{align*} where the last inequality follows from \Cref{fact:lagrange} and the last equality from $\E[|I_i|] = p/n = O(1/n)$.
    We may similarly bound variance by
    \begin{align*}
        \Var(q_p) &= \Var(q_p - E[r(x)])\\
        &= \Var\left(\sum_i \int_{I_i} q_{i,p}(u)-r(u)du\right)\\
        &\leq \Var\left(\sum_i |I_i|^{p+2}\right)\\
        &= \sum_{i} \Var(|I_i|^{p+2}) + 2 \sum_{i<j} \Cov(|I_i|^{p+2},|I_j|^{p+2})\\
        &= O(n\cdot n^{-(2p+4)} + n^2 \cdot n^{-(2p+4)})\\
        &= O(n^{-(2p+2)})
    \end{align*}
    The proof concludes by observing that $\Var(\frac{1}{N}\sum_{i=1}^N f(\widetilde{X}_i)) = O(1/N)$.    
\end{proof}

\subsection{Beyond the Uniform Distribution and Applications to Higher Dimensions}
We may apply the result of \Cref{thm:nn_gen_smooth} to general distributions by applying the probability integral transformation. For an marginal distribution $P_X$ in $\R$ with a continuous CDF $F$, let
\[
\widetilde{r}(u) = r(F^{-1}(u)), \quad u \in [0,1].
\]
Then,
\begin{align*}
    \E_{X \sim P_X}[r(X)] = \E_{U \sim \calU_{[0,1]}}[\widetilde{r}(u)]
\end{align*}
by observing that $F(X_1),...,F(X_n)$ is distributed $U[0,1]$. We can now apply the same numerical integration technique described in \Cref{thm:nn_gen_smooth} to $\widetilde{r}(u)$, assuming that $\widetilde{r}(u)$ satisfies \Cref{as:smooth_p}. 

Since $F$ is unknown in practice, one uses the empirical CDF
\[
    F_n(x) = n^{-1} \sum_{i=1}^n \mathbf{1}_{\widetilde{X}_i \leq x}, \quad U_i = F_n(X_i).
\]
We can bound the difference in the empirical and true CDF using the Dvoretzky-Kiefer-Wolfowitz (DKW) inequality.
\begin{definition}[Dvoretzky–Kiefer–Wolfowitz (DKW) inequality \citep{DKW, Massart}]\label{def:DKW}
Given $n$ samples $(X_i)_{i=1}^n$ from a distribution $D$, the empirical distribution $D_n$ with cdf $F_n(x) = n^{-1} \sum_{i=1}^n \mathbf{1}_{X_i \leq x}$ satisfies the following inequality:
    \[
    Pr(d_{KS}(D_n,D) > \epsilon) \leq 2e^{-2n\epsilon^2}.
    \]
\end{definition}
In particular, \Cref{def:DKW} implies that
\[|\sup_x |F_n(x)-F(x)| = O(N^{-1/2}).
\]
So, substituting $F_n$ for $F$ incurs at most $O(N^{-1/2})$ additional bias and $O(N^{-1})$ additional variance. Thus, we can safely use the empirical CDF without changing the final asymptotic rate.

We conclude this section with a brief remark about how our estimator $\mu_{\parq}^p$ can be easily extended to when the feature space is $d$-dimensional. Concretely, suppose that $r: [0,1]^d \rightarrow \R$ is a $C^{p+1}$ function and we have access to a one-dimensional quadrature rule of order $p$ computed on $n$ nodes, denoted by
\[
Q_1[f] = \sum_{i=1}^n w_i f(x_i),
\]
which exactly integrates all polynomials of degree up to $p$.
We may consider a $d$-dimensional quadrature rule by considering the 
\[
    Q_d[f] = \sum_{i_1,...,i_d=1}^n w_{i_1} \cdot ... \cdot w_{i_d} f(x_1,...,x_d),
\]
which is exact for every multivariate monomial of degree at most $p$ (see \cite{burden2011numerical} for a standard reference). Since $Q_d$ requires $n^d$ points, this implies that the mesh size of the interpolation is $O(n^{1/d})$. We may substitute this bound into our previous analyses to yield the following result. 

\begin{theorem}[PAQ Estimator in $d$ dimensions]
    Suppose the residual $r: [0,1]^d \rightarrow \R$ is in $C^{p+1}$ and all  $p+1$ order partial derivatives are bounded, then $\mu_{\parq}^p$ with the quadrature rule $Q_d[f]$ satisfies
    \[
    |\E[\mu^p_{\parq}] - \E[Y]| = O\left(\frac{1}{n^{(p+1)/d}}\right) \quad \text{and} \quad Var(\mu^p_{\parq}) = O\left( \frac{1}{N} + \frac{1}{n^{(2p+2)/d}}\right).
    \]
\end{theorem}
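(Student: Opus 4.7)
My plan is to mirror the argument behind \Cref{thm:nn_gen_smooth}, replacing the univariate Lagrange remainder with its multivariate analogue and tracking the effective mesh size $h = O(n^{-1/d})$ induced by distributing the $n$ total labeled samples across $d$ coordinates in a tensor-product fashion. First I would split
\[
\mu^{p}_{\parq} \;=\; \frac{1}{N}\sum_{i=1}^{N} f(\widetilde{x}_i) \;+\; Q_d[r].
\]
The first term is an i.i.d.\ average of $f(\widetilde X_i)$, unbiased for $\E[f(X)]$ with variance $O(1/N)$, which accounts for the $1/N$ piece of the variance bound and contributes no bias. Since $\E[Y] = \E[f(X)] + \E[r(X)]$ and $P_X$ is uniform on $[0,1]^d$, the task reduces to bounding the bias and variance of $Q_d[r]$ as an estimator of $\int_{[0,1]^d} r(u)\,du$.

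Next, treating $p$ as constant, I would partition $[0,1]^d$ into $K = O(n)$ rectangular cells $\{I_k\}$, each formed as a tensor product of the coordinate-wise subintervals used in \Cref{thm:nn_gen_smooth}, so that each cell contains $(p+1)^d$ tensor nodes, has diameter $O(h)$ with $h = n^{-1/d}$, and volume $O(1/n)$. Let $q_{k,p}$ be the tensor-product degree-$p$ polynomial interpolant of $r$ on $I_k$; by exactness of the one-dimensional weights for monomials of degree $\le p$ along each coordinate, the quadrature rule satisfies $Q_d[r] = \sum_{k=1}^{K} \int_{I_k} q_{k,p}(u)\,du$. Iterating \Cref{fact:lagrange} coordinate by coordinate and using the uniform bound on all partial derivatives of order $p+1$ yields the multivariate remainder bound $\sup_{u\in I_k}|r(u) - q_{k,p}(u)| = O(h^{p+1})$.

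With this pointwise control, the bias and variance computations then follow the same template as in the proof of \Cref{thm:nn_gen_smooth}. For the bias,
\[
\bigl|\E[Q_d[r]] - \E[r(X)]\bigr| \;\leq\; \sum_{k=1}^{K} \E\bigl[\,|I_k|\cdot \sup_{u \in I_k}|q_{k,p}(u)-r(u)|\,\bigr] \;=\; O\bigl(K\cdot h^{d}\cdot h^{p+1}\bigr) \;=\; O\bigl(n^{-(p+1)/d}\bigr),
\]
and summing diagonal and covariance contributions across cells exactly as in \Cref{thm:nn_gen_smooth} gives $\Var(Q_d[r]) = O\bigl(K\cdot h^{2d+2(p+1)} + K^2\cdot h^{2d+2(p+1)}\bigr) = O\bigl(n^{-(2p+2)/d}\bigr)$. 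Combined with the $O(1/N)$ contribution from the prediction average, this yields the stated rates.

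The main obstacle will be the multivariate interpolation step: one must verify the tensorized Lagrange-type remainder carefully, iterating the one-dimensional identity along each coordinate direction while invoking boundedness of the mixed $(p+1)$-order partial derivatives, and confirm that the data-dependent partition uniformly achieves the mesh size $h = O(n^{-1/d})$ so that the per-cell error bound holds simultaneously over all $K = O(n)$ cells. The prediction-average term and the tensorization of the covariance calculation are essentially bookkeeping once this uniform interpolation bound is in place.
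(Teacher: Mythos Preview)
Your proposal is correct and matches the paper's approach: the paper does not give a standalone proof of this theorem but simply states that the result follows by substituting the $d$-dimensional mesh size $h = O(n^{-1/d})$ into the analysis of \Cref{thm:nn_gen_smooth}, which is exactly what you do. You have filled in the bookkeeping (the tensor-product partition, the multivariate Lagrange remainder, and the cell-by-cell bias and covariance sums) that the paper leaves implicit.
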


\subsection{Synthetic Experiments for the PAQ Estimator}\label{sec:NNExp}
In this section, we empirically test the \parq estimator against \ppiplus and \partest, following the same testing methodology as \Cref{sec:experiments}. While the \parq estimator assumes that the residuals are deterministic and Lipschitz, there are many natural settings where the conditions of \Cref{as:nn} hold exactly or in good approximation. For example, Anfinsen’s thermodynamic hypothesis \citep{Anfinsen1973} suggests that for many small globular proteins under appropriate conditions the amino-acid sequence determines uniquely determines its physical structure. Similarly, in physics, there are many equations in which the final state of the system is determined by its initial conditions, such as in the 2-D incompressible Navier-Stokes equation \citep{constantin1988}.

In the experiments below, we first demonstrate the performance of the \parq estimator on a synthetic dataset that exactly meets the assumptions of \Cref{thm:nn_uni} (\Cref{fig:paq}). We then show that the NN estimator is resilient to small amounts of noise (\Cref{fig:paqnoise}). To construct our synthetic dataset, we let
    \[
    X \sim \mathcal{U}[0,\pi],\quad y(x) = x^2 + \sin(x),\quad f(x) = x^2,\quad r(x) = \sin(x).
    \]
    The residual function is deterministic and Lipschitz continuous with a constant of 1. The figures below show the average confidence interval width and the coverage probability as a function of the labeled sample size with a desired coverage probability of 0.9.
    \begin{figure}[H]
        \centering
        \begin{minipage}{0.5\textwidth}
        \centering
        \includegraphics[width=\linewidth]{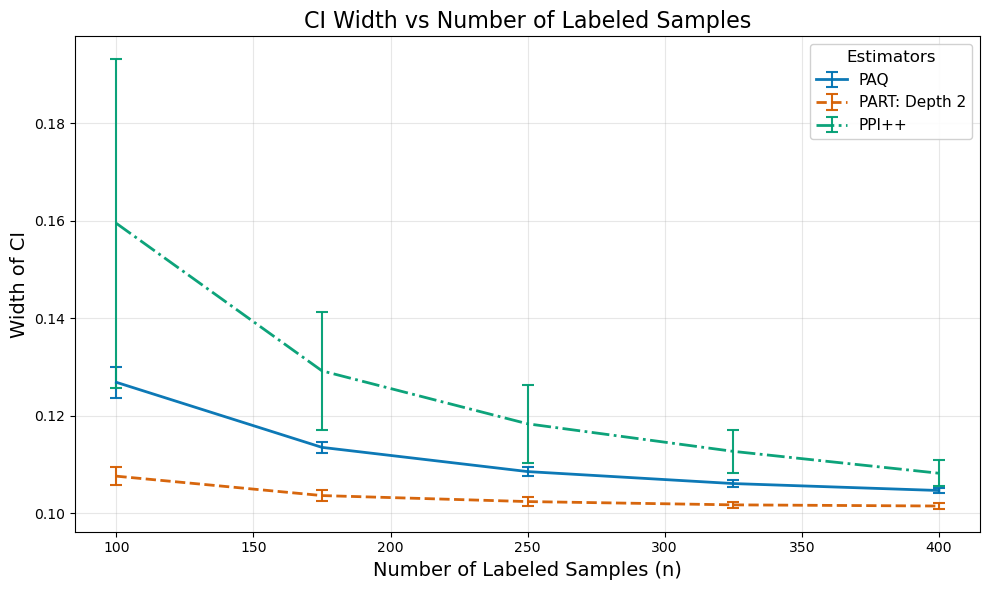}
        \end{minipage}\hfill
        \begin{minipage}{0.5\textwidth}
            \centering
            \includegraphics[width=\linewidth]{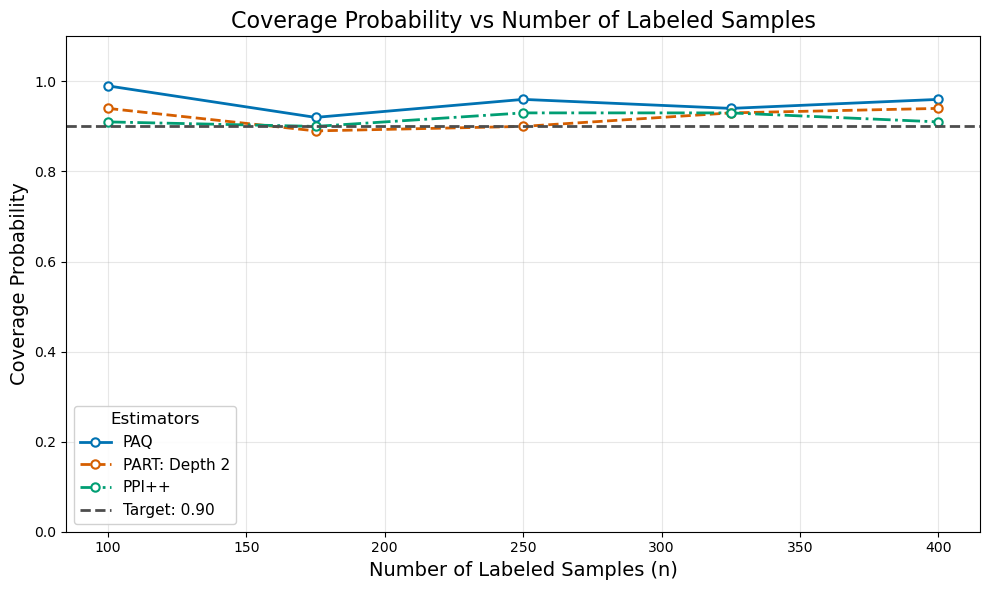}
        \end{minipage}
        \caption{Estimating the mean on a synthetic dataset (see \Cref{sec:experiments} for testing methodology).}
        \label{fig:paq}
    \end{figure}

    We now alter the previous example by introducing varying levels of noise to the labels. For $\sigma \in [0,0.1,0.2,0.5,1]$, we add $N(0,\sigma^2)$ noise to the labels in the previous examples. As expected, the coverage probability drops as the noise becomes more dominant in the label function.

    \begin{figure}[H]
        \centering
        \begin{minipage}{0.5\textwidth}
        \centering
        \includegraphics[width=\linewidth]{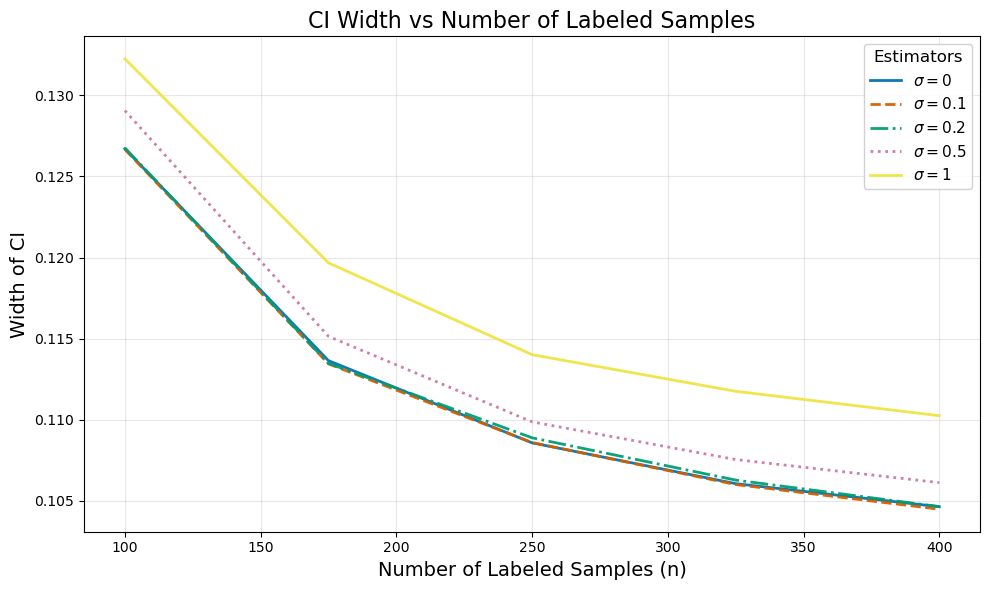}
        \end{minipage}\hfill
        \begin{minipage}{0.5\textwidth}
            \centering
            \includegraphics[width=\linewidth]{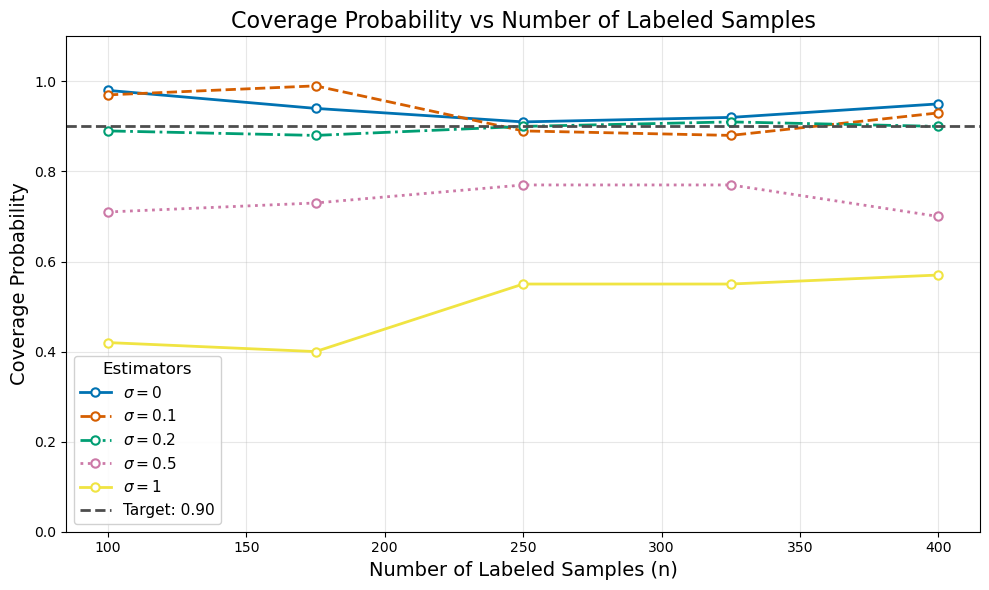}
        \end{minipage}
        \caption{Testing the \parq estimator's efficacy with various noise levels in the response variable.}
        \label{fig:paqnoise}
    \end{figure}

\bibliographystyle{abbrvnat}
\bibliography{references.bib}
\clearpage

\appendix
\section{Algorithms}\label{appx:algorithms}
In this section, we present the \partest algorithm for mean estimation and linear regression. At their core, both algorithms build a regression tree by repeatedly splitting the labeled data in a greedy fashion. Once the tree is formed, local residuals are computed within each leaf using the labeled data and the leaf mixture probabilities are computed using the unlabeled data assigned to each leaf. We also provide the \parq algorithm for mean estimation, which computes the global residual term by assigning each unlabeled point the residual of its nearest labeled neighbor. 
\begin{algorithm}[H]
\caption{\partest Mean Estimator}
\label{alg:semi-supervised_tree}
\begin{algorithmic}[1]
\Require \\ 
\begin{itemize}
    \item Labeled data $\mathcal{L} = \{(x_i,y_i)\}_{i=1}^n$,
    \item Unlabeled data $\mathcal{U} = \{\widetilde{x}_j\}_{j=1}^N$,
    \item Predictor $f:\mathcal{X}\to\mathbb{R}$,
    \item Maximum depth $D$, and minimum labeled sample size $m$.
\end{itemize}

\State Initialize the root region: $\mathcal{R} \gets \mathcal{X}$.
\Function{BuildTree}{$\mathcal{R}$, $\text{depth}$}
    \If{$\text{depth} = D$ \textbf{or} $|\mathcal{L}\cap \mathcal{R}| < m$}
        \State \Return $\mathcal{R}$
    \EndIf
    \State Determine the optimal split coordinate $k^*\in [1,d]$, and split point $s^*\in \mathcal{S}^{(k)}$ on $\mathcal{R}$ that minimizes the VMS function $\text{VMS}(k^*,s^*,\mathcal{R})$.  (see paragraphs \textbf{Candidate Splitting Points} and \textbf{Splitting Function} in \Cref{sec:semi-supervised_tree}).
    \State Partition $\mathcal{R}$ into $\mathcal{R}_{\text{left}} = \{x\in\mathcal{R}: x^{(k^*)}\le s^*\}$ and $\mathcal{R}_{\text{right}} = \{x\in\mathcal{R}: x^{(k^*)}> s^*\}$.
    \State \Return \Call{BuildTree}{$\mathcal{R}_{\text{left}}$, $\text{depth}+1$} $\cup$ \Call{BuildTree}{$\mathcal{R}_{\text{right}}$, $\text{depth}+1$}.
\EndFunction
\State Build the tree: $T \gets$ \Call{BuildTree}{$\mathcal{X}$, $0$}.
\State Let $\{\mathcal{R}_\ell\}_{\ell=1}^{L}$ be the leaf regions of $T$. For each leaf $\mathcal{R}_\ell$, compute:
\begin{itemize}
    \item $p_\ell = |\{\,\widetilde{x}\in\mathcal{U}\cap\mathcal{R}_\ell\,\}|/N$,
    \item $\overline{r}_\ell = \frac{1}{|\mathcal{L}\cap\mathcal{R}_\ell|}\sum_{(x,y)\in\mathcal{L}\cap\mathcal{R}_\ell} r(x,y)$.
\end{itemize}
\State \Return the final estimator \(\hat{\mu}_T = \frac{1}{N}\sum_{j=1}^{N} f(\widetilde{x}_j) + \sum_{\ell=1}^{L} p_\ell\,\overline{r}_\ell.\)
\end{algorithmic}
\end{algorithm}

\begin{algorithm}[H]
\caption{\partest Tree Estimator for Linear Regression}
\label{alg:semi-supervised_tree_linear_regression}
\begin{algorithmic}[1]
\Require \\ 
\begin{itemize}
    \item Labeled data $\mathcal{L} = \{(x_i,y_i)\}_{i=1}^n$,
    \item Unlabeled data $\mathcal{U} = \{\widetilde{x}_j\}_{j=1}^N$,
    \item Predictor function $f:\mathcal{X}\to\mathbb{R}$,
    \item Maximum depth $D$, minimum labeled sample size $m$, and coordinate $k$ for coefficient estimation.
\end{itemize}

\State Initialize the root region: $\mathcal{R} \gets \mathcal{X}$.
\Function{BuildTree}{$\mathcal{R}$, $\text{depth}$}
    \If{$\text{depth} = D$ \textbf{or} $|\mathcal{L}\cap \mathcal{R}| < m$}
        \State \Return $\mathcal{R}$
    \EndIf
    \State Determine the optimal split coordinate $j^*\in [1,d]$, and split point $s^*\in \mathcal{S}^{(j^*)}$ on $\mathcal{R}$ that minimizes $\text{VMS}_{\mathrm{LR}}(j^*,s^*,\mathcal{R})$ (see paragraphs \textbf{Candidate Splitting Points} in \Cref{sec:semi-supervised_tree} and \textbf{Splitting Function for Linear Regression} in \Cref{sec:semi-supervised_tree_linear_regression}).
    \State Partition $\mathcal{R}$ into $\mathcal{R}_{\text{left}} = \{x\in\mathcal{R}: x^{(j^*)}\le s^*\}$ and $\mathcal{R}_{\text{right}} = \{x\in\mathcal{R}: x^{(j^*)}> s^*\}$.
    \State \Return \Call{BuildTree}{$\mathcal{R}_{\text{left}}$, $\text{depth}+1$} $\cup$ \Call{BuildTree}{$\mathcal{R}_{\text{right}}$, $\text{depth}+1$}.
\EndFunction
\State Build the tree: $T \gets$ \Call{BuildTree}{$\mathcal{X}$, $0$}.
\State Let $\{\mathcal{R}_\ell\}_{\ell=1}^{L}$ be the leaf regions of $T$. For each leaf $\mathcal{R}_\ell$, compute:
\begin{itemize}
    \item $p_{\mathcal{R}_\ell} = |\{\widetilde{x}\in\mathcal{U}\cap\mathcal{R}_\ell\}|/N$,
    \item $\widehat{R}_{\mathcal{R}_\ell}^{(k)} = \left(\frac{1}{N}\sum_{i=1}^N \widetilde{x}_i \widetilde{x}_i^T\right)^{-1}\frac{1}{|\mathcal{L}\cap\mathcal{R}_\ell|}\sum_{(x,y)\in\mathcal{L}\cap\mathcal{R}_\ell}x^{(k)}\bigl(y-f(x)\bigr)$.
\end{itemize}
\State Compute predictor-augmented estimator $\widetilde{\theta}^{(k)} = \left(\frac{1}{N}\sum_{i=1}^N \widetilde{x}_i \widetilde{x}_i^T\right)^{-1}\frac{1}{N}\sum_{i=1}^N \widetilde{x}_i f(\widetilde{x}_i)$.
\State \Return final estimator \(
\widehat{\theta}^{(k)}_T = \widetilde{\theta}^{(k)} + \sum_{\ell=1}^{L} p_{\mathcal{R}_\ell}\, \widehat{R}_{\mathcal{R}_\ell}^{(k)}.
\)
\end{algorithmic}
\end{algorithm}

\begin{algorithm}[H]
\caption{\parq Mean Estimator}
\label{alg:parq}
\begin{algorithmic}[1]
\Require \\ 
\begin{itemize}
    \item Labeled data $\mathcal{L} = \{(x_i,y_i)\}_{i=1}^n$,
    \item Unlabeled data $\mathcal{U} = \{\widetilde{x}_j\}_{j=1}^N$,
    \item Predictor $f:\mathcal{X}\to\mathbb{R}$,
\end{itemize}

\Function{NearestLabeledNeighbor}{$\tilde{x}$, $\mathcal{L}$}
    \State \Return $\argmin_{x \in \calL} |\tilde{x}-x|$.
\EndFunction
\State \Return the final estimator \(\hat{\mu}_{\parq} = \frac{1}{N}\sum_{j=1}^{N}\left( f(\widetilde{x}_j) + \Call{NearestLabeledNeighbor}{\widetilde{x}_j,\calL}\right).\)
\end{algorithmic}
\end{algorithm}

\section{Proof of \Cref{lem:coordPart}}\label{app:warmup}
In this section, we restate and prove \Cref{lem:coordPart}, which shows that \Cref{estimator: non-adaptive} goes asymptotically to a normal distribution with controlled variance.
\coordPart*
\begin{proof}[Proof of \Cref{lem:coordPart}]
\textbf{Unbiasedness.}  
Recall that for a binary feature $X \in \{-1,1\}$, the coordinate-partition estimator is
\[
\widehat{\mu}_C 
~\;=\;~
\underbrace{\frac{1}{N}\sum_{j=1}^N f\bigl(\widetilde{x}_j\bigr)}_{\text{predictive term}}
\;+\;
\underbrace{\sum_{x\in\{-1,1\}} \widehat{p}_x\,\widehat{r}_x}_{\text{group-specific residual correction}},
\]

Because $N\gg n$ and $N\to \infty$, we treat
\[
\frac{1}{N}\sum_{j=1}^N f\bigl(\widetilde{x}_j\bigr)
\;\;\xrightarrow{p}\;\;
\E\bigl[f(X)\bigr],
\quad
\widehat{p}_x 
\;\;\xrightarrow{p}\;\;
\Pr[X=x].
\]

Thus \Cref{estimator: non-adaptive} simplifies to
\[
  \E\bigl[f(X)\bigr] +
  \sum_{x\in\{-1,1\}} \Pr[X=x]\,\widehat{r}_x.
\]

Notice that for each $x \in \{-1,1\}$,
\[
\E\left[\widehat{r}_x\right]=\E\left[
\frac{1}{n_x}\,\sum_{\{i:\,x_i=x\}}\bigl(y_i - f(x_i)\bigr)
\right]
= \E\!\bigl[Y - f(X) \mid X=x\bigr]
\] 
Hence by linearity of expectation we have that
\[
\E\!\bigl[\widehat{\mu}_C\bigr]
\;=\;
\E\bigl[f(X)\bigr]
\;+\;
\E\!\bigl[Y - f(X)\bigr]
\;=\;
\E[Y]
\;=\;
\mu.
\]
Therefore, $\widehat{\mu}_C$ is unbiased for $\mu$.

\textbf{Asymptotic Normality and Variance.}
As argued, \Cref{estimator: non-adaptive} in the limit becames 
\[
  \E\bigl[f(X)\bigr] +
  \sum_{x\in\{-1,1\}} \Pr[X=x]\,\widehat{r}_x.
\]
Thus the main source of variability arises from the group-specific corrections $\widehat{r}_1$ and $\widehat{r}_{-1}$.

By assumption that $\Var(Y - f(x))$ is finite, applying the CLT separately to each group we get:
\[
\sqrt{n_x}\,\bigl(\widehat{r}_x - \E[Y - f(X)\mid X=x]\bigr)
~\xrightarrow{d}~
\mathcal{N}\bigl(0,\;\Var(Y - f(X)\mid X=x)\bigr).
\]

To calculate the variance of \Cref{estimator: non-adaptive}, we use the following claim that shows that the covariance between $\widehat{r}_{-1}$ and $\widehat{r}_1$ is zero.

\begin{claim}\label{claim: cond independent}
    $\Cov\left(\widehat{r}_{-1},\widehat{r}_1\right)=0$
\end{claim}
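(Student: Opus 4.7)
The plan is to use the law of total covariance conditioning on the labeled features $\bm{X} = (X_1, \ldots, X_n)$. Informally, once we know which of the $n$ labeled samples landed in group $X=1$ and which landed in group $X=-1$, the two sample averages $\widehat{r}_1$ and $\widehat{r}_{-1}$ are computed from disjoint collections of i.i.d. observations, so they should be conditionally independent and their conditional means should be deterministic constants.

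More concretely, first I would observe that since the tuples $(X_i, Y_i)$ are i.i.d., conditioning on $\bm{X}$ fixes the group assignment and the $Y_i$'s remain mutually independent with $Y_i \mid X_i \sim P_{Y\mid X=X_i}$. Consequently the family $\{Y_i - f(X_i) : X_i = 1\}$ is independent of $\{Y_i - f(X_i) : X_i = -1\}$ given $\bm{X}$, which immediately yields
\[
\Cov\!\bigl(\widehat{r}_{-1},\,\widehat{r}_{1}\,\bigm|\,\bm{X}\bigr) = 0.
\]
Second, the conditional expectations simplify because each summand satisfies $\E[Y_i - f(X_i)\mid X_i = x] = \E[Y - f(X)\mid X = x]$. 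Averaging over the $n_x$ samples in group $x$ gives
\[
\E\!\bigl[\widehat{r}_x \bigm|\bm{X}\bigr] = \E[Y - f(X)\mid X = x],
\]
which is a deterministic constant $\mu_x$ not depending on $\bm{X}$ (provided $n_x \geq 1$).

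The proof then concludes by invoking the law of total covariance,
\[
\Cov(\widehat{r}_{-1},\widehat{r}_1) = \E\!\bigl[\Cov(\widehat{r}_{-1},\widehat{r}_1\mid \bm{X})\bigr] + \Cov\!\bigl(\E[\widehat{r}_{-1}\mid \bm{X}],\,\E[\widehat{r}_1\mid \bm{X}]\bigr),
\]
where the first term vanishes by conditional independence and the second vanishes since both arguments are constants $\mu_{-1}$ and $\mu_{1}$.

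The only subtle point is the possibility that $n_x = 0$, in which case $\widehat{r}_x$ is undefined; this is handled by restricting to the (high-probability, since $\Pr[X=x] > 0$) event that both groups are non-empty, or by treating the empty-group contribution as zero, neither of which affects the covariance identity asymptotically. Beyond that, the argument is essentially a one-line consequence of conditional independence once the conditioning sigma-algebra is chosen correctly, so no serious obstacle is expected.
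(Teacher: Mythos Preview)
Your proof is correct and follows essentially the same approach as the paper: both arguments condition to fix the group assignments, invoke conditional independence of the two disjoint averages, and then use that the conditional means $\E[\widehat{r}_x\mid\cdot]=\E[Y-f(X)\mid X=x]$ are deterministic constants. The only cosmetic differences are that the paper conditions on the counts $(n_1,n_{-1})$ rather than the full feature vector $\bm{X}$, and it computes $\E[\widehat{r}_1\widehat{r}_{-1}]$ directly via iterated expectation instead of packaging the two pieces through the law of total covariance.
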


\begin{proof}
We show that
\(
\E\bigl[\widehat{r}_1\cdot \widehat{r}_{-1}\bigr] \;=\; \E\bigl[\widehat{r}_1\bigr]\cdot \E\bigl[\widehat{r}_{-1}\bigr].\)
From this, it follows immediately that
\(\Cov\bigl(\widehat{r}_{-1}, \widehat{r}_1\bigr) = 0.\) Consider the chain of equalities:
\begin{align*}
    &\E\bigl[\widehat{r}_1 \,\widehat{r}_{-1}\bigr] \\
    =& \E\Bigl[\E\bigl[\widehat{r}_1 \,\widehat{r}_{-1} \,\mid\, n_1,n_{-1}\bigr]\Bigr]
    && \text{(law of total expectation)}\\[6pt]
    =& \E\Bigl[\E\bigl[\widehat{r}_1 \,\mid\, n_1,n_{-1}\bigr]\;
               \E\bigl[\widehat{r}_{-1} \,\mid\, n_1,n_{-1}\bigr]\Bigr]
    && \text{(conditional independence given \(n_1, n_{-1}\))}\\[6pt]
    =& \E\Bigl[\E\bigl[\widehat{r}_1 \,\mid\, n_1\bigr]\;
               \E\bigl[\widehat{r}_{-1} \,\mid\, n_{-1}\bigr]\Bigr]
    && \text{(\(\widehat{r}_1\) depends only on \(n_1\); \(\widehat{r}_{-1}\) only on \(n_{-1}\))}\\[6pt]
    =& \E\Bigl[\E\bigl[Y - f(X)\,\mid\, X=1\bigr]\;
               \E\bigl[Y - f(X)\,\mid\, X=-1\bigr]\Bigr]
    && \text{(\(\widehat{r}_1, \widehat{r}_{-1}\) are unbiased estimators)}\\[6pt]
    =& \E\bigl[Y - f(X)\,\mid\, X=1\bigr]\;
       \E\bigl[Y - f(X)\,\mid\, X=-1\bigr]
    && \text{(constants with respect to the outer expectation)}\\[6pt]
    =& \E[\widehat{r}_1]\;\E[\widehat{r}_{-1}].
\end{align*}
Hence
\(\E\bigl[\widehat{r}_1\,\widehat{r}_{-1}\bigr]
\;=\; \E\bigl[\widehat{r}_1\bigr] \,\E\bigl[\widehat{r}_{-1}\bigr],\)
which implies
\[
\Cov\bigl(\widehat{r}_1, \widehat{r}_{-1}\bigr)
= \E\bigl[\widehat{r}_1\,\widehat{r}_{-1}\bigr]
  - \E\bigl[\widehat{r}_1\bigr]\E\bigl[\widehat{r}_{-1}\bigr]
= 0.
\]
\end{proof}
Since the covariance between $\widehat{r}_{-1}$ and $\widehat{r}_1$ is zero, the variance of \Cref{estimator: non-adaptive} is equal to

\begin{align*}
& \sum_{x\in \{-1,1\}}\Pr[X=x]^2\cdot \frac{\Var(Y - f(X)\mid X=x)}{n_x} \\
= & \sum_{x\in \{-1,1\}} \left(\frac{\Pr[X=x]}{\frac{n_x}{n}}\right)\cdot \frac{\Pr[X=x] \cdot \Var(Y - f(X)\mid X=x)}{n} \\
\to & \sum_{x\in \{-1,1\}} \frac{\Pr[X=x] \cdot\Var(Y - f(X)\mid X=x)}{n}
\end{align*}

where in the last equality we applied Slutsky's theorem since $\frac{n_x}{n}\to \widehat{p}_x$ as $n\to\infty$.

\end{proof}

\section{Proof of \Cref{thm:tree-ci-coverage}}\label{appx:proof of coverage}
In this section, we establish the asymptotic coverage of \cref{alg:semi-supervised_tree}. To accomplish this, we define some additional notation and prove some helpful auxiliary claims. 
\paragraph{Additional Notation:} Let $\mu = \E[Y]$ be the true population mean. Define $\mathcal{T}_{L}$ as the collection of all binary decision trees on $\mathcal{X}$ formed by splitting along any coordinate $k \in \{1,\dots,d\}$ at threshold points in $\mathcal{S}^{(k)}$. 

For a fixed tree $T \in \mathcal{T}_{L}$, let $\{\mathcal{R}_\ell\}_{\ell=1}^{L}$ denote its 
leaf regions. For each leaf $\ell \in \{1,\dots,L\}$, define:
\begin{itemize}
    \item The \emph{population mean} of the residuals using laballed data in $\mathcal{R}_\ell$ as $\mu_\ell$, and the corresponding empirical mean as $\widehat{\mu}_\ell$.
    \item The \emph{population variance} of the residuals as $\sigma_\ell^2$, with 
    $\widehat{\sigma}_\ell^2$ the corresponding \emph{empirical variance} of the residuals 
    using only labeled data in $\mathcal{R}_\ell$.
    \item The \emph{mass function} in subregion $\mathcal{R}_\ell$ as 
    $p_\ell \coloneqq \Pr\bigl[X \in \mathcal{R}_\ell\bigr]$.
\end{itemize}

Denote by $\hat{\mu}_T$ the \partest estimator from 
Algorithm~\ref{alg:semi-supervised_tree} associated with $T$. We write 
$\sigma_T^2 = \Var(\hat{\mu}_T)$ for the \emph{population variance} of $\hat{\mu}_T$, 
and $\widehat{\sigma}_T^2$ for its \emph{empirical variance}, following the definitions 
in the paragraph \textbf{Constructing Confidence Intervals} of 
\Cref{sec:semi-supervised_tree}.

\begin{theorem} \label{thm:counting trees}
The total number of distinct trees in \(\mathcal{T}_{L}\) is bounded by \(|\mathcal{T}_{L}| \le \left(d\cdot n \right)^L.\)
\end{theorem}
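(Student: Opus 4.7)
My plan is to count trees in $\mathcal{T}_L$ by decomposing each decision tree into two pieces of combinatorial information: its underlying unlabeled full binary tree shape, and the split chosen at each internal node. Because a tree is uniquely determined by these two ingredients, the product of their counts yields an upper bound on $|\mathcal{T}_L|$.

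For the shape count, I would use the standard fact that a full binary tree with $\ell$ leaves has exactly $\ell - 1$ internal nodes, and that the number of distinct rooted unlabeled shapes is the Catalan number $C_{\ell-1}$, satisfying $C_{\ell-1} \leq 4^{\ell-1}$. For the split count, each internal node selects a coordinate $k \in \{1, \ldots, d\}$ together with a threshold $s \in \mathcal{S}^{(k)}$; since $|\mathcal{S}^{(k)}| \leq n - 2 \leq n$ by construction, there are at most $dn$ splits per internal node, giving $(dn)^{\ell - 1}$ possible split labelings for a tree with $\ell$ leaves.

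Multiplying these counts and summing over $\ell = 1, \ldots, L$, I obtain
\[
|\mathcal{T}_L| \;\leq\; \sum_{\ell=1}^{L} C_{\ell-1}\,(dn)^{\ell-1} \;\leq\; \sum_{\ell=1}^{L} (4dn)^{\ell-1} \;\leq\; (4dn)^{L},
\]
which matches the stated bound $(dn)^L$ up to a Catalan overhead of $4^L$. The main obstacle in sharpening the constant to recover exactly $(dn)^L$ is absorbing this overhead. I would attempt this through a more efficient encoding, for instance a depth-first traversal that interleaves split tokens at internal nodes with termination tokens at leaves, together with a Kraft-style prefix-free argument showing that valid sequences cannot exceed $(dn)^L$ in number. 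Even if one settles for the cruder bound $(4dn)^L$, the union-bound correction in \Cref{thm:tree-ci-coverage} only gains an additive $L \log 4$ inside the logarithm, which is dominated by the $L \log(dn)$ term already present, so the overall argument is essentially unaffected.
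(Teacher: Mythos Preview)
Your decomposition into (tree shape) $\times$ (split labeling) is the same skeleton the paper uses, but you carry it out more carefully. The paper's own proof simply observes that a tree with $L$ leaves has $L-1$ internal nodes, each admitting at most $dn$ split rules, and concludes $(dn)^{L-1}\le(dn)^L$ directly---it never multiplies by the number of underlying tree shapes. In other words, the paper silently treats the shape as fixed, whereas you correctly account for it via the Catalan bound $C_{L-1}\le 4^{L-1}$, landing at $(4dn)^L$.

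So your argument is more rigorous, at the cost of a $4^L$ overhead that the paper's stated bound does not carry. Your closing observation is exactly right: this overhead is harmless for the union bound in \Cref{thm:tree-ci-coverage}, since it only adds $L\log 4$ inside a term that already scales like $L\log(dn)$. There is no need to chase the sharper constant; the Kraft-style encoding you sketch would at best recover the same $C_{L-1}(dn)^{L-1}$ count, not eliminate the Catalan factor, since distinct shapes with identical split labels genuinely produce distinct partitions of $\mathcal{X}$.
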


\begin{proof}
Let $T \in \mathcal{T}_{L}$ be a binary decision tree with exactly $L$ leaves. 
It is well-known that a full binary tree with $L$ leaves has $L - 1$ internal (non-leaf) nodes. 
Label these internal nodes as $v_1,\dots,v_{L-1}$.

Each internal node $v_i$ requires two choices:
\begin{enumerate}
\item \emph{Choice of feature:} There are $d$ features from which to choose.
\item \emph{Choice of threshold:} There are at most $n-1$ thresholds.
\end{enumerate}

Therefore, there are at most $d \cdot n$ possible splitting rules at each internal node. 
Since there are $L-1$ internal nodes, the number of ways to specify \emph{all} 
splitting rules for $T$ is at most
\[
(d \cdot n)^{\,L-1}.
\]
Finally, since $(d \cdot n)^{L-1} \le (d \cdot n)^L$ for $d, n, L \ge 1$, 
we obtain 
\[
|\mathcal{T}_{L}| \;\le\; (d \cdot n)^L,
\]
as claimed.
\end{proof}

\begin{theorem}[Asymptotic Normality]\label{variance concentration}
As $n \rightarrow \infty$, for any decision tree $T\in \mathcal{T}_L$, and $\delta\in(0,1)$:
\[
\hat{\mu}_T \xrightarrow{d} N\!\left(\E[Y - f(x)],\, \frac{1}{n}\sum_{\ell=1}^L p_\ell\cdot{ \sigma_\ell^2}\right), \text{ and }
\Pr\left[\widehat{\sigma}^2_T \geq \sigma_T^2\left(1 - 2\sqrt{\frac{\log\left(\frac{1}{\delta}\right)}{n}}\right) \right] \geq 1 - \delta.
\]
\end{theorem}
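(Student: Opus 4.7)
The plan is to decompose $\hat{\mu}_T$ into the predictive sum $\frac{1}{N}\sum_{j=1}^N f(\widetilde{x}_j)$ and the leaf-wise residual correction $\sum_{\ell=1}^L p_\ell\,\overline{r}_\ell$. Because $N \gg n$, applying a standard CLT to the predictive term gives fluctuations of order $O_p(N^{-1/2}) = o_p(n^{-1/2})$, so after centering at $\E[f(X)]$ this term is negligible relative to the residual correction and only contributes its mean to the limit. The bulk of the argument is therefore reduced to analyzing the joint distribution of the $L$ leaf residual averages and their concentration.

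For the asymptotic normality, since $T$ is fixed and the tree only depends on the unlabeled data (independent of the labels), the leaf counts $(n_1,\ldots,n_L)$ of labeled points are multinomial with parameters $n$ and $(p_1,\ldots,p_L)$. Conditional on these counts being positive, the within-leaf residuals $\{y_i - f(x_i) : x_i \in \mathcal{R}_\ell\}$ are i.i.d.\ draws from the conditional distribution of $Y - f(X) \mid X \in \mathcal{R}_\ell$, with mean $\mu_\ell$ and variance $\sigma_\ell^2$. An argument identical in spirit to \Cref{claim: cond independent} yields that the leaf averages $\overline{r}_\ell$ are mutually independent. Combining the within-leaf CLTs with the law of large numbers $n_\ell/n \xrightarrow{p} p_\ell$ and Slutsky's theorem then gives
\[
\sqrt{n}\Bigl(\sum_{\ell=1}^L p_\ell\, \overline{r}_\ell \;-\; \E[Y-f(X)]\Bigr) \;\xrightarrow{d}\; \mathcal{N}\!\Bigl(0,\, \sum_{\ell=1}^L p_\ell\,\sigma_\ell^2\Bigr),
\]
which matches the stated limit after adding the asymptotically-deterministic predictive contribution.

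For the variance concentration, I would write $\widehat{\sigma}_T^2 = \sum_\ell p_\ell^2\, \widehat{\sigma}_\ell^2/n_\ell$ and $\sigma_T^2 = \sum_\ell p_\ell^2\,\sigma_\ell^2/n_\ell$, so a one-sided bound on $\widehat{\sigma}_T^2/\sigma_T^2$ reduces to simultaneous one-sided bounds on the per-leaf ratios $\widehat{\sigma}_\ell^2/\sigma_\ell^2$. Under a sub-Gaussian or bounded-residual assumption, a standard one-sided Bernstein (or chi-squared tail) inequality gives $\widehat{\sigma}_\ell^2 \geq \sigma_\ell^2\bigl(1 - c\sqrt{\log(L/\delta)/n_\ell}\bigr)$ with probability at least $1-\delta/L$. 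A union bound over the $L$ leaves, combined with a Chernoff bound showing $n_\ell \geq p_\ell n/2$ on an event of probability $1 - o(1)$, assembles these into the claimed $\widehat{\sigma}_T^2 \geq \sigma_T^2\bigl(1 - 2\sqrt{\log(1/\delta)/n}\bigr)$ after absorbing constants into the factor of $2$.

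The main obstacle will be the variance-concentration step. The asymptotic-normality half is essentially a conditional CLT plus Slutsky and will go through cleanly. The concentration bound, by contrast, requires carefully balancing three sources of randomness — the leaf-wise sample variance, the multinomial leaf counts, and the union bound over leaves — while producing the specific constant $2$ in the rate. This most likely requires an implicit moment or boundedness condition on the residuals that is not spelled out in the statement, and a non-trivial algebraic manipulation to turn the per-leaf $\sqrt{\log(L/\delta)/n_\ell}$ rate into the global $\sqrt{\log(1/\delta)/n}$ rate.
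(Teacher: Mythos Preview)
Your treatment of the asymptotic-normality half matches the paper's proof essentially line for line: compute the mean, use the zero-covariance claim for the leaf averages, apply per-leaf CLTs, and invoke $n_\ell/n \to p_\ell$ via Slutsky. (One small slip: you say the leaf averages are ``mutually independent''; the paper only establishes $\Cov(\widehat{\mu}_\ell,\widehat{\mu}_{\ell'})=0$, which is what is actually needed.)

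The variance-concentration half is where you diverge. You propose a \emph{bound-then-aggregate} scheme: obtain per-leaf tail bounds $\widehat{\sigma}_\ell^2 \geq \sigma_\ell^2\bigl(1 - c\sqrt{\log(L/\delta)/n_\ell}\bigr)$ and union-bound over leaves. You correctly flag the obstacle yourself---this naturally produces a rate like $\sqrt{\log(L/\delta)/n_{\min}}$, and there is no honest algebra that collapses it to $\sqrt{\log(1/\delta)/n}$ with the specific constant $2$. The paper does \emph{not} go this route. Instead it aggregates first: it uses the (asymptotic) scaled-chi-squared representation $\widehat{\sigma}_\ell^2 \overset{d}{\approx} \sigma_\ell^2\,\chi^2_{n_\ell-1}/(n_\ell-1)$, applies the CLT $\bigl(\chi^2_{m}-m\bigr)/\sqrt{2m}\xrightarrow{d}\mathcal{N}(0,1)$ inside the weighted sum, and derives
\[
\widehat{\sigma}_T^2 - \sigma_T^2 \;\xrightarrow{d}\; \mathcal{N}\!\Bigl(0,\;\tfrac{2}{n^3}\sum_{\ell=1}^L p_\ell\,\sigma_\ell^4\Bigr).
\]
A single Gaussian lower-tail bound on this limiting distribution then delivers exactly $1 - 2\sqrt{\log(1/\delta)/n}$, with no union bound, no $\log L$, and no dependence on the smallest $n_\ell$. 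The ratio $(\sum_\ell p_\ell\sigma_\ell^2)^2/\sum_\ell p_\ell\sigma_\ell^4$ that appears in the exponent is handled by Jensen. So the missing idea in your plan is precisely this aggregate-then-bound step: pass to the asymptotic Gaussian law of $\widehat{\sigma}_T^2$ \emph{before} applying any tail inequality.
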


\begin{proof}

\textbf{Normality of $\widehat{\mu}_T$:} We first compute the expected value and variance of $\widehat{\mu}_T$:
\[
\mathbb{E}[\hat{\mu}_T] = \sum_{\ell=1}^{L} p_\ell\, \mathbb{E}[\hat{\mu}_\ell] = \sum_{\ell=1}^{L} p_\ell\, \mu_\ell = \sum_{\ell=1}^L \Pr[X\in \mathcal{R}_\ell]\E[Y-f(X)\mid X \in \mathcal{R}_\ell] = \E[Y - f(X)],
\]
where we use that $\widehat{\mu}_\ell$ is an unbiased estimator of $\mu_\ell$. To compute the variance of $\widehat{\mu}_T$, we use the following claim.

\begin{claim}
    For any pair of leafs $\ell\neq\ell'$, \(\Cov(\widehat{\mu}_\ell, \widehat{\mu}_{\ell'}) = 0 \)
\end{claim}

\begin{proof}[Proof Sketch]
    The proof follows similar to \Cref{claim: cond independent}.
\end{proof}

Hence, the variance of \(\hat{\mu}_T\) is
\[
\mathrm{Var}(\hat{\mu}_T) = \mathrm{Var}\left(\sum_{\ell=1}^L p_\ell \widehat{\mu}_\ell\right) = \sum_{\ell=1}^{L} p_\ell^2\, \mathrm{Var}(\hat{\mu}_\ell) = \sum_{\ell=1}^{L} p_\ell^2\, \frac{\sigma_\ell^2}{n_\ell} = \frac{1}{n}\sum_{\ell=1}^{L} p_\ell^2\, \frac{\sigma_\ell^2}{\frac{n_\ell}{n}} = \frac{1}{n}\sum_{\ell=1}^{L} p_\ell\, \sigma_\ell^2,
\]
where we used that for $\ell\neq \ell'$, \(\Cov(\widehat{\mu}_\ell, \widehat{\mu}_{\ell'}) = 0 \), and that the variance of $n'$ i.i.d. random variales with variance $\sigma'$ is \(\frac{{\sigma'}^2}{{n'}^2}\). For the last equality we use that as $n\rightarrow\infty$, for each subregion $\mathcal{R}_\ell$, if $p_\ell=0$ then $n_\ell = 0$, otherwise \(n_\ell\rightarrow+\infty\), hence $\lim_{n\rightarrow\infty}\frac{n_\ell}{n}=p_\ell$.

The first part of the argument follows from the CLT. By Assumption \(\ref{as:mass}\), the average \(\frac{1}{n}\sum_{\ell=1}^{L} p_\ell\, \sigma_\ell^2\) is finite, being a mean of finite values. Consequently, we have

\[
\hat{\mu}_T \xrightarrow{d} N\!\left(\E[Y - f(x)],\, \frac{1}{n}\sum_{\ell=1}^L p_\ell\, \sigma_\ell^2\right).
\]

\textbf{Distribution of the Empirical Variance:} For each leaf \(\ell\), since the \(n_\ell\) observations are i.i.d., a standard result shows that the empirical variance \(\widehat{\sigma}_\ell^2\) satisfies

\[
\frac{1}{n}\sum_{\ell=1}^{L} p_\ell\, \widehat{\sigma}_\ell^2 \xrightarrow{d}
\frac{1}{n}\sum_{\ell=1}^{L} p_\ell\, \sigma_\ell^2 \cdot \frac{\chi^2_{n_\ell-1}}{n_\ell-1} = 
\frac{1}{n}\sum_{\ell=1}^{L} p_\ell\, \sigma_\ell^2 + \frac{1}{n}\sum_{\ell=1}^{L} p_\ell\, \sigma_\ell^2 \cdot \frac{\chi^2_{n_\ell-1} - (n_\ell-1)}{n_\ell-1}
\]

    Observe the following chain of reduction:
    \begin{align*}
    \frac{1}{n}\sum_{\ell=1}^{L} p_\ell\, \widehat{\sigma}_\ell^2  - \frac{1}{n}\sum_{\ell=1}^{L} p_\ell\, \sigma_\ell^2
\xrightarrow{d} &    \sum_{\ell=1}^L \frac{ p_\ell^2\cdot{ \sigma_\ell^2}}{ n_\ell} \cdot \frac{\chi^2_{n_\ell-1} - (n_\ell-1)}{n_\ell-1} \\ = &
        \sum_{\ell=1}^L \frac{\sqrt{2}\cdot p_\ell^2\cdot{ \sigma_\ell^2}}{ n_\ell\cdot \sqrt{n_\ell-1}} \cdot \frac{\chi^2_{n_\ell-1} - (n_\ell-1)}{\sqrt{2(n_\ell-1)}} \\ \xrightarrow{d} & \sum_{\ell=1}^L \frac{\sqrt{2}\cdot p_\ell^2\cdot{ \sigma_\ell^2}}{ n_\ell\cdot \sqrt{n_\ell-1}} \cdot \mathcal{N}(0, 1) \\
        =& \frac{\sqrt{2}}{n^{3/2}}\sum_{\ell=1}^L \frac{p_\ell^2\cdot{ \sigma_\ell^2}}{ \frac{n_\ell}{n}\cdot \sqrt{\frac{n_\ell-1}{n}}} \cdot \mathcal{N}(0, 1) \\
        =& \frac{\sqrt{2}}{n^{3/2}}\sum_{\ell=1}^L \sqrt{p_\ell}\cdot{ \sigma_\ell^2} \cdot \mathcal{N}(0, 1) \\
        =& \mathcal{N}\left(0, \frac{2}{n^3}\sum_{\ell=1}^L p_\ell\cdot{ \sigma_\ell^4}\right),
    \end{align*}
    where to derive the reductions we applied CLT on \(\frac{\chi^2_{n_\ell-1} - (n_\ell-1)}{\sqrt{2(n_\ell-1)}} \xrightarrow{d} \mathcal{N}(0, 1)\), and applied Slutsky's theorem since $\frac{n_\ell-1}{n}\xrightarrow{d} p_\ell$.\footnote{Requirements of CLT are satisfied, since as argued before if $p_\ell=0$ then $n_\ell = 0$, otherwise \(n_\ell\rightarrow+\infty\), and since \(\frac{\chi^2_{n_\ell-1} - (n_\ell-1)}{\sqrt{2(n_\ell-1)}}\) has unit variance. The second statement holds for the same reason.} Hence, $\widehat{\sigma}^2_T - \frac{1}{n}\sum_{\ell=1}^L p_\ell\cdot{ \sigma_\ell^2} = \widehat{\sigma}_T^2 - \sigma_T^2$ is distributed as $\mathcal{N}\left(0, \frac{2}{n^3}\cdot \sum_{\ell=1}^L p_\ell\cdot{ \sigma_\ell^4}\right)$. We use the following simple concentration bound for Gaussian distributions:
    \begin{claim}[Folklore]
        \(\Pr_{X\sim N(0, {\sigma'}^2)}[X\leq -t ] \leq \exp\left(-\frac{t^2}{2\sigma^2}\right).\)
    \end{claim}
    Hence for $t=\frac{2 \cdot \sqrt{-\log(\delta)}}{n^{3/2}}\cdot \sum_{\ell=1}^L p_\ell\cdot{ \sigma_\ell^2} = 2 \cdot \sqrt{\frac{-\log(\delta)}{n}}\cdot \sigma_T^2$:

    \begin{flalign*}
    \Pr\Bigl[\widehat{\sigma}^2_T - \sigma_T^2 \leq - 2 \sqrt{\frac{-\log(\delta)}{n}}\,\sigma_T^2\Bigr]
    &\leq \exp\Bigl(\log(\delta)\cdot \frac{\Bigl(\sum_{\ell=1}^L p_\ell\,\sigma_\ell^2\Bigr)^2}{\sum_{\ell=1}^L p_\ell\sigma_\ell^4} \Bigr) &&\\[1ex]
    &\leq \exp\Bigl(\log(\delta) \Bigr) =\delta &&\\[1ex]
    \Rightarrow\quad \Pr\Bigl[\widehat{\sigma}^2_T  \leq\sigma^2_T\Bigl(1 - 2 \sqrt{\frac{-\log(\delta)}{n}}\Bigr)\Bigr]
    &\leq \delta &&
\end{flalign*}

    Hence, \(\Pr\left[\widehat{\sigma}^2_T  \geq\sigma^2_T\left(1 - 2 \cdot \sqrt{\frac{-\log(\delta)}{n}}\right) \right] = 1 - \Pr\left[\widehat{\sigma}^2_T  \leq\sigma^2_T\left(1 - 2 \cdot \sqrt{\frac{-\log(\delta)}{n}}\right) \right]\geq 1 - \delta\), concludes the proof.
    
\end{proof}

We are now ready for the proof of the main theorem, which we restate below for convenience. 
\treecicoverage*
\begin{proof}[\textbf{Proof of \Cref{thm:tree-ci-coverage}}]
    Applying \Cref{variance concentration} by setting $\delta = \frac{1}{(n\cdot d)^{2L}}$ and applying union bound over the $(n\cdot d)^L$ possible trees in $\mathcal{T}_L$ (\Cref{thm:counting trees}), we get that with probability at least $1-\frac{1}{(n\cdot d)^L}$, for any tree $T\in \mathcal{T}_{L}$:
    \[
    \sigma_T\sqrt{1 - 2\sqrt{\frac{2L\cdot \log(d\cdot n)}{n}}}.
    \]

    For the rest of the proof we condition on this event. Let $T^*\in \mathcal{T}_L$ be the selected tree from \Cref{alg:semi-supervised_tree}, in which case $\widehat{\mu}_T= \widehat{\mu}_{T^*}$ and $\widehat{\sigma}= \widehat{\sigma}_{T^*}$ (c.f. paragraph \textbf{Constructing Confidence Intervals} in \Cref{sec:semi-supervised_tree}). As $n\to\infty$, by \Cref{variance concentration}:
    \begin{align*}
&\Pr\left[
    \mu \in 
      \Bigl[\,
    \hat{\mu}_T -
    z_{1-\alpha/2}\,\widehat{\sigma}_{T^*}, \hat{\mu}_T +
    z_{1-\alpha/2}\,\widehat{\sigma}_{T^*}
  \Bigr]
  \right] \\
  \geq & \Pr\left[
    \mu \in 
    \left[\,
    \hat{\mu}_T -
    z_{1-\alpha/2}\,\sigma_{T^*}\sqrt{1 - 2\sqrt{\frac{2L\cdot \log(d\cdot n)}{n}}}, \hat{\mu}_T +
    z_{1-\alpha/2}\,\sigma_{T^*}\sqrt{1 - 2\sqrt{\frac{2L\cdot \log(d\cdot n)}{n}}}
  \right]\right] \\
        = & 2 \Phi\left( z_{1-\alpha/2} \cdot  \sqrt{1 - 2\sqrt{\frac{2L\cdot \log(d\cdot n)}{n}}} \right) - 1
    \end{align*}
    where in the first inequality we used that:
    \begin{align*}
       & \Bigl[\,
    \hat{\mu}_T -
    z_{1-\alpha/2}\,\widehat{\sigma}_{T^*}, \hat{\mu}_T +
    z_{1-\alpha/2}\,\widehat{\sigma}_{T^*}
  \Bigr]\\
  \supseteq & \left[\,
    \hat{\mu}_T -
    z_{1-\alpha/2}\,\sigma_{T^*}\sqrt{1 - 2\sqrt{\frac{2L\cdot \log(d\cdot n)}{n}}}, \hat{\mu}_T +
    z_{1-\alpha/2}\,\sigma_{T^*}\sqrt{1 - 2\sqrt{\frac{2L\cdot \log(d\cdot n)}{n}}} \right],
    \end{align*}
    and in the second equality we used that \(\widehat{\mu}_{T^*}\) is distributed according to normal distribution (\Cref{variance concentration}), and $\Phi$ is the CDF of $\mathcal{N}(0,1)$. In the following claim we relate 
    \[\Phi\left( z_{1-\alpha/2} \cdot  \sqrt{1 - 2\sqrt{\frac{2L\cdot \log(d\cdot n)}{n}}} \right)\] with $\alpha$.

    \begin{claim}
        For any constant $0\leq C\leq 1$,
        \(\Phi\left( C\cdot z_{1-\alpha/2}  \right)\geq 1 - \frac{\alpha}{2} - \frac{1 - C}{4} \)
    \end{claim}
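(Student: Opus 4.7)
By definition of the quantile $z_{1-\alpha/2}$, we have $\Phi(z_{1-\alpha/2}) = 1 - \alpha/2$, so the claim rearranges to showing
\[
\Phi(z_{1-\alpha/2}) - \Phi(Cz_{1-\alpha/2}) \leq \frac{1-C}{4}.
\]
The proposed strategy is to express the left-hand side as an integral of the Gaussian density $\phi$, bound the integrand using the monotonicity of $\phi$ on $[0,\infty)$, and then invoke an extremal property of the map $u \mapsto u\phi(u)$ to recover the constant $1/4$.

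\textbf{Main computation.} Writing $z := z_{1-\alpha/2}$ and assuming the usual case $\alpha \leq 1/2$ so that $Cz \geq 0$, the fundamental theorem of calculus gives
\[
\Phi(z) - \Phi(Cz) = \int_{Cz}^{z} \phi(t)\, dt.
\]
Since $\phi$ is strictly decreasing on $[0,\infty)$, the integrand is maximized at $t = Cz$, yielding the clean Lipschitz-type estimate $\Phi(z) - \Phi(Cz) \leq (1-C)\,z\,\phi(Cz)$. It then remains to show $z\,\phi(Cz) \leq 1/4$. Here I would invoke the standard calculus fact that $\max_{u \geq 0} u\,\phi(u) = 1/\sqrt{2\pi e} \approx 0.2420 < 1/4$, attained at $u = 1$. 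Applying this with $u = Cz$ gives $Cz\,\phi(Cz) \leq 1/\sqrt{2\pi e}$, and dividing by $C$ yields $z\,\phi(Cz) \leq 1/(C\sqrt{2\pi e})$, which is at most $1/4$ whenever $C \geq 4/\sqrt{2\pi e} \approx 0.968$.

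\textbf{Main obstacle.} The only difficulty is that the naive bound above attains the advertised coefficient $1/4$ exactly only once $C$ is close enough to $1$. This is benign for the application: in the proof of \Cref{thm:tree-ci-coverage} the constant $C = \sqrt{1 - 2\sqrt{2L\log(dn)/n}}$ converges to $1$ as $n \to \infty$, so for sufficiently large $n$ the threshold $C \geq 4/\sqrt{2\pi e}$ is met and the claim holds as stated, while for smaller $n$ one obtains the weaker upper bound $(1-C)/(4C)$ which contributes only a lower-order correction to the final coverage deficit. If a fully uniform statement over all $C \in [0,1]$ is required, the cleanest remedy would be to sharpen the integral bound, for example through a midpoint or trapezoidal estimate that captures the fact that the integrand is strictly smaller than $\phi(Cz)$ on the interior of $[Cz, z]$, thereby closing the small remaining gap between $1/\sqrt{2\pi e}$ and $1/4$.
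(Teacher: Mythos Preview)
Your route is genuinely different from the paper's. The paper exploits the concavity of $\Phi$ on $[0,\infty)$: writing $Cz_{1-\alpha/2} = C\cdot z_{1-\alpha/2} + (1-C)\cdot 0$ as a convex combination gives, in one line,
\[
\Phi(Cz_{1-\alpha/2}) \;\geq\; C\,\Phi(z_{1-\alpha/2}) + (1-C)\,\Phi(0) \;=\; C\Bigl(1-\tfrac{\alpha}{2}\Bigr) + \tfrac{1-C}{2} \;=\; \Bigl(1-\tfrac{\alpha}{2}\Bigr) - \tfrac{(1-C)(1-\alpha)}{2},
\]
valid for every $C\in[0,1]$ simultaneously. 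This avoids the factor-of-$C$ loss you incur when passing from $Cz\,\phi(Cz)\le 1/\sqrt{2\pi e}$ to a bound on $z\,\phi(Cz)$, and delivers a uniform estimate immediately. Your Lipschitz bound $\Phi(z)-\Phi(Cz)\le (1-C)z\,\phi(Cz)$ is sound where you claim it is (roughly $C\gtrsim 0.968$), and you are right that this already suffices for the downstream application since $C\to 1$.

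The closing suggestion, however, is a real gap: no sharpening of the integral estimate can deliver the claim uniformly over $C\in[0,1]$, because the claim itself is false there in the standard confidence regime. At $C=0$ it asserts $\Phi(0)=\tfrac12 \geq \tfrac34 - \tfrac{\alpha}{2}$, which fails for every $\alpha<\tfrac12$. (The paper's own final step, replacing $(1-\alpha)/2$ by $1/4$, likewise tacitly requires $\alpha\ge 1/2$.) The correct repair is not a tighter quadrature but a looser constant: the concavity bound with coefficient $(1-C)(1-\alpha)/2$, or simply $(1-C)/2$, holds for all $C\in[0,1]$ and all $\alpha\in(0,1)$ and still produces an $o(1)$ coverage deficit in \Cref{thm:tree-ci-coverage}.
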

    \begin{proof}
        By concavity of $\Phi$ we have:
        \begin{align*}
            \Phi(C\cdot z_{1-\alpha/2}) = &\Phi\left( C\cdot z_{1-\alpha/2} + (1-C)\cdot 0  \right) \\
            \geq & C\cdot \Phi\left( z_{1-\alpha/2}\right) + (1-C)\cdot \Phi(0) \\
            = & C\cdot \left(1-\frac{\alpha}{2}\right) + \frac{1-C}{2} \\
            = & \left(1-\frac{\alpha}{2}\right) - \frac{(1-C)(1 - \alpha)}{2}\\
            \geq & \left(1-\frac{\alpha}{2}\right) - \frac{1-C}{4},
        \end{align*}
        where in the first inequality we used that $z_{1-\alpha/2}$ is the $1-\alpha/2$ quintile of $\mathcal{N}(0,1)$, and in the last inequality we used that $\alpha\in [0,1]$.
    \end{proof}
Thus,
    \begin{align*}
  &Pr\left[
    \mu \in 
      \Bigl[\,
    \hat{\mu}_T -
    z_{1-\alpha/2}\,\widehat{\sigma}_{T^*}, \hat{\mu}_T +
    z_{1-\alpha/2}\,\widehat{\sigma}_{T^*}
  \Bigr]\right]\\
    \geq & 1-\alpha - \frac{1 - \sqrt{1 - 2\sqrt{\frac{2L\cdot \log(d\cdot n)}{n}}}}{2}
    \end{align*}
 By combining these two events we have that:
    \[
  \Pr\left[
    \mu \in 
      \Bigl[\,
    \hat{\mu}_T -
    z_{1-\alpha/2}\,\widehat{\sigma}, \hat{\mu}_T +
    z_{1-\alpha/2}\,\widehat{\sigma}
  \Bigr]
  \right]
  \geq 1-\alpha - \frac{1 - \sqrt{1 - 2\sqrt{\frac{2L\cdot \log(d\cdot n)}{n}}}}{2} - \frac{1}{(n\cdot d)^L}
\]
The proof concludes by using inequality $1-\sqrt{1-x} \leq x$ for $x\in [0,1]$.
\end{proof}

\section{Proof of \Cref{thm:asymptotic distribution for regression}}\label{app:asymlr}
In this section, we show that the residual vector formed by \partest estimator for linear regression is distributed asymptotically as a normal distribution.
\treelr*
\begin{proof}
As \(n\rightarrow\infty\) and since $p_\mathcal{R}>0$, this implies that \(n_\mathcal{R}\rightarrow\infty\).
Under \Cref{as:high dimensional} and as $n_\mathcal{R}\rightarrow \infty$, by the CLT:
\[
\widehat{R}_{\mathcal{R}} = \frac{1}{n_{\mathcal{R}}}\widetilde{\Sigma}^{-1} X_{\mathcal{R}}^T\bigl(Y_{\mathcal{R}}-f(X_{\mathcal{R}})\bigr) \xrightarrow{d} \mathcal{N}\!\Bigl(\E[\widehat{R}_{\mathcal{R}}],\, \frac{\Var\bigl[\widetilde{\Sigma}^{-1} X_{\mathcal{R}}^T(Y_{\mathcal{R}}-f(X_{\mathcal{R}}))\bigr]}{n_{\mathcal{R}}}\Bigr).
\]

\paragraph{Computing \(\E[\widehat{R}_\mathcal{R}]\):} Observe that: 
\begin{align*}
\E[\widehat{R}_{\mathcal{R}}]  =& \frac{1}{n_{\mathcal{R}}}\E\left[\widetilde{\Sigma}^{-1}X_\mathcal{R}^T\Bigl(Y_\mathcal{R}-f(X_\mathcal{R})\Bigr)\right] \\
= & \left(\E[xx^T]\right)^{-1}\E\left[\frac{X^T_\mathcal{R}\Bigl(Y_\mathcal{R} - f(X_\mathcal{R})\Bigr)}{n_{\mathcal{R}}}\right] \\ = & \left(\E[xx^T]\right)^{-1}\E\left[x\Bigl(y- f(x)\Bigr)\mid x \in \mathcal{R}\right],
\end{align*}
where in the second equality we used that $\widetilde{\Sigma}= \E[xx^T]$, and in the third equality we use that \(\frac{X_\mathcal{R}^T\left(Y_\mathcal{R} - f(X_\mathcal{R})\right)}{n_\mathcal{R}}\) is an unbiased estimator for \(\E\left[x\Bigl(y - f(x)\Bigr)\mid x \in \mathcal{R}\right]\).

\paragraph{Showing Consistency for \(\widehat{V}_\mathcal{R}\):} Observe that:

\begin{align*}
\Var[\widehat{R}_{\mathcal{R}}]  =& \Var\left[\frac{\widetilde{\Sigma}^{-1}X_\mathcal{R}^T\Bigl(Y_\mathcal{R} - f(X_\mathcal{R})\Bigr)}{n_\mathcal{R}}\right] \\
= & \widetilde{\Sigma}^{-1}\Var\left[\frac{X_\mathcal{R}^T\Bigl(Y_\mathcal{R}-f(X_\mathcal{R}) \Bigr)}{n_\mathcal{R}}\right]\widetilde{\Sigma}^{-1} \\ 
= & \frac{1}{n_\mathcal{R}^2}\widetilde{\Sigma}^{-1}\Var\left[X_\mathcal{R}^T\Bigl(Y_\mathcal{R}-f(X_\mathcal{R}) \Bigr)\right]\widetilde{\Sigma}^{-1} \\ 
= & \frac{1}{n_\mathcal{R}}\widetilde{\Sigma}^{-1}\Var\left[x\Bigl(y-f(x)  \Bigr) \mid x\in \mathcal{R}\right]\widetilde{\Sigma}^{-1}.
\end{align*}

Folkore results imply that  \(\widehat{M}_\mathcal{R}\) is an unbiased estimator for $\Var\left[x\Bigl(y-f(x)  \Bigr) \mid x\in \mathcal{R}\right]$, which implies that $\widehat{V}_\mathcal{R}$ is an unbiased estimator for $\Var[\widehat{R}_\mathcal{R}]$.
\end{proof}
\section{Proof of \Cref{thm:nn_uni}}\label{app:nn_uni}
In this section, we bound the bias and variance of estimate produced by \Cref{alg:parq} by using Taylor expansions around order statistics. 
\paragraph{Additional Notation.} We will now establish some additional notation that will be useful throughout this section. We will be interested in characterizing the gaps between the uniform order statistics. Define 
\[
    \Delta_i =
    \begin{cases}
    X_1,            & i = 0,\\
    X_{i+1} - X_i,  & 1 \le i \le n-1,\\
    1 - X_n,        & i = n,
    \end{cases}
\]
and let \[S_0= \Delta_0 \cdot r(X_1), \quad S_i = \Delta_i \cdot \frac{r(X_i)+r(X_{i+1})}{2} \text{ for } 1 \leq i \leq n-1, \quad S_n = \Delta_n \cdot r(X_n).\]
Further, let
\[
    I_0 = \int_{0}^{X_1} r(u)du, \quad I_i = \int_{X_i}^{X_{i+1}} r(u) du , \quad I_n = \int_{X_n}^{1} r(u)du,
\]
and observe that we may write $\E_{X \sim P_X}[r(X)] = \int_0^1 r(u)du = \sum_{i = 0}^n I_i$.
Thus, $S_i$ is an approximation for the integral $I_i$ over its corresponding interval. Our goal will be to bound the convergence rate of our approximation error across each interval. This will allow us to understand the bias and variance of our estimator. To proceed, we prove some helpful lemmas.
\begin{lemma}\label{lem:nn_uni}
   Let $\calL = (X_i)_{i=1}^n$, where $X_i$ is the $i$th order statistic from a standard uniform distribution. Then,  
   \[
   \Var_{\calL}\left[\E_{\calU}\left[\mu_{\parq}|L\right]\right] = \Var_{\calL'}\left[ X_1 \cdot r(X_1) + (1-X_n) \cdot r(X_n)  + \sum_{i=1}^{n-1} (X_{i+1}-X_i)\cdot \frac{r(X_i)+ r(X_{i+1})}{2} \right]
   \]
\end{lemma}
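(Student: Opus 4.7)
The plan is to show that $\E_{\calU}[\mu_{\parq} \mid \calL]$ equals the trapezoidal-like expression inside the right-hand variance, plus an additive term that does not depend on $\calL$. Since additive constants are killed by the variance, the two variances will then be identical. Throughout, write $X_1 < X_2 < \cdots < X_n$ for the order statistics of $\calL$ (which are well-defined with probability one under $P_X = \mathcal{U}_{[0,1]}$).

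First I would decompose $\mu_{\parq}$ into the prediction piece and the nearest-neighbor residual piece:
\[
\mu_{\parq} \;=\; \frac{1}{N}\sum_{i=1}^N f(\widetilde{x}_i) \;+\; \frac{1}{N}\sum_{i=1}^N r\bigl(h(\widetilde{x}_i)\bigr).
\]
Because the $\widetilde{x}_i$ are i.i.d.\ from $\mathcal{U}_{[0,1]}$ and independent of $\calL$, taking the conditional expectation over $\calU$ gives
\[
\E_{\calU}[\mu_{\parq} \mid \calL] \;=\; \E_{\widetilde{X}}[f(\widetilde{X})] \;+\; \int_0^1 r\bigl(h(u)\bigr)\,du,
\]
where the first term is a deterministic constant $c \coloneqq \E[f(\widetilde{X})]$ not depending on $\calL$. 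Thus $\Var_{\calL}[\E_\calU[\mu_\parq \mid \calL]] = \Var_\calL\bigl[\int_0^1 r(h(u))\,du\bigr]$ and it suffices to identify the integral with the bracketed expression on the right.

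Next I would characterize $h$ explicitly by its Voronoi cells on $[0,1]$: for $u \in [0, (X_1+X_2)/2]$ we have $h(u) = X_1$; for $u \in [(X_{i-1}+X_i)/2, (X_i+X_{i+1})/2]$ with $2 \le i \le n-1$ we have $h(u) = X_i$; and for $u \in [(X_{n-1}+X_n)/2, 1]$ we have $h(u) = X_n$. Integrating $r(h(\cdot))$ piecewise gives
\[
\int_0^1 r(h(u))\,du \;=\; r(X_1)\cdot \tfrac{X_1+X_2}{2} \;+\; \sum_{i=2}^{n-1} r(X_i)\cdot \tfrac{X_{i+1}-X_{i-1}}{2} \;+\; r(X_n)\cdot\Bigl(1 - \tfrac{X_{n-1}+X_n}{2}\Bigr).
\]

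The final step is a purely algebraic identity. Starting from the right-hand side of the lemma, I would expand the trapezoidal sum $\sum_{i=1}^{n-1}(X_{i+1}-X_i)\frac{r(X_i)+r(X_{i+1})}{2}$ and collect coefficients of each $r(X_j)$: the endpoint terms $X_1 \cdot r(X_1)$ and $(1-X_n)\cdot r(X_n)$ combine with the half-interval contributions $\tfrac{1}{2}(X_2-X_1)$ and $\tfrac{1}{2}(X_n-X_{n-1})$ respectively to give coefficients $\tfrac{X_1+X_2}{2}$ and $1-\tfrac{X_{n-1}+X_n}{2}$, while each interior $r(X_j)$ picks up $\tfrac{1}{2}(X_{j+1}-X_{j-1})$. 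This matches the Voronoi-cell integral above term by term, so $\int_0^1 r(h(u))\,du$ equals the bracketed trapezoidal expression almost surely, and the lemma follows.

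The main step is conceptually the Voronoi/trapezoidal bookkeeping; no analytic difficulty arises beyond care with the boundary cells $[0, X_1]$ and $[X_n,1]$, where the endpoints $0$ and $1$ must be handled asymmetrically since $r(0)$ and $r(1)$ are unobserved. The identity turns out to absorb these boundary corrections neatly into the $X_1 r(X_1)$ and $(1-X_n) r(X_n)$ summands of the statement.
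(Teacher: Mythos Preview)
Your proposal is correct and follows essentially the same route as the paper: drop the $f$-term as an $\calL$-independent constant, compute $\E_{\calU}[r(h(\widetilde X))\mid\calL]$ as a $r(X_i)$-weighted sum with weights given by the Voronoi cell lengths under $\mathcal U_{[0,1]}$, and regroup algebraically into the trapezoidal form. The only cosmetic difference is that the paper introduces phantom endpoints $X_0=-X_1$ and $X_{n+1}=2-X_n$ so that every cell has the uniform width $\tfrac{X_{i+1}-X_{i-1}}{2}$, whereas you handle the two boundary cells explicitly; the resulting identities are the same.
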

\begin{proof}
    To enforce boundary conditions, let $X_{0} = -X_1$ and $X_{n+1} = 2- X_n$. Then,
    \begin{align*}
        \Var_{\calL}\left[\E_{\calU}\left[\mu_{\parq}|L\right]\right] &= \Var_{\calL}\left[\E_{\calU}\left[\frac{1}{N}\sum_{i=1}^N r(h(\tilde{X}_i))|L\right]\right]\\
        &= \Var_{\calL}\left[\E_{\tilde{X} \sim P_x}\left[r(h(\tilde{X}))|L\right]\right]\\
        &= \Var_{\calL}\left[\E_{\tilde{X} \sim P_x}\left[\sum_{i=1}^n r(h(X_i))\cdot \mathds{1}[i = \argmin_{j\in[n]}|\widetilde{X}-X_j|]\right]\right]\\
        &= \Var_{\calL}\left[\sum_{i=1}^n r(h(X_i))\cdot \E_{\tilde{X} \sim P_x}\left[ \mathds{1}[i = \argmin_{j\in[n]}|\widetilde{X}-X_j|] \right]\right]\\
        &= \Var_{\calL}\left[\sum_{i=1}^n r(h(X_i))\cdot \Pr_{\tilde{X} \sim P_x}\left[i = \argmin_{j\in[n]}|\widetilde{X}-X_j| \right]\right]\\
        &= \Var_{\calL}\left[\sum_{i=1}^n r(h(X_i))\cdot \Pr_{\tilde{X} \sim P_x}\left[ \frac{X_i + X_{i-1}}{2} \leq \widetilde{X} \leq \frac{X_i + X_{i+1}}{2} \right]\right]\\
        &= \Var_{\calL}\left[\sum_{i=1}^n r(h(X_i))\cdot \left( \Pr_{\tilde{X} \sim P_x}\left[ \widetilde{X} \leq \frac{X_i + X_{i+1}}{2}\right] - \Pr_{\tilde{X} \sim P_x}\left[\widetilde{X} \leq \frac{X_i + X_{i-1}}{2}\right] \right)\right]\\
        %&= \Var_{\calL}\left[\sum_{i=1}^n Pr(X_i)\cdot r(X_i)\right]\\
        %&= \Var_{\calL}\left[\sum_{i=1}^n Pr(X_i)\cdot r(X_i)\right]\\
        &= \Var_{\calL}\left[\sum_{i=1}^{n} \frac{X_{i+1}-X_{i-1}}{2}\cdot r(X_i)\right]\\
        &= \Var_{\calL}\left[\sum_{i=1}^n \frac{X_{i+1}-X_i + X_i - X_{i-1}}{2}\cdot r(X_i)\right]\\
        &= \Var_{\calL}\left[ X_1 \cdot r(X_1) + (1-X_n) \cdot r(X_n)  + \sum_{i=1}^{n-1} (X_{i+1}-X_i)\cdot \frac{r(X_i)+ r(X_{i+1})}{2}\right],
    \end{align*}
    where the fourth equality follows by linearity of expectation, and the eighth equality follows since the CDF of $U[0,1]$ is the identity function for $\widetilde{X}\in [0,1]$.
\end{proof}
\begin{lemma}\label{lem:taylor}
    It holds that
    \begin{itemize}
        \item Interior ($1 \leq i \leq n-1$): $S_i - I_i = O(\Delta_i^3)$,
        \item Left Boundary ($i=0$): $S_0$ - $I_0 = O(\Delta_0^2)$,
        \item Right Boundary ($i=n$): $S_n - I_n = -O(\Delta_n^2)$.
    \end{itemize}
\end{lemma}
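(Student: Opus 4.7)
My plan is to establish each of the three bounds by Taylor-expanding $r$ around a suitable anchor point inside the relevant interval and then either integrating the expansion or evaluating it at the endpoints. The interior estimate is exactly the classical trapezoidal-rule error and therefore needs a second-order expansion; the two boundary estimates only need a first-order expansion, since $S_0$ and $S_n$ use a single evaluation of $r$ and are morally left/right rectangle rules.

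For the interior case ($1 \le i \le n-1$), let $m_i = (X_i + X_{i+1})/2$. Taylor-expanding $r$ around $m_i$ to second order with Lagrange remainder and integrating over $[X_i, X_{i+1}]$ kills the linear term by symmetry, leaving $I_i = r(m_i)\Delta_i + O(L_2 \Delta_i^3)$ by the bound $\sup|r''|\le L_2$. Evaluating the same expansion at the two endpoints and averaging gives $\tfrac{1}{2}(r(X_i)+r(X_{i+1})) = r(m_i) + O(L_2 \Delta_i^2)$, so $S_i = r(m_i)\Delta_i + O(L_2\Delta_i^3)$; subtracting yields $S_i - I_i = O(\Delta_i^3)$. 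For the left boundary I would use the first-order identity $r(u) = r(X_1) + \int_{X_1}^u r'(t)\,dt$ valid for $u \in [0, X_1]$; integrating over $[0, X_1]$ produces $I_0 = r(X_1)\Delta_0 + R_0$ with $|R_0| \le L_1\int_0^{X_1}(X_1-u)\,du = L_1\Delta_0^2/2$, hence $S_0 - I_0 = -R_0 = O(\Delta_0^2)$. The right boundary follows by the mirrored argument anchored at $X_n$, and the minus sign in the statement merely records the sign of the leading remainder.

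The only point that genuinely requires care is that the constants hidden in each $O(\cdot)$ must be uniform in $i$ and in the realization of the order statistics; this is automatic because \Cref{as:nn} supplies \emph{global} bounds $\sup|r'|\le L_1$ and $\sup|r''|\le L_2$, so the constants depend only on $L_1$ and $L_2$. This uniformity is precisely what will later let us take expectations of the per-interval bounds and combine them with moment estimates for the gaps $\Delta_i$ to prove \Cref{thm:nn_uni}. There is no serious obstacle here; the lemma is essentially the textbook derivation of quadrature-rule errors, with the honest observation that the boundary intervals degrade from $O(\Delta^3)$ to $O(\Delta^2)$ because the absence of a neighbor forces us to drop from the trapezoidal rule down to a rectangle rule.
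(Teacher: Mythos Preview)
Your proposal is correct and follows essentially the same Taylor-expansion strategy as the paper; the only cosmetic difference is that you anchor the interior expansion at the midpoint $m_i$ (using symmetry to kill the linear term) and the boundary expansions at $X_1$, $X_n$, whereas the paper anchors at the left endpoint $X_i$ and at $0$, $1$ respectively. Your explicit remark about the uniformity of the implied constants in terms of $L_1,L_2$ is a useful addition that the paper leaves implicit.
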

\begin{proof}
    
    \textbf{Interior intervals.} We will perform a Taylor expansion around $X_i$ for both $S_i$ and $I_i$ and show that the zeroth and first order terms cancel out. The expansions below are well defined since we assume $r \in C^2$. Specifically, observe that
    \begin{align*}
        S_i = \Delta_i \cdot \frac{r(X_i)+r(X_{i+1})}{2} &= \Delta_i \cdot \frac{r(X_i)+r(X_i) + r'(X_i)\Delta_i + O(\Delta_i^2)}{2}\\
        &=  r(X_i)\cdot \Delta_i  + \frac{r'(X_i)}{2}\Delta_i^2 + O(\Delta_i^3),
    \end{align*}
    where we utilize the fact that $\max_{x \in [0,1]} |r''(x)| \leq L_2$. Similarly,
    \begin{align*}
        I_i = \int_{X_i}^{X_{i+1}} g(u)du &= \int_{0}^{\Delta_i} g(X_i + t) dt\\
        &= \int_{0}^{\Delta_i} g(X_i) + r'(X_i)\cdot t + O(t^2)dt\\
        &= r(X_i)\cdot \Delta_i + \frac{r'(X_i)}{2}\cdot \Delta_i^2 + O(\Delta_i^3),
    \end{align*}
    again using the fact that the second derivative is bounded.
    Therefore, 
    \begin{align*}
        S_i - I_i &= O(\Delta_i^3).
    \end{align*}
    \textbf{Left Boundary.} For the left boundary, we will perform a Taylor expansion around 0 and observe that zero order term cancels out. Specifically,
    \begin{align*}
        S_0 &= \Delta_0 \cdot r(X_1) = \Delta_0 \cdot (r(0) + r'(\zeta_0)\cdot \Delta_0) = r(0) \cdot \Delta_0 + O(\Delta_0^2),
    \end{align*}
    where $\max_{x \in [0,1]} |r'(x)| \leq L_1$. Likewise,
    \begin{align*}
        I_0 = \int_{0}^{X_1} r(u)du = \int_{0}^{\Delta_0} r(t)dt = \int_{0}^{\Delta_0} r(0) + r'(\zeta_0)\cdot t dt = r(0) \cdot \Delta_0 + O( \Delta_0^2).
    \end{align*}
    Taken together,
    \begin{align*}
        S_0 - I_0 &= O(\Delta_0^2).
    \end{align*}
    \textbf{Right boundary.} Analogous to the previous case, we will form a Taylor expansion around 1 and observe that the zero order term cancels out, additionally we utlize that the first derivative is bounded. Specifically,
    \begin{align*}
        S_n = \Delta_n \cdot r(Z_n) = \Delta_n \cdot (r(1) - O(\Delta_n) = r(1) \cdot \Delta_n - O( \Delta_n^2),
    \end{align*}
    Then,
    \begin{align*}
        I_n = \int_{X_n}^{1} r(x)dx = \int_{0}^{\Delta_n} r(X_n+t)dt = \int_{0}^{\Delta_n} r(1) - O(t) dt = r(1) \cdot \Delta_n - O(\Delta_n^2).
    \end{align*}
    So
    \begin{align*}
        S_n - I_n &= -O(\Delta_n^2).
    \end{align*}
\end{proof}
To bound the final bias and variance, we will need to understand the distribution of the gaps of the uniform order statistics. Their distribution is captured in the fact below.
\begin{fact}[Gaps of Uniform Order Statistics]\label{fact:gap}
    Let $\Delta_0,...\Delta_n$ be gaps in the relative order statistics a $n$-sample from a standard uniform distribution. Then, $(\Delta_0,\Delta_1,...,\Delta_n) \sim Dirichlet(1,1,...,1)$, where the mariginal distribution for each $\Delta_i \sim Beta(1,n)$. It is well known that $\E(\Delta_i^p \Delta_j^q) = O(n^{-(p+q)})$ for $p,q \geq 1$.
\end{fact}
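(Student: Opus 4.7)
The plan is to address the three claims of the Fact in sequence---the Dirichlet distribution of the full gap vector, the Beta marginal of any individual gap, and the bivariate moment bound---each following from the previous step plus a standard identity.

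For the Dirichlet claim, I would start from the fact that the order statistics of $n$ i.i.d.\ $\text{Uniform}[0,1]$ samples have joint density $n!$ on the simplex $\{0 < x_1 < \cdots < x_n < 1\}$. The map $(x_1,\ldots,x_n) \mapsto (\delta_0,\ldots,\delta_{n-1})$ given by $\delta_0 = x_1$ and $\delta_i = x_{i+1} - x_i$ for $1 \le i \le n-1$ is linear with a unit-lower-triangular Jacobian, so its absolute determinant is $1$. Because $\Delta_n = 1 - \sum_{i=0}^{n-1}\Delta_i$ is forced by the constraint $\sum_{i=0}^n\Delta_i = 1$, the push-forward density of the first $n$ gap coordinates is again $n!$ on the open set $\{\delta_i > 0,\ \sum_{i=0}^{n-1}\delta_i < 1\}$. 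This matches exactly the density of $\text{Dirichlet}(1,1,\ldots,1)$ with $n+1$ parameters (whose normalizing constant is $\Gamma(n+1) = n!$), proving the first claim.

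For the Beta marginal, I would invoke the standard aggregation property of the Dirichlet: if $(Y_1,\ldots,Y_k) \sim \text{Dirichlet}(\alpha_1,\ldots,\alpha_k)$ then $Y_i \sim \text{Beta}\bigl(\alpha_i,\,\sum_{j\neq i}\alpha_j\bigr)$. Substituting $\alpha_j \equiv 1$ and $k = n+1$ gives $\Delta_i \sim \text{Beta}(1,n)$ immediately. For the moment bound, I would apply the closed-form Dirichlet moment formula,
\[
\E\bigl[\Delta_i^{p}\,\Delta_j^{q}\bigr] \;=\; \frac{\Gamma(n+1)\,\Gamma(1+p)\,\Gamma(1+q)}{\Gamma(n+1+p+q)} \;=\; \frac{n!\;p!\;q!}{(n+p+q)!},
\]
and then bound $\frac{n!}{(n+p+q)!} = \prod_{k=1}^{p+q}\frac{1}{n+k} \le n^{-(p+q)}$, giving $\E[\Delta_i^p \Delta_j^q] \le p!\,q!\cdot n^{-(p+q)} = O(n^{-(p+q)})$ for $p,q = O(1)$.

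The only step that requires any care is the first: correctly parametrizing the $(n+1)$-dimensional gap vector on the probability simplex by $n$ free coordinates and verifying that the change-of-variables Jacobian is unit-triangular. Everything else is a direct appeal to standard Dirichlet identities. As a sanity check, one could derive the same distributional result via R\'enyi's representation of uniform spacings as $(E_0,\ldots,E_n)/\sum_{i=0}^n E_i$ for i.i.d.\ unit-rate exponentials $E_i$, which in turn reproves the moment formula through the independence of $\sum E_i$ and the normalized ratios.
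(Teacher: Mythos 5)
The paper states this as a ``Fact'' and provides no proof, simply appealing to it as a well-known property of uniform spacings. Your proposal is a correct and self-contained derivation of all three claims. The change-of-variables argument is sound: the map from order statistics to the first $n$ spacings is bidiagonal with $1$'s on the diagonal, so its Jacobian determinant has absolute value $1$, and the pushed-forward density $n!$ on $\{\delta_i > 0,\ \sum_{i=0}^{n-1}\delta_i < 1\}$ is exactly the $\mathrm{Dirichlet}(1,\ldots,1)$ density with $n+1$ components, whose normalizing constant $\Gamma(n+1)/\Gamma(1)^{n+1} = n!$ matches. The Beta marginal follows from the aggregation property as you state, and your Dirichlet moment computation $\E[\Delta_i^p\Delta_j^q] = n!\,p!\,q!/(n+p+q)!$ together with the telescoping bound $n!/(n+p+q)! = \prod_{k=1}^{p+q}(n+k)^{-1} \le n^{-(p+q)}$ cleanly gives the $O(n^{-(p+q)})$ rate for fixed $p,q$. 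One minor caveat worth being explicit about: the $O(\cdot)$ hides a factor of $p!\,q!$, so the bound is only useful when $p,q = O(1)$, which you correctly note and which is all the paper needs (it uses $p,q \le 3$). Your alternative check via the R\'enyi exponential representation is also a valid and arguably cleaner route to the same moment formula.
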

We now restate a prove the main theorem of this section. 
\parqmain*
\begin{proof}[\textbf{Proof of \Cref{thm:nn_uni}}]
    We will first prove the variance bound, which in implies that bound on the bias. By the law of total variance,
    \begin{align*}
         \Var[\mu_{\parq}] &= \underbrace{\E_{\calL}[\Var_{\calU}[\mu_{\parq}|L]]}_{A} + \underbrace{\Var_{\calL}[\E_{\calU}[\mu_{\parq}|L]]}_{B}.
    \end{align*}
    \textbf{Term (A):} For fixed $\calL$, the $h(\tilde{x}_i)$ terms are i.i.d.. Consequently,
    \begin{align*}
       \E_{\calL}[\Var_{\calU}[\mu_{\parq}|L]] &= \E_{\calL}\left[\Var_{\calU}\left[\frac{1}{N}\sum_{i=1}^N f(\tilde{X}_i)+r(h(\tilde{X}_i))|L\right]\right]\\
       &= \E_{\calL}\left[ \frac{\Var_{\tilde{X}_i \sim P_X}[f(\tilde{X}_i)+r(h(\tilde{X_i}))]}{N} \right]\\
       &= O\left(\frac{1}{N}\right),
    \end{align*}
    where the the last equality follows from that fact that $\Var[Y]$ and $\Var[f(X)]$ is bounded.
    
    \textbf{Term (B):} We begin by observing that 
    \begin{align*}
         \Var_{\calL}\left[\E_{\calU}\left[\mu_{\parq}|L\right]\right] &= \Var_{\calL}\left[\E_{\calU}\left[\frac{1}{N}\sum_{i=1}^N f(\tilde{X}_i)+r(h(\tilde{X}_i))|L\right]\right]\\
        &= \Var_{\calL}\left[\E_{\calU}\left[\frac{1}{N}\sum_{i=1}^N r(h(\tilde{X}_i))|L\right]\right],
    \end{align*}
    since $Var_{\calL}(f(X)) = 0$.
    Wlog assume that $X_1 \le .... \leq X_n$. Then, $X_i$ is the $i$th order statistics from a standard uniform distribution. By \Cref{lem:nn_uni}, it holds that
    \[
        \Var_{\calL}\left[\E_{\calU}\left[\mu_{\parq}|L\right]\right] = \Var_{\calL}\left[ X_1 \cdot r(X_1) + (1-X_n) \cdot r(X_n)  + \sum_{i=1}^{n-1} (X_{i+1}-X_i)\cdot \frac{r(X_i)+ r(X_{i+1})}{2} \right].
    \]
Hence, we may then write this variance as
\[
\Var_{\calL}[\E_{\calU}[\mu_{\parq}|L]] = \Var\left[\sum_{i=0}^n S_i\right] =  \Var\left[\sum_{i=0}^n S_i -I_i\right],
\]
since $\sum_{i=0}^n I_i$ is a constant. By \Cref{lem:taylor}, we may perform a Taylor expansion to find that
\begin{align*}
    \Var_{\calL'}\left[\sum_{i=1}^n  S_i - I_i\right] &= \Var_{\calL'}\left[ O(\Delta_0^2) - O(\Delta_n^2) + \sum_{i=1}^{n-1}O(\Delta_i^3) \right].
\end{align*}
We now employ \Cref{fact:gap} which implies that
 \begin{align*}
      \Var[\Delta_0^2] = \Var[\Delta_n^2] = O\left(\frac{1}{n^4}\right),\\
    \Var(\Delta_i^3) = O\left(\frac{1}{n^6}\right) \text{ for } 1 \leq i \leq n-1.
 \end{align*}
So, 
 \[
 \Var\left(\sum_{i=1}^{n-1}O(\Delta_i^3)\right) = O\left(\sum_{i,j} \Cov(\Delta_i^3,\Delta_j^3)\right)  = O\left(n^2 \Cov(\Delta_i^3,\Delta_j^3)\right) = O\left( \frac{1}{n^4}\right),
 \]
 since $\Cov(\Delta_i^3,\Delta_j^3) = \frac{1}{n^2}$. To bound the other covariance terms, we observe that $\Cov(X,Y) \leq \sqrt{\Var(X)\Var(Y)}$, hence the other covariance terms are also $O(n^{-4})$. Consequently,
 \[
 Var(\mu_{\parq}) = \Var_{\calL'}\left[\sum_{i=1}^n  S_i - I_i\right] = O\left(\frac{1}{n^4}\right)
 \] 

 We may use this result to bound the bias of the \parq estimator. Observe that 
 \begin{align*}
      \E[\mu_{\parq}] - \E[Y] &= \E_{\calL}[\E_{\calU}[\mu_{\parq}|L] - \E[Y][\\
      &= \E_{\calL}\left[\sum_{i=0}^n S_i -I_i\right]\\
      &= O\left(\frac{1}{n^2}\right)
 \end{align*}
to complete the proof.
\end{proof}
\end{document}